\documentclass[sigconf]{acmart}
\AtBeginDocument{%
  }

\copyrightyear{2026}
\acmYear{2026}
\setcopyright{cc}
\setcctype{by}
\acmConference[KDD '26]{Proceedings of the 32nd ACM SIGKDD Conference on Knowledge Discovery and Data Mining V.2}{August 09--13, 2026}{Jeju Island, Republic of Korea}
\acmBooktitle{Proceedings of the 32nd ACM SIGKDD Conference on Knowledge Discovery and Data Mining V.2 (KDD '26), August 09--13, 2026, Jeju Island, Republic of Korea}
\acmDOI{10.1145/3770855.3818112}
\acmISBN{979-8-4007-2259-2/2026/08}

\usepackage[utf8]{inputenc} 
\usepackage[T1]{fontenc}    
\usepackage{hyperref}       
\usepackage{url}            
\usepackage{booktabs}       
\usepackage{amsfonts}       
\usepackage{nicefrac}       
\usepackage{microtype}      
\usepackage{xcolor}         
\usepackage{bbm}            
\usepackage{amsmath, amsthm}
\newcommand{\indep}{\perp \!\!\! \perp}
\usepackage{mathtools}
\usepackage{algorithm}
\usepackage{algpseudocode}
\usepackage{subcaption}
\usepackage{caption}
\usepackage{enumitem}
\usepackage{graphicx}
\usepackage{tabularx}
\usepackage{makecell}     
\usepackage{adjustbox}    
\usepackage{makecell} 
\usepackage{tablefootnote}

\numberwithin{equation}{section}
\theoremstyle{plain}
\newtheorem{assumption}{Assumption}

\newtheorem{constraint}{Constraint}

\begin{document}

\title{A Practical Upper Bound on Selection Bias Effects in Medical Prediction Models}

\author{Kara Liu}
\affiliation{%
  \institution{Stanford University}
  \city{Stanford}
  \state{California}
  \country{USA}
}
\author{Maggie Wang}
\affiliation{%
  \institution{Stanford University}
  \city{Stanford}
  \state{California}
  \country{USA}
  }

\author{Russ B. Altman}
\affiliation{%
  \institution{Stanford University}
  \city{Stanford}
  \state{California}
  \country{USA}}
\renewcommand{\shortauthors}{Liu et al.}

\begin{abstract}
Selection bias is a common and often unavoidable aspect of real-world data that challenges the generalizability of machine learning models. When models trained on biased data are deployed in the broader target population, poor model generalization may lead to real harm, particularly in high-risk settings such as healthcare. This risk highlights the need for practitioners to reliably assess model generalizability prior to deployment. However, existing methods for predicting model performance rely on unrealistic access to the target distribution or knowledge of the selection mechanism causing bias. To address these limitations, we propose a novel upper bound on the worst-case model performance on the target population under the realistic setting where the selection mechanism and the target population data are only partially observed. We demonstrate the validity and practical utility of our method through experiments on fully synthetic data, semi-synthetic data derived from the All of Us Research Program, and real-world selection bias in MIMIC-IV. Our work offers a principled and practical tool to estimate the impact of selection bias in an otherwise intractable setting, thereby enabling practitioners to build safer and more generalizable models in healthcare and beyond. We release our code for public use at \url{https://github.com/kara-liu/selection_gap_est/}.

\end{abstract}
\begin{CCSXML}
<ccs2012>
 <concept>
  <concept_id>10010147.10010257</concept_id>
  <concept_desc>Computing methodologies~Machine learning</concept_desc>
  <concept_significance>500</concept_significance>
 </concept>
 <concept>
  <concept_id>10010147.10010257.10010293</concept_id>
  <concept_desc>Computing methodologies~Supervised learning</concept_desc>
  <concept_significance>300</concept_significance>
 </concept>
 <concept>
  <concept_id>10002950.10003714</concept_id>
  <concept_desc>Mathematics of computing~Probability and statistics</concept_desc>
  <concept_significance>300</concept_significance>
 </concept>
 <concept>
  <concept_id>10010147.10010257.10010281</concept_id>
  <concept_desc>Computing methodologies~Model validation and analysis</concept_desc>
  <concept_significance>300</concept_significance>
 </concept>
</ccs2012>
\end{CCSXML}

\ccsdesc[500]{Computing methodologies~Machine learning}
\ccsdesc[300]{Computing methodologies~Supervised learning}
\ccsdesc[300]{Mathematics of computing~Probability and statistics}
\ccsdesc[300]{Computing methodologies~Model validation and analysis}

\keywords{Selection bias, generalizability bounds, healthcare, model auditing}

\maketitle

\begin{figure*}[!t]
    \centering
    \includegraphics[width=\textwidth]{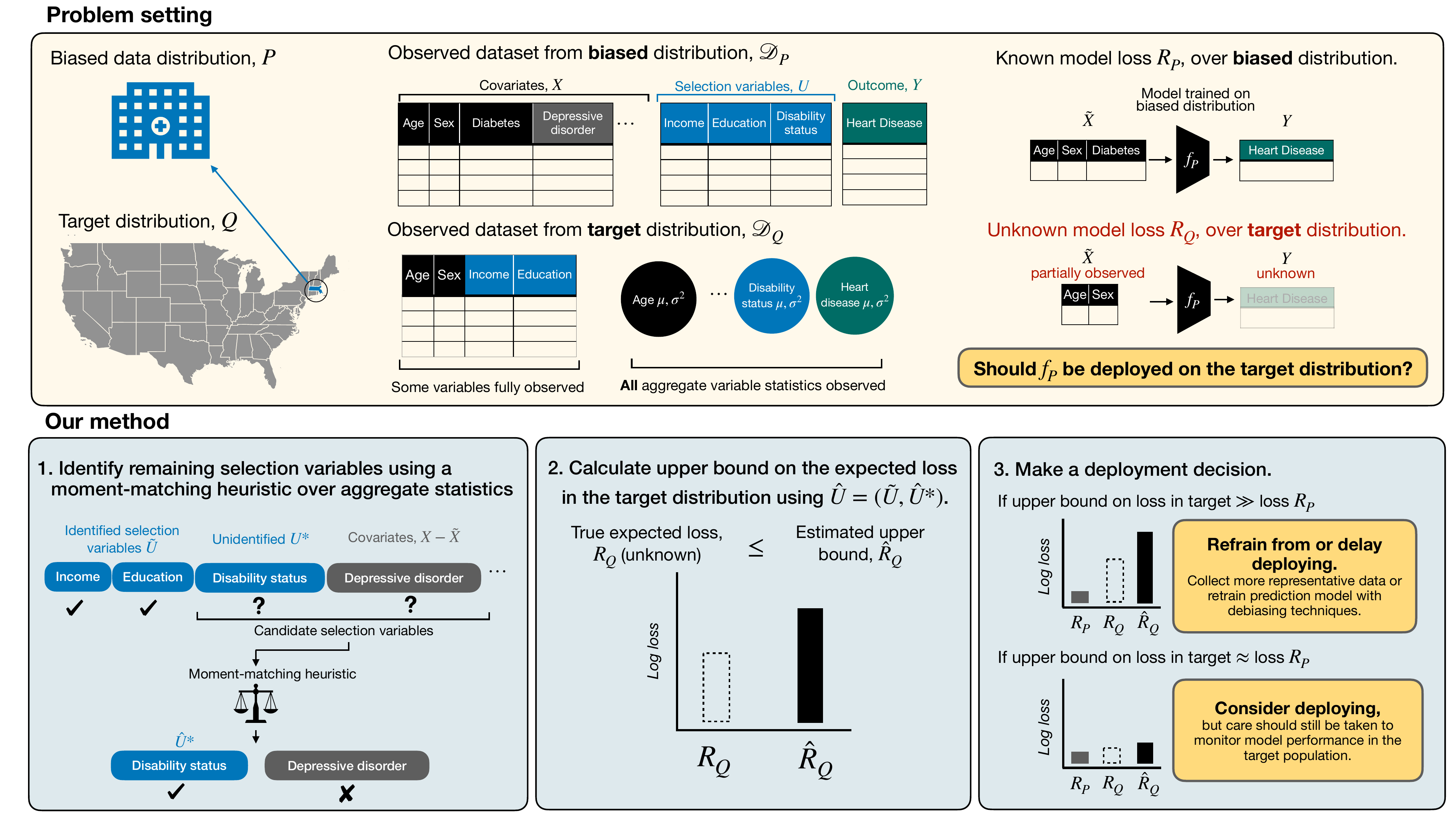}
    \caption{Pipeline illustrating how our method estimates a generalization bound under limited observation of the target-data, thereby enabling better informed decisions of model deployment.}
    \label{fig:myfig}
\end{figure*}

\section{Introduction}

As machine learning models are increasingly deployed in real-world settings, it is imperative to understand how their performance may be affected by selection bias. When models are trained on data that represents only a subset of the population, their failure to generalize to the broader target population could inflict real harm, often in ways that reify discrimination against underrepresented groups \cite{goetz2024generalization,ahmad2025bias,han2025addressing}. This risk is particularly acute in medical applications, where the data used to support high-stakes decisions are often heavily skewed by selection bias \citep{graham,bibbins,covid}. For instance, selection bias in biobank data, electronic health records, and randomized controlled trials has led to biased estimates in genome-wide association studies \citep{ukbbgwas}, analyses of COVID-19 risk factors \citep{covid}, the prediction of sepsis in hospitals \citep{sepsis}, and drug dosage recommendations that are suboptimal for non-Caucasian populations \citep{warfarin}.

To support safe deployment under selection bias, machine learning model developers must be able to assess how a model trained on biased data will perform on its intended target population, in a way that is both practical and grounded in the underlying selection mechanism. As a motivating example, the national deployment of the Epic Sepsis Model, which was trained on data from just three healthcare systems, was later criticized for poor generalization \cite{sepsis,lyons2023factors}. These issues might have been prevented had developers been able to foresee how the model would perform on the broader U.S. population.

Selection bias has been studied in other disciplines, notably in the context of causal effect estimation \citep{t1t2,pearl} or under a limited set of selection mechanisms in domain adaptation \cite{da1,da2}. However, existing methods that estimate model generalizability often rely on unrealistic assumptions. For instance, density ratio estimation requires access to the full target distribution \cite{da1,da2,da3}, while inverse probability of participation weighting \citep{ipw1,clf1,bonander2019participation,ipw4} assumes full observability of the features causal of selection. In practice, however, access to target data is often limited, as collecting fully representative samples -- for example, medical records from the entire U.S. population -- can be prohibitively expensive, logistically infeasible, and may conflict with privacy regulations. Moreover, model developers rarely observe the explicit underlying selection mechanism, given the complexity in delineating the causes of study participation or health care utilization. As a result, there are currently few practical and principled approaches for practitioners to audit models for selection bias.

\paragraph{Contributions}
In this work, we propose a tractable estimate of a prediction model's worst-case performance on an unobserved target population. Our paper offers three important contributions:

\begin{enumerate}
    \item We consider a more realistic setting that requires only partial observability of the selection mechanism and target distribution. Target data sources, such as national registries and census microdata, often provide full joint coverage of a few basic sociodemographic variables, along with marginal summary statistics such as first and second moments \cite{ipw4,ukbbgwas,elliott2017inference,giorgi2022correcting,ipw1}. Motivated by these real-world constraints, our method assumes: (i) only a subset of variables causing selection has been identified and fully observed and (ii) access to marginal summary statistics for all variables. The feasibility of (i) is further supported by prior work showing that sociodemographic factors such as age, sex, and income, which are often observable in target datasets, are also likely drivers of medical selection bias \cite{ukbbgwas,silva2015assessment,saphner2021clinical}.
    \item We propose a novel upper bound on a prediction model's expected performance on a target dataset. Our bound explicitly models the selection process without requiring full knowledge of the underlying mechanism. To the best of our knowledge, existing methods are incapable of producing such a bound under the realistic constraints outlined in (1).
    \item To render our bound tractable for practitioners auditing models prior to deployment, we propose two heuristics: first, an algorithm that identifies the remaining selection variables, and second, diagnostic techniques for assessing the assumptions underlying our bound.
\end{enumerate}

We evaluate our method in three data settings: (i) simulated selection bias in fully synthetic data, (ii) simulated selection bias in clinical data from the All of Us Research Program \citep{aou}, and (iii) real selection bias in MIMIC-IV \citep{johnson2024mimiciv}. Across these experiments, we show that our proposed bound is empirically tight and robust, even under realistic data observability constraints. Finally, we provide guidance and diagnostic tools to help practitioners apply and interpret our bound estimate.

\section{Related Works}

\subsection{Selection Bias in Causal Inference}

Selection bias has been broadly studied across disciplines including econometrics \citep{heckman, heckmanmethod, heckman2, econ}, causal inference \citep{pearl}, epidemiology \citep{hernan, t1t2, infante, smith}, statistics \citep{stats1, mansiki}, social science \citep{berk, social}, and clinical informatics \citep{cer,clinical}. In these fields, the focus has largely been on causal effect estimation where bias arises when generalizing estimates from a non-representative study sample to a target population. Many debiasing methods have been developed to address this issue, including g-formula adjustment \citep{pearl, lesko}, propensity score weighting \citep{lesko, ipw2000, whatif}, and Heckman correction \citep{heckmanmethod}. However, these methods are not directly applicable to our setting of assessing the generalizability of prediction models. Furthermore, these methods often rely on unrealistic assumptions, such as full observability of the causes of selection.

Another related line of work has estimated robustness to \textit{unobserved} selection or confounding via sensitivity analysis and partial identification. Motivated by Rosenbaum-style sensitivity analysis \citep{RosenbaumSensitivity}, \citet{ZhaoSensitivity} proposed bootstrapped confidence intervals based on a marginal sensitivity model that bounds the unknown selection probability. Other approaches avoid modeling the unobserved confounding mechanism by providing explicit sensitivity parameters to bound confounder-treatment and confounder-outcome associations \citep{ding2016sensitivity} or by using partial identification to propose worst-case bounds, as with Manski-style bounds \citep{manski2003partial,mansiki}. While these approaches make minimal assumptions on the underlying selection mechanism, they rely on ad hoc specification of the sensitivity parameters which can lead to overly loose bounds in practice.  

Selection bias may also be framed as a case of estimation under missingness-not-at-random of the covariates, outcome, or both. When data are only observed conditioned on a positive outcome, \citep{tchetgen2017general, little2019statistical,sun2018semiparametric} proposed methods to identify causal effects by leveraging instrumental variables. However, these works assume full observation of a valid instrument and covariates in the target distribution.

\subsection{Domain Adaptation}
In machine learning, selection bias has most often been studied through the lens of domain adaptation where a model trained on a source domain (or distribution) must generalize to a target domain \citep{da1,da3}. Domain adaptation can be approached by learning domain-invariant representations \citep{dann}; reweighting samples from the source domain \citep{da1,da2,da3} using weights obtained by probabilistic classification \citep{clf1}, moment matching \citep{kmm,kmmog}, or density matching \citep{kliep}; and through distributionally robust optimization \citep{dro1,dro2}, which learns a model that minimizes worst-case performance over ``uncertainty sets" of the observed data. However, as in the case of causal inference methods, domain adaptation methods typically rely on the variables causing the data shift to be observable from the target distribution, which is an unrealistic assumption in real-world settings. Additionally, these methods are used \textit{during} model training to improve out-of-distribution performance, rather than \textit{after} model training to audit model generalizability.

\subsection{Generalization Bounds}
Another relevant area of work involves estimating model generalization bounds under distribution shift. \citet{cortes2010learning} derive generalization bounds that depend on the stability of the associated importance weights, and \citet{ben1, ben2} propose an upper bound on the generalization error of a model trained on samples from distribution $P$ when applied to distribution $Q$, using $\mathcal{H}$-divergence to characterize the distance between $P$ and $Q$. Similar bounds have also been proposed using Wasserstein distance \citep{wasser} and $f$-divergence \citep{fdiver}. As with domain adaptation methods, these worst-case bounds are intractable without strong assumptions on what is observable.

\section{Method}
In Sections \ref{sec:methods:prob-form}, we introduce notation, key assumptions, and observability constraints. In Section \ref{sec:methods:bound}, we introduce our method for calculating the generalization upper bound. In Section \ref{methods:impl}, we outline how our method is practically implemented via a heuristic algorithm that nominates the remaining selection variables. We outline our method in Figure \ref{fig:myfig}.

\subsection{Problem Formulation}\label{sec:methods:prob-form}
We denote an upper-case letter $V$ as a single or set of random variables, the variable's set space as $\mathcal{V}$, and the observation in that space as the lower-case $v$. We use the notation $\mathbb{E}_{Q}[\cdot]$ to describe the expectation with respect to sampling from a distribution $Q$. We let $V_j^{(i)}$ denote the $j$-th variable and $i$-th unit of a multivariate sample. 

Similar to the notation of \cite{roshni,zadrozny2004learning}, let $Q$ be the target distribution over the variables $(X, U, Y, S)$, where $S \in \{0,1\}$ is a binary selection indicator, $U \in \mathcal{U} \subset \mathbb{R}^k$ are the variables causal of selection, $Y \in \mathcal{Y} \subset \mathbb{R}$ is the outcome, and $X \in \mathcal{X} \subset \mathbb{R}^d$ are all other covariates. We assume the biased\footnote{In other domains, $P$ is also called the source, sample, or training distribution.} distribution $P$ is generated by selective sampling from $Q$ such that the $(X, Y)$-marginal of $Q$ conditioned on $S = 1$ is exactly the distribution $P$. That is, denoting $p(V)$ as the probability distribution of any variable $V \in (X, Y)$ under $Q$, then $p(V \mid S=1)$ is the variable's distribution under $P$, where we assume $P$ and $Q$ have common support. Furthermore, we assume that the selection indicator $S$ is defined as a probabilistic function of selection variables $U$, and thus $S$ is conditionally independent of $(X,Y)$ given $U$. We summarize the variables in Table \ref{tab:variables}.

\begin{table}[ht!]
    \caption{Notation for our problem setting.}
    \centering

    \resizebox{\columnwidth}{!}{
    \begin{tabular}{p{0.06\columnwidth}  p{0.63\columnwidth} p{0.41\columnwidth}}

    \toprule
    Var. & Description & Example  \\
    \midrule
    $Q$ & Target distribution over $(X, U, Y, S)$ & U.S. population distribution\\
    $P$ & Biased distribution over $(X, U, Y)$ generated by selective sampling from $Q$ & Single hospital distribution \\
    $S$ & Binary selection indicator & If patient in the U.S. attended single hospital \\ 
    $U$ & Variables causal of selection into biased distribution $P$ & Income, education, disability status \\
      $\tilde{U}$ & Variables $\subseteq U$ observed in both $P$ and $Q$ & Income, education  \\
     $U^*$ & Variables $:= U \setminus \tilde{U}$ not observed under $Q$ & Disability status \\
    $X$ & All covariates non-overlapping\tablefootnote{In practice, $\tilde{X}$ and $U$ may overlap, provided all covariates used for prediction are observed in $Q$.} with $U$ & Age, sex, diabetes, depressive disorder
 \\
 $\tilde{X}$ & Covariates $\subseteq X$ used to predict $Y$ & Age, sex, diabetes \\
    $Y$ & Outcome & Heart disease \\
    \bottomrule
    \end{tabular}
    }
    \label{tab:variables}
\end{table}

\begin{assumption}[Conditional Independence]\label{assump:cond-indep}
        $X, Y \indep S \mid U$
\end{assumption}

\begin{assumption}[Common Support]\label{ass:support}
    $\forall X \in \mathcal{X}, U \in \mathcal{U}, Y \in \mathcal{Y}$, if $p(X,U,Y) > 0$, then $p(X,U,Y \mid S=1) > 0$. 
\end{assumption}

Let $f_P: \tilde{\mathcal{X}} \rightarrow \mathcal{Y}$ denote a prediction model trained under the biased distribution $P$ to minimize the expected loss $R_P = \mathbb{E}_{P}[\ell (f_P(\tilde{X}), Y)]$, where the subvector $\tilde{X}$ of $X$ are the prediction features and $\ell: \mathcal{Y} \times \mathcal{Y} \rightarrow \mathbb{R}^+$ is a non-negative loss function.
\begin{assumption}[Non-negative Loss]\label{assump:pos-loss}
    $\ell (f_P(\tilde{X}), Y) \geq 0$ for all $\tilde{X} \in \mathcal{\tilde{X}}$ and $Y \in \mathcal{Y}$.
\end{assumption}

To evaluate model generalization on the target distribution $Q$, we consider estimating the expected loss $R_Q = \mathbb{E}_{Q} [\ell (f_P(\tilde{X}), Y)]$ under the relaxed but more challenging setting of limited $Q$ observability. Specifically, we assume identification and full observation of a subset $\tilde{U}$ of the selection variables, as well as basic summary statistics for all candidate selection variables. We denote the unknown selection variables as $U^*:= U \setminus \tilde{U}$.

\begin{constraint}[Absence of Samples $(\tilde{X}, Y) \sim Q$]\label{ass:no_samples} The variables $(\tilde{X}, Y)$ needed to measure the performance of $f_P$ are unobserved in $Q$.
\end{constraint}

\begin{constraint}[Observation of Partial $\tilde{U}$ from $Q$]\label{ass:utilde}
    We identify a non-empty subset of selection variables $\tilde{U} \subseteq U$ and observe $\tilde{U}$ under $Q$. 
\end{constraint}
    
\begin{constraint}[Observation of $\mu,\sigma^2$ from $Q$]\label{ass:mu_sigma}
    For all variables $V \in (X, U)$, we observe its mean $\mu_Q(V) = \mathbb{E}_Q[V]$ and variance $\sigma^2_Q(V) = \mathbb{E}_Q[(V - \mu_Q(V))^2]$.
\end{constraint}

Finally, we define the observed dataset $\mathcal{D}_P$ = $\{(X^{(i)}, U^{(i)}, Y^{(i)}, S^{(i)}=1)\}_{i=1}^n$ drawn independently from the distribution $P$, on which the prediction model $f_P$ is trained. From $Q$, we observe the summary statistics defined in Constraint \ref{ass:mu_sigma}, as well as $\mathcal{D}_{Q}$ =$\{\tilde{U}^{(i)}\}_{i=1}^m$.

\subsection{An Upper Bound of ${R}_Q$}\label{sec:methods:bound}
Under the Constraints \ref{ass:no_samples} - \ref{ass:mu_sigma} of limited target distribution observability, \textit{the expected loss $R_Q$ is intractable using existing methods}. To address this gap, we propose an upper bound $\hat{R}_Q$ that bounds the performance of the model $f_P$ on the target distribution $Q$. The construction of the bound relies on both the fully observed $\tilde{U}$ and identification of the remaining selection variables $U^*$.

\begin{theorem}[Upper Bound $\hat{R}_Q$]\label{thm:bound}
    Under Assumptions \ref{assump:cond-indep}, \ref{ass:support}, and \ref{assump:pos-loss},
    \begin{align*}
    R_Q &\leq \hat{R}_Q := \mathbb{E}_P
        \left[ w(\tilde{U}) \cdot \phi(\tilde{X}, Y, \mathcal{U}^*) \cdot \ell (f_P(\tilde{X}), Y)\right] \\
    &w(\tilde{U}) :=  \frac{p(\tilde{U})}{p(\tilde{U} \mid S=1)} \notag \\
    &\phi(\tilde{X}, Y, \mathcal{U}^*) :=  \frac{\max_{u^* \in \mathcal{U^*}}p(\tilde{X}, Y \mid u^*, \tilde{U}, S=1)}{p(\tilde{X}, Y \mid \tilde{U}, S=1)} \notag
    \end{align*}
    where $u^*$ is an observation in the subspace of $\mathcal{U}^*$.
\end{theorem}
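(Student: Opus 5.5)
The plan is to write $R_Q$ as an integral over the target joint of $(\tilde{X}, Y, U)$ and change measure to the biased distribution $P$. This happens in two stages: first reweight the fully observed $\tilde{U}$, then bound the unobserved $U^*$ by a supremum. Concretely, start from
\[
R_Q = \int \ell(f_P(\tilde{x}), y)\, p(\tilde{x}, y) \, d\tilde{x}\, dy ,
\]
and expand $p(\tilde{x},y) = \int p(\tilde{x}, y \mid \tilde{u}, u^*)\, p(u^* \mid \tilde{u})\, p(\tilde{u})\, du^*\, d\tilde{u}$, since $U=(\tilde{U},U^*)$.

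The first key step uses Assumption \ref{assump:cond-indep}. Because $(X,Y)\indep S \mid U$, and $\tilde{X}\subseteq X$, we get
\[
p(\tilde{x}, y \mid \tilde{u}, u^*) = p(\tilde{x}, y \mid \tilde{u}, u^*, S=1).
\]
This conditional is therefore a $P$-quantity. Bounding it pointwise by $\max_{u^*\in\mathcal{U}^*} p(\tilde{x}, y \mid u^*, \tilde{u}, S=1)$ is legitimate because the loss is non-negative (Assumption \ref{assump:pos-loss}) and the weights $p(u^*\mid\tilde{u})$ are non-negative. The maximizer no longer depends on the integration variable $u^*$, so $\int p(u^*\mid\tilde{u})\,du^* = 1$ removes the unknown target conditional of $U^*$ entirely. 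This yields
\[
R_Q \le \int \ell(f_P(\tilde{x}),y)\, \max_{u^*} p(\tilde{x},y\mid u^*,\tilde{u},S=1)\, p(\tilde{u})\, d\tilde{x}\, dy\, d\tilde{u}.
\]

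The second step is the change of measure. Multiply and divide by $p(\tilde{x},y\mid\tilde{u},S=1)\,p(\tilde{u}\mid S=1)$, which is the $P$-density of $(\tilde{X},Y,\tilde{U})$. The integrand then becomes $w(\tilde{u})\,\phi(\tilde{x},y,\mathcal{U}^*)\,\ell$ integrated against the $P$ law, which is exactly $\hat{R}_Q$. Assumption \ref{ass:support} justifies the division. Wherever $p(\tilde{u})>0$ we have $p(\tilde{u}\mid S=1)>0$. Likewise, wherever the numerator $\max_{u^*}p(\tilde{x},y\mid u^*,\tilde{u},S=1)$ is positive, some $u^*$ with positive $P$-mass gives positive $P$-density to $(\tilde{x},y,\tilde{u})$, so $p(\tilde{x},y\mid\tilde{u},S=1)>0$. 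Points where the denominator vanishes therefore contribute zero to both sides.

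I expect the main obstacle to be this support and measure-theoretic bookkeeping rather than the algebra. Three points need care. The maximum should be restricted to (or interpreted as an essential supremum over) $u^*$ in the support of $U^*\mid\tilde{U}$. Conditional densities must be well defined. The case of discrete versus continuous variables should be treated uniformly, for instance by working with densities against a dominating measure. I would state these conventions explicitly, then check that no step silently requires equality where only the inequality from the maximum is available.
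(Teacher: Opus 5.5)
Your proposal is correct and uses the same argument as the paper. It applies conditional independence to replace $p(\tilde{X},Y\mid \tilde U,U^*)$ by its $S=1$ counterpart, bounds the $Q_{U^*\mid\tilde U}$-mixture by the maximum using non-negativity of the loss, and obtains $w(\tilde U)\,\phi$ by change of measure. The only differences are that you bound before changing measure, whereas the paper changes measure first and writes the slack as $\varepsilon_{\tilde X,Y,\tilde U}\ge 0$. Your explicit check that the numerator of $\phi$ vanishes wherever $p(\tilde x,y\mid\tilde u,S=1)=0$ also tightens a step the paper covers only by citing common support.
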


The full proof is in Appendix \ref{appendix:proofs:ub}. To build intuition for the proof, observe that under Assumption \ref{ass:support}, $R_Q$ can be expressed as a reweighted expectation over $P$, where the weights correspond to the density ratio of $(\tilde{X},Y,\tilde{U})$ in $P$ and $Q$. Although the marginal $p(\tilde{U})$ under $Q$ is known, the conditional distribution $p(\tilde{X},Y \mid \tilde{U})$ is not. The key insight is that this conditional distribution can be upper bounded by $p(\tilde{X},Y \mid U)$ under $Q$. Then, invoking conditional independence in Assumption \ref{assump:cond-indep}, we can replace this unknown distribution with the known marginal  $p(\tilde{X},Y \mid U, S = 1)$ under $P$.

We define the \textit{true generalization gap} $R_Q - R_P$ as the loss increase when evaluating $f_P$ on $Q$ versus $P$, the \textit{upper bound}\footnote{Estimating the upper bound is appropriate when higher loss $\ell$ indicates worse performance (such as in the case of logloss or Brier score); if lower loss $\ell$ indicates worse performance (such as with precision or accuracy), the lower bound $\hat{R}_Q \le R_Q$ can be constructed by taking the minimum instead of the maximum over ${U}^*$.} \textit{generalization gap} as $\hat{R}_Q - R_P$, and the \textit{bound error} as $\hat{R}_Q - R_Q$.

\subsection{Practical Bound Estimation}\label{methods:impl}
We next outline how to estimate our proposed bound. In Section \ref{sec:methods:u_pred}, we propose a heuristic algorithm to identify the remaining selection variables $U^*$. In Sections \ref{method:density_est} and \ref{method:ass_tests}, we outline our approach to density estimation and potential assumption violations in finite-sample settings. The pseudocode of our bound estimation method is presented in Algorithm \ref{alg}.

\subsubsection{Heuristic Identification of the Remaining Selection Variables}\label{sec:methods:u_pred}

Our bound requires identifying the remaining selection variables $U^* \coloneq U \setminus \tilde{U}$ from the variables observed in $\mathcal{D}_P$. We propose a simple, calibration-based heuristic for nominating $U^*$.

Suppose that the probability of selection can be expressed as $p(S = 1 \mid U) = g(U\beta)$
for some link function $g$ and coefficient vector $\beta$. Let $C := (X \setminus \tilde{X} \setminus Y, U^*)$ denote the set of possible selection variables $U^*$, where $U^*$ cannot overlap with $Y$ or $\tilde{X}$. We can write the probability of selection equivalently as $p(S = 1 \mid \tilde{U}, C) =g(\tilde{U}\omega + C\gamma)$,
where $\gamma_j = 0$ for all variables $C_j \in X$. The task of determining which variables in $C$ are selection variables thus becomes the simpler task of determining which $\gamma_j$ are non-zero.

Similar to existing calibration-based methods \citep{wu2003optimal,kundu2024framework}, we use the following moment-matching estimating equation over $\mathcal{D}_P$ to empirically solve for $\omega$ and $\gamma$:

\begin{align*}
    \sum_{i: S^{(i)} = 1}
    \frac{m(\tilde{U}^{(i)}, \ C^{(i)})}{g(\tilde{U}^{(i)}\omega  + C^{(i)}\gamma )}  
    = 
    \mathbb{E}_{\mathcal{D}_Q}[m(\tilde U, \ C)]
\end{align*}
where $m$ is the user-specified moment map evaluated at each sample and $\mathbb{E}_{\mathcal{D}_Q}[m(\tilde U, \ C)]$ is the corresponding empirical moment vector under the target distribution $Q$.
Given the observation of $\mu_Q$, $\sigma^2$ from Constraint \ref{ass:mu_sigma}, the choice of $m$ may match on first moments, second moments, or their concatenation.

We construct confidence intervals for $\gamma_j$ using the bootstrapped distribution with percentile parameter $\alpha$. The corresponding $C_j$ whose intervals do not contain zero are selected as the estimated $U^*$. Further details, including an adaptation for searching in high dimensions, are presented in Appendix \ref{appendix:method-extensions:heuristic}.

\subsubsection{Density Estimation}\label{method:density_est}

When the variables are fully categorical, the conditional densities $p(\tilde{X}, Y \mid \tilde{U}, \hat{U}^*, S=1)$ and $p(\tilde{X}, Y \mid \tilde{U}, S=1)$ can be estimated from table counts. For data involving continuous variables, the density functions may be estimated using kernel density estimators or conditional normalizing flow models \citep{flow1, flow2}. Estimation of the propensity $w(\tilde{U}) = p(\tilde{U})\ / \ p(\tilde{U}\mid S=1) = p(S=1)\ / \ p(S=1 \mid \tilde{U})$ is even simpler and can be computed by fitting a classifier \citep{clf1} or table counts if data are discrete. Although our main focus is on categorical data given its omnipresence in medical settings, we discuss continuous density estimation in Appendix \ref{appendix:cont_dens_est}.

\begin{algorithm}
\caption{Upper bound estimation on the generalization performance $R_Q$}\label{alg}
\begin{algorithmic}[1]
\Require prediction model $f_P$; biased dataset $\mathcal{D}_P = \{X^{(i)},Y^{(i)}, U^{(i)}, S^{(i)}=1\}_i$; target dataset $\mathcal{D}_Q = \{\tilde{U}^{(k)}\}_k$; external means and variances  $\mu_Q(V), \sigma^2_Q(V)$ $\forall V \in (X, U)$;  $\alpha$-level for heuristic algorithm
\Ensure $\hat{R}_Q$, as defined in Theorem \ref{thm:bound}

\State $\hat{U}^* \gets$ output selection variables from heuristic search with significance $\alpha$
\State Estimate the conditional density functions $p(\tilde{X}, Y \mid \tilde{U}, \hat{U}^*, S=1)$ and $p(\tilde{X}, Y \mid \tilde{U}, S=1)$ using $\mathcal{D}_P$
\State Estimate the propensity weight $w(\tilde{U}) = {p(\tilde{U})} \ / \ {p(\tilde{U} \mid S=1)}$ using both $\mathcal{D}_P, \mathcal{D}_Q$
\State $w \gets$ empty weight vector of dimension $|\mathcal{D}_P|$
\ForAll{samples $(\tilde{X}^{(i)}, Y^{(i)}, \tilde{U}^{(i)}) \in \mathcal{D}_P$}
    \State $\phi_1 \gets \max\limits_{{u^* \in \hat{\mathcal{U}}}} p(\tilde{X}^{(i)}, Y^{(i)} \mid \tilde{U}^{(i)}, u^*, S=1)$
    \State $\phi_2 \gets p(\tilde{X}^{(i)}, Y^{(i)} \mid \tilde{U}^{(i)}, S=1)$
    \State $w_i \gets w(\tilde{U}^{(i)}) \cdot ({\phi_1} / {\phi_2})$
\EndFor

\State \Return $\hat{R}_Q = \mathbb{E}_{\mathcal{D}_P}\left[w \cdot \ell (f_P(\tilde{X}),Y) \right]$
\end{algorithmic}
\end{algorithm}

\subsubsection{Testing for Assumption Violations When $R_Q$ Is Observed}\label{method:ass_tests}
Our upper bound assumes conditional independence of selection given $U$ (Assumption \ref{assump:cond-indep}) and common support between $P$ and $Q$ (Assumption \ref{ass:support}). However, these assumptions can fail empirically in finite-sample settings. 

If the true $R_Q$ is known, as in settings of simulated selection bias, we can explicitly test how each assumption violation affects the bound error by decomposing our method's bound error $\hat{R}_Q - R_Q$ into a telescoping sum of three factors: 
\begin{align}\label{eq:bound-decomp}
    \hat{R}_Q - R_Q &= \Delta_{\text{TBE}}
    + \Delta_{\text{CI}} 
    + \Delta_{\text{CS}}
\end{align}
where each $\Delta_{(\cdot)}$ term is defined in Appendix \ref{appendix:proofs:decomp}. At a high level, $\Delta_{\text{CI}}$ is zero when the \underline{C}onditional \underline{I}ndependence assumption holds, and $\Delta_{\text{CS}}$ is zero if the \underline{C}ommon \underline{S}upport assumption holds. Finally, $\Delta_{\text{TBE}}$ measures the \underline{T}heoretical \underline{B}ound \underline{E}rror, the gap between our bound estimate $\hat{R}_Q$ and the true $R_Q$ when the two aforementioned assumptions are satisfied. This decomposition therefore quantifies how violations of conditional independence and common support cause the final bound error to deviate from the theoretical bound error.
\subsubsection{Testing for Assumption Violations in Practice}\label{method:practical_tests}
However, the telescoping sum in Equation \ref{eq:bound-decomp} is usually intractable as $R_Q$ is often unknown. Therefore, to approximately test for assumption violations, we present three diagnostics that are straightforward, computationally inexpensive, and can be readily applied using our code implementation. We provide additional details on the diagnostics in Appendix \ref{appendix:ass_viol_practice}. 
\paragraph{Common Support Diagnostics}
\begin{enumerate}
        \item \textbf{KS Test:} The overlap between the observed propensity distribution $p(S=1 \mid \tilde{U})$ under $P$ and $Q$ can be easily evaluated using a Kolmogorov-Smirnov (KS) test, or a similar statistical test.
        \item \textbf{Weight Design Effect $d_{\text{eff}}$:} Moment-matching methods, such as our proposed heuristic in Section \ref{sec:methods:u_pred}, often exhibit instability or poor convergence behavior if the two distributions lack overlap. The stability of the resulting weights, measured via the design effect $d_{\text{eff}}$ \cite{kish1992weighting}, can diagnose potential violations of common support.
    \end{enumerate}
\paragraph{Conditional Independence Diagnostic} 
    \begin{enumerate}[start=3]
        \item \textbf{Propensity Invariance:} We propose a diagnostic that approximately assesses the conditional independence assumption $p(V \mid U,S=1) = p(V \mid U,S=0)$, $\forall V\in (X,Y)$. However, under Constraints \ref{ass:no_samples} - \ref{ass:mu_sigma}, the true $U$ is unknown and we only observe samples where $S=1$. We instead use the observed propensity $\hat{S} = p(S=1 \mid \tilde{U})$ and predicted $\hat{U} := (\tilde{U}, \hat{U}^*)$ from our moment-matching method to assess for equality across $p(V \mid \hat{U},\hat{S}=s_1) = p(V \mid \hat{U},\hat{S}=s_2)$, $\forall s_1,s_2\in [0,1], \forall V\in (X,Y)$. 
    \end{enumerate}

\section{Experimental Setup}\label{sec:experiments}

\subsection{Data}\label{sec:experiments:data}
We evaluate the quality of our bound $\hat{R}_Q$ in three data settings: (i) fully synthetic data; (ii) semi-synthetic data, where we simulate an EHR-specific selection mechanism in clinical data from All of Us; and (iii) real-world selection bias in MIMIC-IV. We provide additional details for each dataset, including preprocessing steps, in Appendix \ref{appendix:data}. 

\subsubsection{Synthetic Data.}
To generate the fully synthetic target dataset $\mathcal{D}_Q$, we sample binary variables $U$ with bivariate correlation $\rho$, and then generate binary $X$ and $Y$ as linear logistic functions of $U$. We then sample our biased dataset $\mathcal{D}_P = \{(X,U,Y,S) \in \mathcal{D}_Q : S = 1\}$ through a logistic selection model:
\begin{align*}
p(S=1 \mid U) = \frac{1}{1+\text{exp}(-g(U\beta))} \\ 
S \sim \text{Bernoulli}(p(S=1 \mid U))
\end{align*} 
where we test both linear and nonlinear link functions, $g$. 

\subsubsection{All of Us.}
The All of Us Research Program \citep{aou} is a demographically diverse biobank based in the U.S. with over 600,000 participants. It includes sociodemographic and biomarker information collected at enrollment, along with longitudinal outcomes from linked medical records.


To form the target dataset $\mathcal{D}_Q$, we filter All of Us participants to a cohort of 255,612 participants. We simulate selection of the biased dataset $\mathcal{D}_P$ given the same logistic selection mechanism described in the fully synthetic setting, where we explicitly define EHR-specific selection variables $U$ (e.g., income level or insurance status) \citep{ehrbias1,ehrbias2,ehrbias3,ehrbias4,biobankbias1,ipw1}. For prediction, we consider 19 health outcomes $Y$ (e.g., Type 2 diabetes mellitus) and 41 binary features $X$ (e.g., blood pressure, lifestyle factors).

\subsubsection{MIMIC-IV.} MIMIC-IV is a deidentified dataset containing over 200,000 patients admitted to the emergency department at the Beth Israel Deaconess Medical Center in the U.S. \citep{johnson2023mimiciv, goldberger2000physionet}. MIMIC-IV is a widely-used benchmark in machine learning, and it is therefore critical to audit whether selection bias leads to performance degradation when models are generalized to broader populations.

We treat MIMIC-IV as the biased dataset $\mathcal{D}_P$ and All of Us as the target dataset $\mathcal{D}_Q$. Because MIMIC-IV contains data from a single U.S. hospital and All of Us provides nationally-representative data, this setup naturally reflects the realistic scenario of a complex and unknown selection mechanism. Furthermore, using All of Us as the target enables method validation against the true $R_Q$ when the variables $\tilde{X},Y$ are observed. 

We conduct three real-world experiments: first, we consider two prediction tasks ($Y$=hypertension and $Y$=Type 2 diabetes mellitus) where $\tilde{X},Y$ variables are observed in both datasets, enabling validation of our bound against the true $R_Q$; second, we evaluate one task ($Y$=hospital mortality) that reflects the realistic scenario where $R_Q$ is unknown. Motivated by prior work in EHR-specific biases \citep{ehrbias1,ehrbias2,ehrbias3,ehrbias4}, we select the selection variables $\tilde{U}$ as a subset of age, insurance type, and primary language. 



\subsection{Prediction Tasks}\label{sec:experiments:sel-pred-mdl}
For the prediction model $f_P$, we learn $p(Y \mid \tilde{X},S=1) = f_P(\tilde{X)}$ either using XGBoost \cite{chen2016xgboost} or an elastic net regularized logistic regression model with class-balancing weights and regularization parameters chosen via cross-validation. In practice, we do not observe a substantial difference in bound characteristics under different prediction models. For each set of experiments, we run a data-driven search to identify $n_{\text{tasks}}$ prediction tasks such that the resulting generalization gap is sufficiently large, i.e., $R_Q - R_P >t_R$. We outline this search in Algorithm \ref{alg:synthetic-selection}.

\subsection{Evaluating Our Proposed Bound in Simulated Selection Settings}\label{sec:experiments:bound_props}
We first evaluate our bound estimation method in the fully synthetic and All of Us data settings. By simulating selection, we can validate if our method, which assumes limited target data observability (Constraints \ref{ass:no_samples} - \ref{ass:mu_sigma}), actually recovers the true expected loss $R_Q$ and selection variables $U$ in practice. We run the following experiments, which are described in more detail in Appendix \ref{appendix:results}:
\subsubsection{Correctness of Heuristic Identification of Selection Variables.} We test how well our moment-matching heuristic (Section \ref{sec:methods:u_pred}) recovers the remaining selection variables compared to random selection and selection based on maximum correlation (regular and Cohen's $d$) with $\tilde{U}$. For each selection strategy, we compute the F2 score, precision, recall, and the Jaccard index of the selected $\hat{U}^*$ compared to the true $U^*$.

\subsubsection{Robustness to Assumption Violations.} We examine how violations of common support and conditional independence may affect the behavior of our bound estimate. To control the degree of assumption violation, we vary four parameters: sample size $|\mathcal{D}_Q|$, the strength of the selection mechanism, covariate imbalance, and the number of extraneous features $X \setminus \tilde{X}$. For each parameter setting, we decompose the estimated bound error $\hat{R}_Q - R_Q$ into the telescoping sum from Equation \ref{eq:bound-decomp}. 

\subsubsection{Validating Our Bound Estimate.} For each prediction task, we run our method on all possible observed subsets $\tilde{U} \subseteq U$ and compute the estimated upper bound $\hat{R}_Q$: first, using the true selection variables, denoted as `UB (true $U^*$)'; second, using our heuristic, denoted as `UB (heuristic $U^*$)'. We then compare the estimated generalization gap (or bound error) with the true quantity. 

We compare against the following baselines: \textit{naive inverse probability of participation weighting} (IPPW) \citep{ipw2000, ipw1,ipw2,ipw3,ipw4}, which estimates sample weights using the fully observed $\tilde{U}$; \textit{empirical calibration} \citep{surveyest}, which estimates sample weights to balance the first moments of all variables in $P$ and $Q$; \textit{entropy balancing} \citep{eb}, a form of calibration that additionally matches second moments; and \textit{raking} (iterative proportional fitting) \citep{surveyest}, a form of calibration that aligns categorical sample data to target table counts. We describe these baselines in detail in Appendix \ref{appendix:baselines}. In Appendix \ref{appendix:other_baselines}, we also provide results on KLIEP \cite{kliep}, KMM \cite{kmmog,kmm}, logistic regression classification \cite{da2,clf1}, RuLSIF \cite{liu2013change}, and uLSIF \cite{kanamori2009least}, which assume unrealistic data availability and are excluded from the main analysis. 

\subsection{Application of Proposed Bound to Real-World Settings}
We next validate our method in real-world settings and discuss how to practically use our method for model auditing.

\subsubsection{Robustness to Assumption Violations.} Assuming $R_Q$ is unknown, we apply the three proposed assumption violation diagnostics from Section \ref{method:practical_tests} on fully synthetic data and outline how to interpret the results in practice.

\subsubsection{Validating Our Bound Estimate.} We evaluate our method on three tasks with real selection bias in MIMIC-IV when compared to the more diverse target population in All of Us. In two tasks, we compare our method's predicted $\hat{R}_Q$ to the true $R_Q$. For the third task, we estimate $\hat{R}_Q$ and provide guidelines for practically validating our method when $R_Q$ is unknown.

\begin{figure}[ht!]
    \centering
\includegraphics[width=\linewidth]{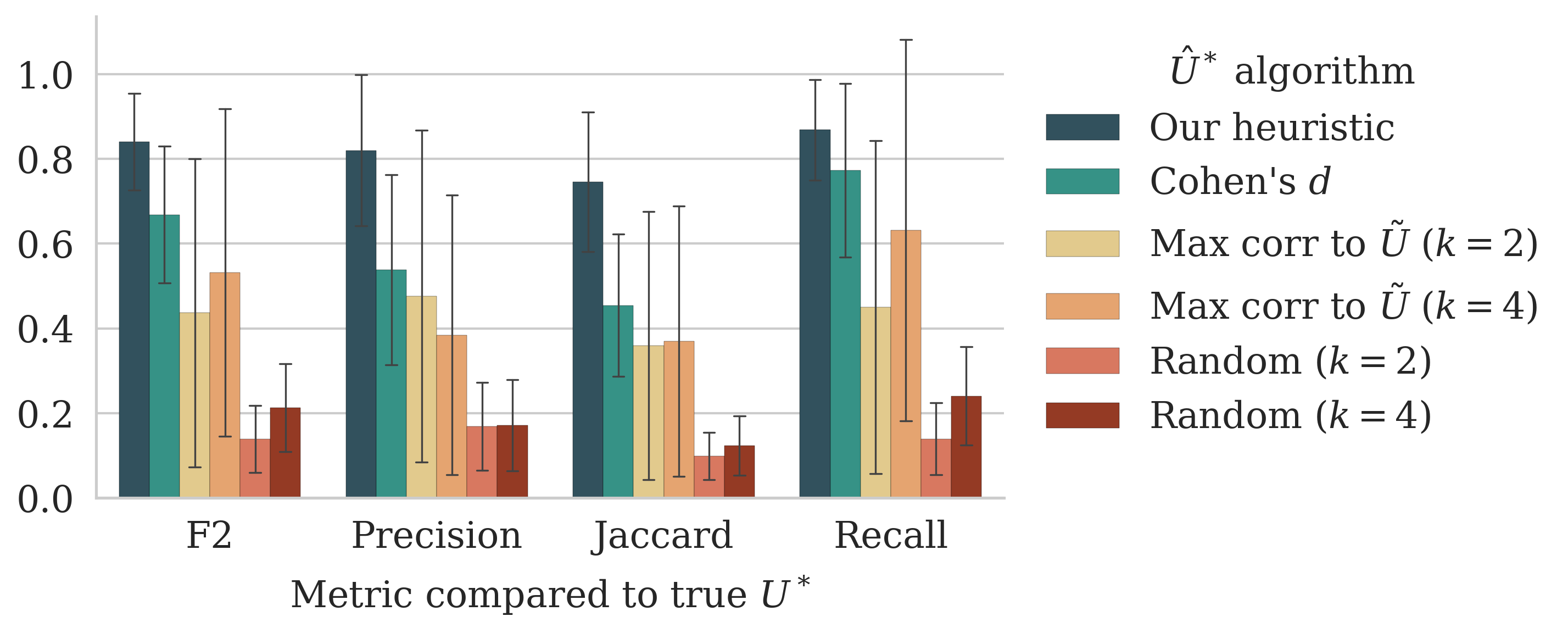}
    \caption{Comparing our heuristic identification algorithm against other baselines in identifying the true selection variables $U^*$ in fully synthetic data. Our heuristic yields high accuracy and thus is a close approximation to the bound under true selection variable identification.}
    \label{fig:synth_heuristic}
\end{figure}

\begin{figure}[ht!]
    \centering
        \includegraphics[width=\linewidth]{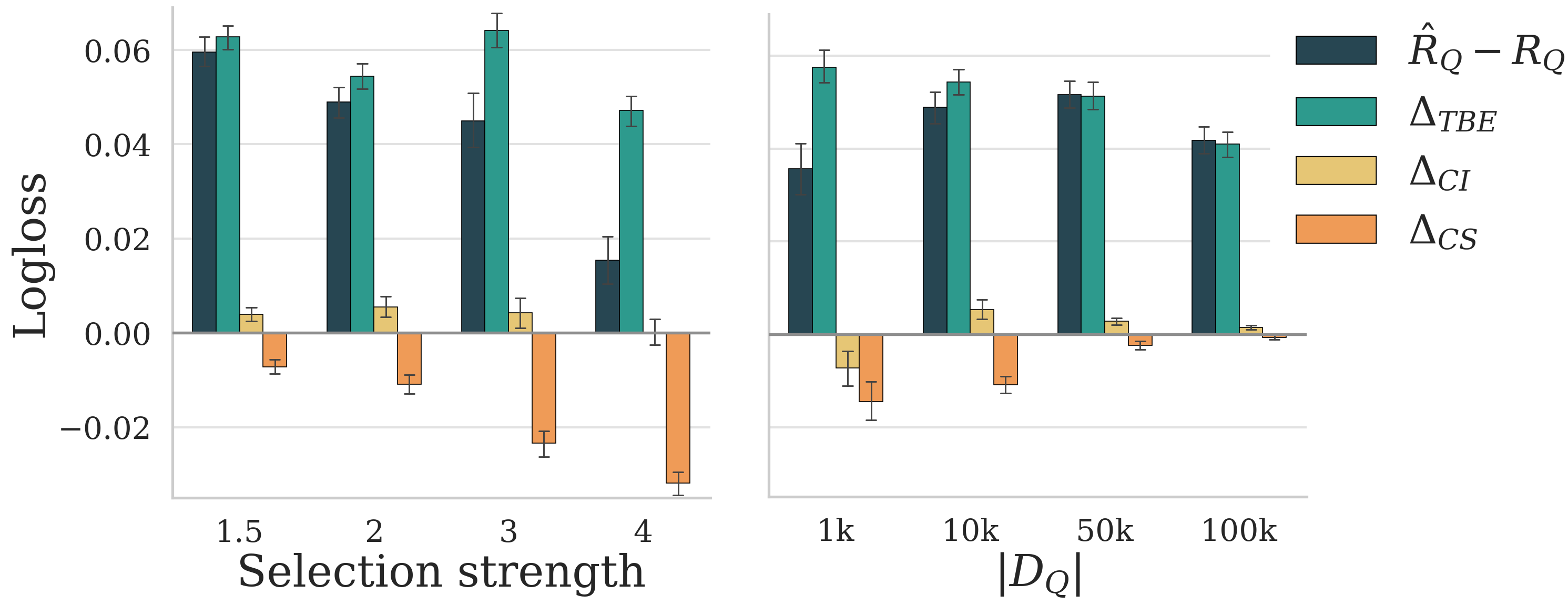}
    \caption{Decomposition of the bound error $\hat{R}_Q - R_Q$ into three terms (see Section \ref{method:ass_tests}) to evaluate assumption violations in fully synthetic data, as sample size and selection strength vary. Extreme violations of common support (i.e., large $|\Delta_{\text{CS}}|$ ) or conditional independence (i.e., large $|\Delta_{\text{CI}}|$ ) may lead to an invalid upper bound.}
    \label{fig:decomp}
\end{figure}

\section{Results}\label{sec:res}
\subsection{Evaluating Our Proposed Bound in Simulated Selection Settings}
\subsubsection{Correctness of Heuristic Identification of Selection Variables.} In Figures \ref{fig:synth_heuristic} and \ref{fig:real_heuristic}, we demonstrate that our proposed moment-matching heuristic outperforms all baselines and is reasonably able to identify the remaining selection variables $U^*$ with F2 scores of 0.84 and 0.81 for synthetic data and All of Us, respectively. These results suggest that the bound estimated using our heuristic $\hat{U}^*$ closely approximates the bound under the true selection variables $U^*$.

\subsubsection{Robustness to Assumption Violations.}

In Figure \ref{fig:decomp}, we apply the proposed error bound $\hat{R}_Q - R_Q$ decomposition while varying the strength of the selection mechanism and the sample size $|\mathcal{D}_Q|$. Results are shown for fully synthetic data using a linear selection mechanism and are averaged across $n_{\text{tasks}}=20$, all $\tilde{U} \subseteq U$, and 5 random seeds. Additional results on All of Us data, as well as varying covariate $\tilde{X}$ imbalance or increasing $\text{dim}(X \setminus \tilde{X})$, are provided in Appendix \ref{appendix:assmp_viol_oracle}. As expected, aggressive selection, small sample size, and high variable imbalance can violate assumptions of conditional independence or common support, as reflected by larger average $\Delta_{\text{CI}}$ and $\Delta_{\text{CS}}$ terms. In these scenarios, caution using our method -- and perhaps deployment of the model in question -- is warranted until more data can be collected. For instance, in tasks where sample size is insufficient, a large negative $\Delta_{\text{CS}}$ term may dominate due to lack of common support, and $\hat{R}_Q$ may be underestimated. Nonetheless, we find on average our method yields a valid upper bound under reasonable sample sizes and modest selection strength.

\subsubsection{Validating Our Bound Estimate.}
In Table \ref{tab:bound-err-aou}, we demonstrate the quality of our bound estimate across $n_{\text{tasks}}=30$, all subsets $\tilde{U} \subseteq U$, and 20 random seeds for both fully synthetic and All of Us data with a nonlinear selection mechanism. Our results confirm our estimate's empirical validity, with 97\% of all tasks in All of Us yielding a valid upper bound versus 82\% for the next best baseline. Although our bound in theory could be prohibitively large, in practice it is non-vacuous. For instance, the 95th percentile of the bound error in the All of Us experiments is 0.17 logloss. In Appendix \ref{appendix:results}, we present additional results, including performance on linear selection, synthetic continuous, and synthetic high-dimensional data.

We next examine our method's robustness to which selection variables $\tilde{U}$ are fully observed (Constraint \ref{ass:utilde}). In Figure \ref{fig:usens_combined}, we plot the bound error $\hat{R}_Q - R_Q$ based on the dimension $\text{dim}(\tilde{U})$ of observed variables out of $\text{dim}(U)=5$ variables total, using All of Us data and nonlinear selection. As expected, our bound estimate slightly improves as the availability of selection variables $\tilde{U}$ increases. In Figures \ref{appendix:fig:ures_real} and \ref{appendix:fig:ures_synth}, we also show that the bound error is correlated with how well $\tilde{U}$ predicts $S$. However, our method provides reasonable bounds even when $\text{dim}(\tilde{U})=1$, highlighting robustness to settings where target observability is highly limited. Additional results are included in Appendix \ref{appendix:results}. 

In Figure \ref{fig:perf_tasks}, we examine the generalization gap for two tasks in the All of Us data, where we use a linear selection mechanism designed to simulate EHR-specific selection bias. Five additional tasks are plotted in Figure \ref{appendix:fig:udim-real}. We observe that the baselines severely underestimate generalization performance, risking confident deployment of a model that might perform poorly in practice. On the other hand, our method provides a tight and valid upper bound on the real generalization gap.

\begin{table}[ht!]
\centering
\caption{Bound error $\hat{R}_Q - R_Q$ for both fully synthetic and All of Us data. "Validity" denotes the fraction of tasks
that yields a valid upper bound $> \epsilon$, for some small negative $\epsilon$. "$(0.05,\,0.95)$" denotes the 5th and 95th percentiles. "$d_\text{eff}$" is the design effect. Our generalization estimate UB (heuristic $U^*$) provides an non-vacuous upper bound with higher rates of validity than the baselines.}
\label{tab:bound-err-aou}

\footnotesize
\setlength{\tabcolsep}{4pt}
\renewcommand{\arraystretch}{0.95}

\begin{adjustbox}{max width=\columnwidth}
\begin{tabular}{@{}llrccc@{}}
\toprule
& & \multicolumn{4}{c}{$\ \ \ \ \ \ \ \  \ \hat{R}_Q - R_Q$} \\
\cmidrule(lr){3-6}
\textbf{} & \textbf{} & \textbf{$\ \ \ \boldsymbol{\mu} \pm \boldsymbol{\sigma} \ \ \ $} & \textbf{Validity} & \textbf{(0.05,\,0.95)} & \textbf{$\boldsymbol{d}_{\text{eff}}$} \\
\midrule
Synthetic & \textbf{UB (true $U^*$)} & $0.05 \pm 0.06$ & $0.90$ & $(-0.02, 0.15)$ & $4.9$ \\
&\textbf{UB (heuristic $U^*$)}  & $0.06 \pm 0.05$  & $0.90$ & $(-0.02, 0.15)$ & $4.7$ \\
&Naive IPPW of $\tilde{U}$& $-0.03 \pm 0.04$ & $0.29$ & $(-0.11, 0.02)$ & $2.1$ \\
&Raking & $-0.01 \pm 0.02$ & $0.50$ & $(-0.05, 0.02)$ & $3.1$ \\
&Calibration   & $-0.01 \pm 0.02$    & $0.50$ & $(-0.05, 0.02)$ & $3.1$ \\
&Ent. Balancing& $-0.01 \pm 0.02$& $0.50$ & $(-0.05, 0.02)$ & $3.1$ \\
\midrule
All of Us & \textbf{UB (true $U^*$)} & $0.04 \pm 0.04$ & $0.97$ & $(-0.01, 0.12)$ & $28.9$ \\
&\textbf{UB (heuristic $U^*$)} & $0.06 \pm 0.05$ & $0.97$ & $(-0.01, 0.17)$ & $26.3$ \\
&Naive IPPW of $\tilde{U}$& $-0.01 \pm 0.01$  & $0.82$ & $(-0.02, 0.01)$ & $11.8$ \\
&Raking & $-0.01 \pm 0.02$      & $0.79$ & $(-0.06, 0.00)$ & $4.6$ \\
&Calibration     & $-0.01 \pm 0.02$      & $0.79$ & $(-0.04, 0.00)$ & $3.8$ \\
&Ent. Balancing        & $-0.01 \pm 0.01$      & $0.74$ & $(-0.04, 0.00)$ & $5.3$ \\
\bottomrule
\end{tabular}
\end{adjustbox}
\end{table}

\begin{figure}[ht!]
    \centering
\includegraphics[width=\linewidth]{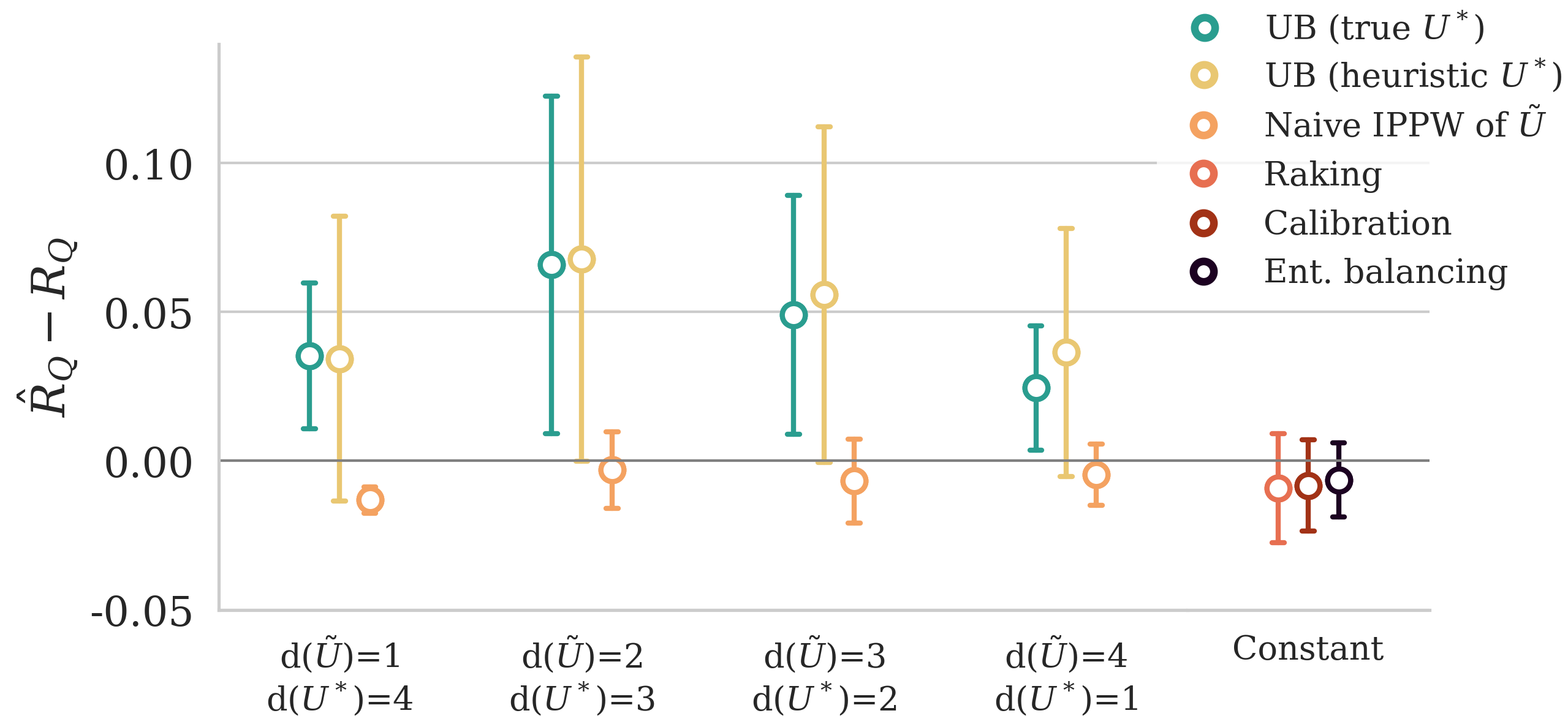}
    \caption{Bound error $\hat{R}_Q-R_Q$ based on the dimension $\text{d}(\tilde{U}):=\text{dim}(\tilde{U})$ of observed selection variables in the All of Us data. While the baselines sometimes underestimate, our method consistently yields a valid upper bound on $R_Q$.}
    \label{fig:usens_combined}
\end{figure}

\subsection{Application of Proposed Bound to Real-World Settings}
\subsubsection{Robustness to Assumption Violations}\label{results:practical_viol_teests}
In Table \ref{tab:violation_tests}, we show the application of our three assumption violation diagnostics on fully synthetic data across $n_{\text{tasks}}=10$, subsets $\tilde{U}\subseteq U$, and 2 seeds; implementation details are provided in Appendix \ref{appendix:ass_viol_practice}. As expected, higher levels of assumption violations (as indicated by higher scores for all diagnostics) occur under low sample size and high selection strength. However, even when assumptions were violated, our bound remained largely valid.

\begin{table}[ht!]
\centering
\caption{Approximate assumption violation diagnostics on fully synthetic data. `CS' and `CI' denote a diagnostic for Common Support and Conditional Independence, respectively. For all diagnostics, a higher score indicates increased violation.}
\label{tab:violation_tests}

\footnotesize
\setlength{\tabcolsep}{4pt}
\renewcommand{\arraystretch}{1}

\begin{adjustbox}{max width=\columnwidth}
\begin{tabular}{@{}cccccc@{}}
\toprule
$|\boldsymbol{\mathcal{D}_Q}|$ &
\makecell{\textbf{Selection} \\ \textbf{strength}} & 
$\makecell{\boldsymbol{\hat{R}_Q} - \boldsymbol{R_Q}\\ \boldsymbol{\mu} \pm \boldsymbol{\sigma}}$ &
\makecell{(\textbf{CS})\\ \textbf{KS Test}} &
\makecell{(\textbf{CS})\\ \textbf{Weight} $\boldsymbol{d_{\text{eff}}}$} &
\makecell{(\textbf{CI})\\ \textbf{Propensity Invariance}} \\
\midrule
1000   & Low  & $0.08 \pm 0.06$ & $0.18 \pm  0.08$ & $50.0 \pm 62.1$ & $0.02 \pm 0.01$ \\
       & High  & $0.07 \pm 0.11 $&  $0.19 \pm  0.10$ & $68.6 \pm 89.0$ & $0.03 \pm 0.01$ \\
100000 & Low   & $0.06 \pm 0.06$ & $0.14 \pm 0.05 $& $1.4 \pm  0.46$ & $0.01 \pm 0.00$ \\
       & High  & $0.03 \pm 0.06$ & $ 0.15\pm 0.06 $& $3.8 \pm  2.85$ & $0.02 \pm 0.00$ \\
\bottomrule
\end{tabular}
\end{adjustbox}
\end{table}

Interpretation of the \textbf{KS Test} statistic and p-value should be relative to the user's tolerance for generalization risk. In addition, because the diagnostic relies on the propensity $p(S=1\mid \tilde{U})$, confidence in the diagnostic can be calibrated to the user's confidence in the observed $\tilde{U}$ capturing the drivers of selection. To interpret the \textbf{Weight $d_{\text{eff}}$} diagnostic, weights may be deemed unstable when $d_{\text{eff}} \gg 1/\rho,$ where $\rho \in [0,1]$ reflects the user’s risk tolerance. For instance, a more conservative user may decide $\rho=0.1$, i.e., common support is approximately satisfied if $d_{\text{eff}} \ll 10$. Finally, interpreting the \textbf{Propensity Invariance} diagnostic similarly depends on the user's confidence in $\tilde{U}$ and the predicted $\hat{U}^*$. Note that while the common support diagnostics do not depend on proper heuristic identification of $U^*$, the Propensity Invariance test will fail, as intended, under imperfect identification of the set of selection variables $\hat{U}:=(\tilde{U}, \hat{U}^*)$. For instance, given one is confident in $\tilde{U}$ (i.e., has determined $\tilde{U}$ are reasonable drivers of selection using domain knowledge), then the failure of the diagnostic indicates an unreliable bound estimate, potentially due to low sample size or inaccurate heuristic identification of $U^*$. Furthermore, because the diagnostic calculates conditional independence for each variable $V \in (X,Y)$ before averaging across variables, users can examine and potentially reconsider using covariates with high individual diagnostic scores.

\subsubsection{Validating Our Bound Estimate.}
In Figure \ref{fig:mimic}, we show the predicted generalization gap of real-world selection bias in MIMIC-IV relative to the broader All of Us population. In the two leftmost tasks where $R_Q$ is known, we observe that despite the complexity of real-world distribution shifts, our method closely matches the true generalization performance and substantially outperforms baseline approaches. 

However, in most real-world scenarios, no such target reference exists. For instance, in the MIMIC-IV experiment of predicting $Y = \text{Mortality}$, the true $R_Q$ and variables $U$ are unknown given limited target distribution availability (Constraints \ref{ass:no_samples} - \ref{ass:mu_sigma}). To validate our proposed generalization bound $\hat{R}_Q$ in practice, users may incorporate the assumption diagnostics, as discussed above in Section \ref{results:practical_viol_teests}. In addition, external domain knowledge -- such as prior literature, known causal structures, or empirical studies on similar generalization settings -- can indicate if a generalization gap is expected. In our setting, for example, prior studies have shown that single-site hospital data often generalize poorly for mortality estimates \cite{singh2022generalizability,hydoub2023risk}, and Berkson’s bias \cite{goldstein2016controlling,berkson1946limitations} in causal inference implies that hospital-based datasets (e.g., MIMIC-IV) tend to oversample sicker patients relative to the general population. These findings suggest that mortality prediction using MIMIC-IV data may be non-generalizable, thus lending credulity to the nonzero gap predicted by our method.

\begin{figure}[h!]
\centering
\includegraphics[width=\columnwidth]{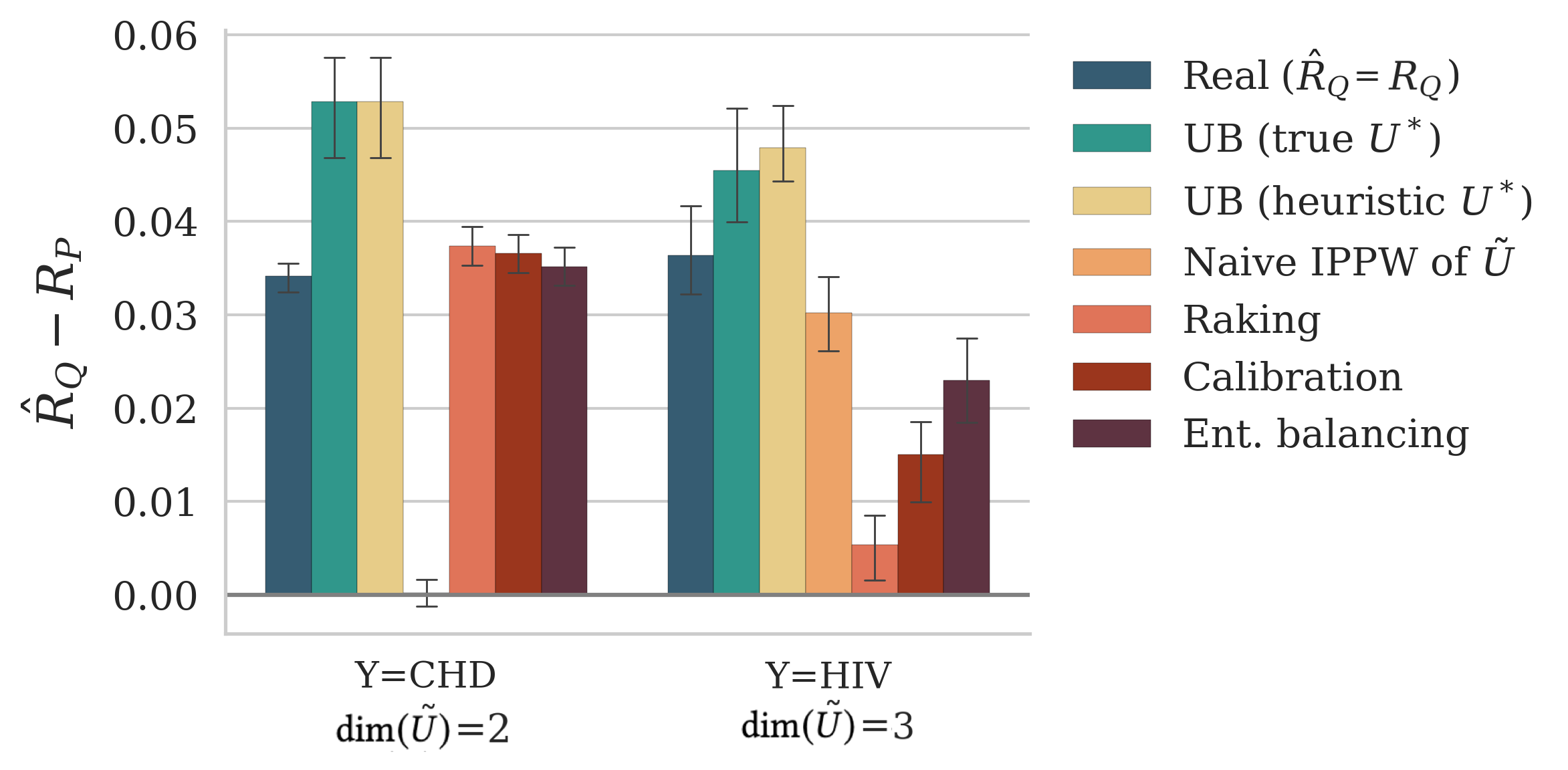}

\vspace{4pt} 

\footnotesize
\setlength{\tabcolsep}{4pt}      
\renewcommand{\arraystretch}{1} 
\begin{adjustbox}{max width=\columnwidth}
\begin{tabular}{@{}l
    >{\raggedright\arraybackslash}p{0.44\columnwidth}
    >{\raggedright\arraybackslash}p{0.44\columnwidth}@{}}
\toprule
& \textbf{Y = Coronary heart disease (CHD)} & \textbf{Y = Human immunodeficiency virus (HIV)} \\
\midrule
\textbf{Covariates $\tilde{X}$} & Age, BMI, vitals, lifestyle features & 
Age, BMI, vitals, lifestyle features \\
\textbf{Known $\tilde{U}$}          & Income (+), confidence in medical system (+) & Age (+), has housing (+), education level (+)          \\
\textbf{Unknown $U^*$}          &   Employment status (+), general health (-)  &    Employment status (+)       \\
\bottomrule
\end{tabular}
\end{adjustbox}

\caption{Generalization gap $\hat{R}_Q-R_P$ (top) and the corresponding prediction task details (bottom) for two specific tasks in the All of Us data. (+) and (-) denotes over- and under-sampling for that selection variable, respectively, guided by EHR-specific selection bias. While baselines risk underestimating generalization performance, our method produces a valid upper bound.}
\label{fig:perf_tasks}
\end{figure}

\begin{figure}[h!]
\centering
\includegraphics[width=\columnwidth]{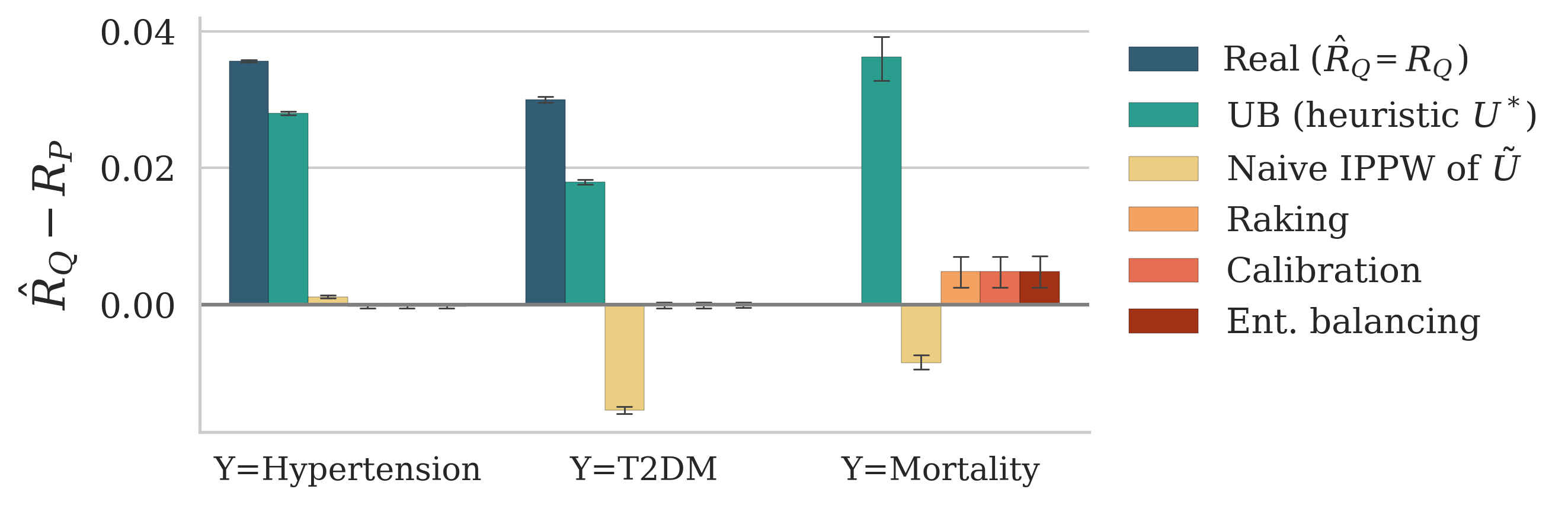}

\vspace{4pt} 
\scriptsize
\setlength{\tabcolsep}{4pt}      
\renewcommand{\arraystretch}{1.15}
\begin{tabularx}{\columnwidth}{@{}l X X X@{}}
\toprule
& \textbf{Y = Hypertension} & \textbf{Y = Type 2 diabetes mellitus (T2DM)} & \textbf{Y = Mortality$^*$} \\
\midrule
\textbf{Covariates $\tilde{X}$} & Diastolic blood pressure, substance abuse disorder, T2DM, Alzheimer's disease, gender & Marital status, race, primary language, anxiety disorder & Weight, T2DM, endometriosis, hypertension, platelet counts$^*$, insurance type, bicarbonate levels$^*$, systolic blood pressure \\
\textbf{Proposed $\tilde{U}$} & Age, English as primary language & Age, English as primary language, insurance type & Age, insurance type  \\
\textbf{Predicted $U^*$}  & Comorbidities, diastolic and systolic blood pressure, insurance type, race, marital status&  Comorbidities, age, diastolic and systolic blood pressure, insurance type, race, marital status  & Comorbidities, age, diastolic blood pressure, insurance type, race, marital status     \\
\bottomrule
\end{tabularx}

\caption{Real-world generalization gap $\hat{R}_Q - R_P$ (top) and the corresponding prediction task details (bottom) for three tasks using the biased MIMIC-IV data, relative to the target All of Us data. `$^*$'indicates that variable is not observed in All of Us. Note for the task Y=Mortality, the real gap ${R}_Q - R_P$ is unknown and thus not plotted.}
\label{fig:mimic}
\end{figure}

Together with the above considerations, our proposed bound can help guide deployment decisions. When the assumption violation diagnostics indicate the underlying assumptions are satisfied and the estimated bound $\hat{R}_Q$ is small, practitioners may proceed with model deployment with greater confidence. Conversely, when the bound is large and prior evidence suggests generalization risk, deployment should be delayed in favor of remediation strategies, such as collecting more representative data or applying model-based debiasing techniques.

\section{Conclusion}
In this work, we propose a practical method for estimating the upper bound on the worst-case performance of prediction models under selection bias. Our work has several limitations. First, we focus on low-dimensional categorical data given its prevalence in medical settings and straightforward density estimation. Extensions to high-dimensional, time-series, and mixed-type data -- as we explore in Appendix \ref{appendix:synth-add-res} -- warrant further research. Second, selection bias is closely tied to historical causes of discrimination and underrepresentation. Future work into model generalization should examine subgroup-specific bounds or, even better, root-cause remedies such as deliberate data collection of historically underrepresented populations \cite{bibbins2022improving}. Third, we recommend exploring strategies to reduce the high variance of our bound, such as density-free approaches. Fourth, while we limited our study to relatively simple prediction models $f_P$, it is also important to understand how selection bias affects more complex models, such as LLMs \cite{guo2024bias}. Finally, general-purpose guidelines for which type of prediction tasks and data structures are most affected by selection bias, as has been done in causal inference \cite{whatif,pearl,t1t2,infante,smith}, would be highly valuable.

Our work makes several important contributions. Unlike existing methods, our method operates under the realistic assumption of limited target data availability. Under these pragmatic data constraints, we propose a novel upper bound on worst-case generalization error and an estimation method using a moment-matching heuristic. We demonstrate through extensive experiments on both synthetic and real-world medical datasets that our method recovers tight and valid bounds even under modest assumption violations. Our experiments on MIMIC-IV highlight the ability of our method to identify real-world cases of selection bias that could lead to generalizability harms if left unaddressed. To this end, we also provide diagnostics for assessing assumption violations, guidance on interpreting the bound, and user-friendly code for running our method. While our work focuses on applications to medical settings, particularly EHR-specific selection bias, the proposed method is applicable to any setting of selection bias. For instance, in Appendix \ref{appendix:cses} we apply our bound to detect potential selection bias in political survey data. We hope our work will serve as a practical tool for researchers and practitioners to audit for selection bias, enabling more trustworthy and generalizable prediction models.

\begin{acks}
We thank Vasilis Syrgkanis, Panagiotis Stanitsas, Dominik Rothenhäusler, Roshni Sahoo, and Sanmi Koyejo for their comments and insights.
\end{acks}
\bibliographystyle{ACM-Reference-Format}
\balance
\bibliography{bib}

\clearpage

\appendix
\onecolumn

\section{Proofs}\label{appendix:proofs}

\subsection{Upper bound}\label{appendix:proofs:ub}
\begin{proof}
    Under Assumption \ref{ass:support} of common support, and by the invariance of the loss term to $\tilde{U}$, we can rewrite $R_Q$ as
    \begin{align*}
        R_Q  &= \mathbb{E}_{P} 
        [\frac{p(\tilde{X}, Y)}{p(\tilde{X}, Y \mid S=1)} \cdot \ell (f_P(\tilde{X}), Y)] \\
        &= \mathbb{E}_{P} 
        [\frac{p(\tilde{X}, Y, \tilde{U})}{p(\tilde{X}, Y, \tilde{U} \mid S=1)} \cdot \ell (f_P(\tilde{X}), Y)] \\
        &= \mathbb{E}_{P} 
        [w(\tilde{U}) \cdot \frac{p(\tilde{X}, Y \mid\tilde{U})}{p(\tilde{X}, Y \mid \tilde{U},  S=1)} \cdot \ell (f_P(\tilde{X}), Y)]
    \end{align*}
    where $w(\tilde{U}) \coloneq\frac{p(\tilde{U})}{p(\tilde{U} \mid S=1)}$. By Assumption \ref{assump:cond-indep} of conditional independence, we know that $p(\tilde{X}, Y \mid U) = p(\tilde{X}, Y \mid U, S=1)$ and thus  
    \begin{align*}
        p(\tilde{X}, Y \mid \tilde{U}) &= \int_{u^* \in \mathcal{U}^*} p(u^* | \tilde{U})p(\tilde{X},Y|\tilde{U},u^*)du^* \\ &= \mathbb{E}_{Q_{U^* \mid \tilde{U}}}[p(\tilde{X}, Y \mid \tilde{U}, U^*)] \\
        &= \mathbb{E}_{Q_{U^* \mid \tilde{U}}}[p(\tilde{X}, Y \mid \tilde{U}, U^*, S=1)] 
    \end{align*}
    If we substitute this into the above equation for $R_Q$, we have
        \begin{align*}
        R_Q = \mathbb{E}_{P}
        \left[
        \frac{\mathbb{E}_{Q_{U^* \mid \tilde{U}}}[p(\tilde{X}, Y \mid \tilde{U}, U^*, S=1)]}{p(\tilde{X}, Y \mid \tilde{U}, S=1)} \cdot w(\tilde{U}) \cdot \ell (f_P(\tilde{X}), Y)\right]
    \end{align*}
    By definition of expectation and non-negativity of density functions, there exists some $\varepsilon_{\tilde{X}, Y, \tilde{U}} \geq 0$ such that
    \begin{align*}
        \max_{u^* \in \mathcal{U}^*}p(\tilde{X}, Y \mid \tilde{U}, u^*, S=1) = \mathbb{E}_{Q_{U^* \mid \tilde{U}}}[p(\tilde{X}, Y \mid \tilde{U}, u^*, S=1)] + \varepsilon_{\tilde{X}, Y, \tilde{U}}
    \end{align*}
    Thus
    \begin{align*}
        R_Q &= \mathbb{E}_{P} 
        \left[
        \frac{\max_{u^* \in \mathcal{U}^*}p(\tilde{X}, Y \mid \tilde{U}, u^*, S=1) - \varepsilon_{\tilde{X}, Y, \tilde{U}}}{p(\tilde{X}, Y \mid \tilde{U}, S=1)} \cdot w(\tilde{U}) \cdot \ell (f_P(\tilde{X}), Y)\right] \\
        &= \mathbb{E}_{P} 
        \left[
        \frac{\max_{u^* \in \mathcal{U}^*}p(\tilde{X}, Y \mid \tilde{U}, u^*, S=1)}{p(\tilde{X}, Y \mid \tilde{U}, S=1)} \cdot w(\tilde{U}) \cdot \ell (f_P(\tilde{X}), Y)\right] \\
        &\quad\quad\quad\quad- \mathbb{E}_{P} 
        \left[
        \frac{\varepsilon_{\tilde{X}, Y, \tilde{U}}}{p(\tilde{X}, Y \mid \tilde{U}, S=1)} \cdot w(\tilde{U}) \cdot \ell (f_P(\tilde{X}), Y)\right] \\
        &\leq \mathbb{E}_{P} 
        \left[
        \frac{\max_{u^* \in \mathcal{U}^*}p(\tilde{X}, Y \mid \tilde{U}, u^*, S=1)}{p(\tilde{X}, Y \mid \tilde{U}, S=1)} \cdot w(\tilde{U}) \cdot \ell (f_P(\tilde{X}), Y)\right] \\
    \end{align*}
    since $p(\tilde{X}, Y \mid \tilde{U}, S=1) \geq 0$ and $\ell (f_P(\tilde{X}), Y) \geq 0$ from Assumption \ref{assump:pos-loss}. 
\end{proof}

\subsection{Bound decomposition}\label{appendix:proofs:decomp}

In Section \ref{sec:experiments:bound_props}, we proposed that the bound error could be decomposed as follows:
\begin{align*}
    \hat{R}_Q - R_Q &= \Delta_{\text{TBE}}
    + \Delta_{\text{CI}} 
    + \Delta_{\text{CS}}
\end{align*}
 where $\Delta_{\text{CS}}$ is the error contribution from violating the necessary assumption of \underline{C}ommon \underline{S}upport between $P$ and $Q$; and $\Delta_{\text{CI}}$ is the error contribution from violating the necessary assumption of \underline{C}onditional \underline{I}ndependence given $U$. In theory (i.e., under infinite samples), these two assumptions hold and $\Delta_{\text{CI}} = \Delta_{\text{cs}} = 0$, meaning $\hat{R}_Q - R_Q = \Delta_{\text{TBE}}$, where $\Delta_{\text{TBE}}$ is the \underline{T}heoretical \underline{B}ound \underline{E}rror when all assumptions hold. 
 
Here, we formally define each of the factors $\Delta_{\text{TBE}}$, $\Delta_{\text{CI}}$, and $\Delta_{\text{CS}}$. Notice that $\hat{R}_Q - R_Q$ can be written as the following telescoping sum:
\begin{align*}
    &\hat{R}_Q - R_Q \notag \\
    &= \mathbb{E}_P \left[
        \phi (\tilde{X}, Y, \mathcal{U}^*) \cdot w(\tilde{U}) \cdot \ell(f_P(\tilde{X}), Y) \right] \notag \\
    & - \mathbb{E}_P \left[ 
        \frac{\mathbb{E}_{Q_{U^* \mid \tilde{U}}} [p(\tilde{X}, Y \mid \tilde{U}, U^*, S = 1)]}
             {p(\tilde{X}, Y \mid \tilde{U}, S = 1)} 
        \cdot w(\tilde{U}) \cdot \ell(f_P(\tilde{X}), Y)  \right] \\
        \tag{$\Delta_{\text{TBE}}$} \\
    & + \mathbb{E}_P \left[ 
        \frac{\mathbb{E}_{Q_{U^* \mid \tilde{U}}}  [p(\tilde{X}, Y \mid \tilde{U}, U^*, S = 1)]}
             {p(\tilde{X}, Y \mid \tilde{U}, S = 1)} 
        \cdot w(\tilde{U}) \cdot \ell(f_P(\tilde{X}), Y)  \right] \notag \\
    & - \mathbb{E}_P \left[
        \frac{p(\tilde{X}, Y \mid \tilde{U})}
             {p(\tilde{X}, Y \mid \tilde{U}, S = 1)} 
        \cdot  w(\tilde{U}) \cdot \ell(f_P(\tilde{X}), Y)  \right] \\ 
        \tag{$\Delta_{\text{CI}}$} \\
    & + \mathbb{E}_P \left[
        \frac{p(\tilde{X}, Y \mid \tilde{U})}
             {p(\tilde{X}, Y \mid \tilde{U}, S = 1)} 
        \cdot  w(\tilde{U}) \cdot \ell(f_P(\tilde{X}), Y)  \right] \notag \\
    & - \mathbb{E}_Q \left[ \ell(f_P(\tilde{X}), Y) \right]
    \tag{$\Delta_{\text{CS}}$}
\end{align*}

Defining the first difference as $\Delta_{\text{TBE}}$, the second difference as $\Delta_{\text{CI}}$, and the third difference as $\Delta_{\text{CS}}$, we have the proposed decomposition. These definitions clarify how and why $\Delta_{\text{CI}}$ and $\Delta_{\text{CS}}$ reveal assumption violations. First, $\Delta_{\text{CI}}$ tests if conditional independence is violated by isolating the effect of changing ${E}_{Q_{U^* \mid \tilde{U}}} [p(\tilde{X}, Y \mid \tilde{U}, U^*, S = 1)]$ to ${E}_{Q_{U^* \mid \tilde{U}}} [p(\tilde{X}, Y \mid \tilde{U}, U^*)]  = p(\tilde{X}, Y \mid \tilde{U})$, where the two terms should be equal if conditional independence given $U$ holds. Second, $\Delta_{\text{CS}}$ measures violation of the assumption of common support between $P$ and $Q$ by characterizing the difference between the density ratio re-weighted expectation over $P$ and the unweighted expectation over $Q$. These expressions are equal provided $P$ and $Q$ share common support.

\section{Heuristic $U^*$ selection}\label{appendix:method-extensions:heuristic}

\subsection{Details on bootstrapped confidence intervals}
Recall the heuristic solves for 

\begin{align}\label{app:eq:heuristic}
    \sum_{i: S^{(i)} = 1}
    \frac{m(C^{(i)}, \ \tilde{U}^{(i)})}{g(\tilde{U}^{(i)}\omega  + C^{(i)}\gamma )}  
    = 
     \mathbb{E}_{\mathcal{D}_Q}[m(\tilde U, \ C)]
\end{align}

which computes point estimates for the $\gamma_j$ coefficients. To derive confidence intervals for the estimated $\hat{\gamma}_j$, we take a bootstrapping approach. Specifically, we resample the dataset $\mathcal{D}_P$ with replacement $B$ times. We then solve the above equation for each dataset sample, yielding the bootstrapped distribution $\hat{\gamma}^{(1)}_j, \hat{\gamma}^{(2)}_j, \dots, \hat{\gamma}^{(B)}_j$ for each $j$. We construct a 1-$\alpha$ level confidence interval as $(\hat{\gamma}_{{j}_{(\alpha/2)}}, \hat{\gamma}_{{j}_{(1 - \alpha/2)}})$, where $\hat{\gamma}_{{j}_{(\alpha/2)}}$ denotes the $\alpha/2$ percentile of the bootstrapped coefficients and $\hat{\gamma}_{{j}_{(1 - \alpha/2)}}$ denotes the $1 - \alpha/2$ percentile. In our experiments, we set $B = 1000$ iterations and $\alpha = 0.005$. We set $\alpha$ to be smaller than typically used for bootstrapped confidence intervals because the resulting conservative confidence intervals for $\hat{\gamma}_j$ help limit the number of false positives (variables mistakenly nominated as $U^*$).

\subsection{Extensions to higher dimensions}

Our heuristic's runtime might become infeasible under very high dimensions. Thus, we present a modification that, although an approximation, we found works well in practice. Intuitively, if we can choose a subset of $C: =(X, U^*)$ that are the most likely $U^*$ candidates, we reduce the effective search dimension and decrease runtime.

To narrow down the search space, note a sufficient but not necessary condition of selection variables $U$ is nonzero ``effect size" when the selection indicator $S$ is viewed as a treatment and $U$ is viewed as an outcome. Using our summary statistics, we calculate the (linear) effect size for all variables $V \in (X,U^*)$ via Cohen's $d$ statistic with pooled standard deviation: 
$$d(V) = \frac{|\mu_P(V) - \mu_Q(V)|}{\sigma_{\text{pooled}}}$$

where $N = |\mathcal{D}_P|$, $M =  |\mathcal{D}_Q|$, and $$\sigma_{\text{pooled}} :=\sqrt{\frac{(N - 1) \cdot \sigma^2_P(V) + (M - 1) \cdot \sigma^2_Q(V)}{N + M - 2}}$$ 

We then choose as the candidate set $\tilde{C}$ all variables $V$ with a nonzero effect size, i.e., one that is greater than some small $\delta$: 
 $$\tilde{C} = \{V: V \in (X, U^*) \text{ and } d(V) \geq \delta\}$$ In our experiments, we chose $\delta = 0.05$.
\section{Data}\label{appendix:data}
\subsection{Fully synthetic data}\label{appendix:experiments:synth}

\subsubsection{Data-generating process for synthetic data}\label{appendix:dgp_synth}

In the \textbf{binary data setting}, we generated $U$ as (potentially correlated) Bernoulli random variables. Details on how to generate binary variables with a target correlation $\rho$ are discussed below. We then generated $\tilde{X}$ and $Y$ as Bernoulli random variables parameterized by $p_{\tilde{X}}(U)$ and $p_Y(U)$, respectively, 
\begin{align*} 
    p_{\tilde{X}}(U) &:= p(\tilde{X} = 1 \mid U) = \frac{1}{1 + \exp (\phi^{(d_X)}(U){\beta}_{U\rightarrow \tilde{X}} + \beta_{0, \tilde{X}} + \varepsilon_{\tilde{X}})}  \\
    \tilde{X} &\sim \text{Bernoulli}(p_{\tilde{X}}(U)) \\
    p_Y(U) &:= p(Y = 1 \mid \tilde{X},U) = \frac{1}{1 + \exp(\phi^{(d_Y)}(U, \tilde{X}){\beta}_{U\tilde{X} \rightarrow Y} + \beta_{0, Y} + \varepsilon_Y) } \\
    Y &\sim \text{Bernoulli}(p_{Y}(U))
\end{align*}
where $\beta_{(\cdot \rightarrow \cdot)}$ are chosen coefficient vectors and $\beta_{0, \tilde{X}}$, $\beta_{0, Y}$ are chosen intercepts (see further details below), $\varepsilon_{\tilde{X}} \sim \mathcal{N}(0, \sigma_{\tilde{X}}^2)$, $\varepsilon_{Y} \sim \mathcal{N}(0, \sigma_Y^2)$, and $\phi^{(d_{\tilde{X}})}, \phi^{(d_Y)}$ are degree $d_{\tilde{X}}$ and $d_Y$ polynomial expansions, respectively. We partitioned the remaining $X \setminus \tilde{X}$ into two sets, $X_C$ and $X_U$, where $X_C$ were generated as independent Bernoulli variables and $X_U$ were generated in the same manner as $\tilde{X}$, but were not included as inputs to the data-generating process for $Y$. The relationship between the variables $U$, $\tilde{X}$, $X_C$, $X_U$, and $Y$ are summarized in the graphical model in Figure \ref{appendix:fig:dag}.

In the \textbf{continuous data setting}, we generated $U$ as either normal random variables, $U \sim N(0, \mathbf{\Sigma})$, or as uniform random variables $U \sim \text{Unif}(0, 1)$. We then generated $\tilde{X}$ as a linear function of $U$
\begin{align*}
    \tilde{X} &= U{\beta}_{U\rightarrow \tilde{X}} + \beta_{U} + \varepsilon_{{\tilde{X}}}
\end{align*}
We kept $Y$ as a binary variable, generated in the same manner as described above.

\subsubsection{Coefficient sampling and scaling} We sample the selection coefficient vectors $\beta_{U \rightarrow S}$, $\beta_{0, S}$ as described above, and $\beta_{U \rightarrow \tilde{X}}, \beta_{0, {\tilde{X}}}, \beta_{U \rightarrow Y}, \beta_{\tilde{X} \rightarrow Y}$, $\beta_{0, Y}$ are sampled in a similar fashion. Explicitly, first we sample
$\beta_{(\cdot \rightarrow \cdot)} \sim N(\mathbf{0}, \mathbf{I})$ and $\beta_{0, {\tilde{X}}} \sim N(0, 1), \beta_{0, Y} \sim N(0, 1)$. For binary variables, we then compute the resulting logit standard deviation: 
\begin{align*}
\sigma_{\text{logit}(Y)} &:= \text{stddev}(\phi^{(d_Y)}(U, {\tilde{X}}){\beta}_{U\tilde{X} \rightarrow Y} + \beta_{0, Y} + \varepsilon_Y) \\
\sigma_{\text{logit}({\tilde{X}})} &:= \text{stddev}(\phi^{(d_{\tilde{X}})}(U){\beta}_{U\rightarrow \tilde{X}} + \beta_{0, {\tilde{X}}} + \varepsilon_{{\tilde{X}}})
\end{align*}

Then, we scale the weights based on user-specified target standard deviations  $\sigma^*_{\text{logit}({\tilde{X}})}$ and  $\sigma^*_{\text{logit}(Y)}$:
    \begin{align*}
        (\beta_{U \rightarrow \tilde{X}}, \beta_{0, {\tilde{X}}}) 
        & \leftarrow 
        (\frac{\beta_{U \rightarrow \tilde{X}} \cdot \sigma^*_{\text{logit}({\tilde{X}})}}
        {\sigma_{\text{logit}({\tilde{X}})}},
        \frac{\beta_{0, {\tilde{X}}} \cdot \sigma^*_{\text{logit}({\tilde{X}})}}
        {\sigma_{\text{logit}({\tilde{X}})}}
        ) \\
        (\beta_{U\tilde{X} \rightarrow Y}, \beta_{0, Y})
        & \leftarrow
        (\frac{\beta_{U\tilde{X} \rightarrow Y}\cdot \sigma^*_{\text{logit}(Y)}}
        {\sigma_{\text{logit}(Y)}}, 
        \frac{\beta_{0, Y} \cdot \sigma^*_{\text{logit}(Y)}}
        {\sigma_{\text{logit}(Y)}}
        ) 
    \end{align*}

\begin{figure}
    \centering
    \includegraphics[width=0.5\linewidth]{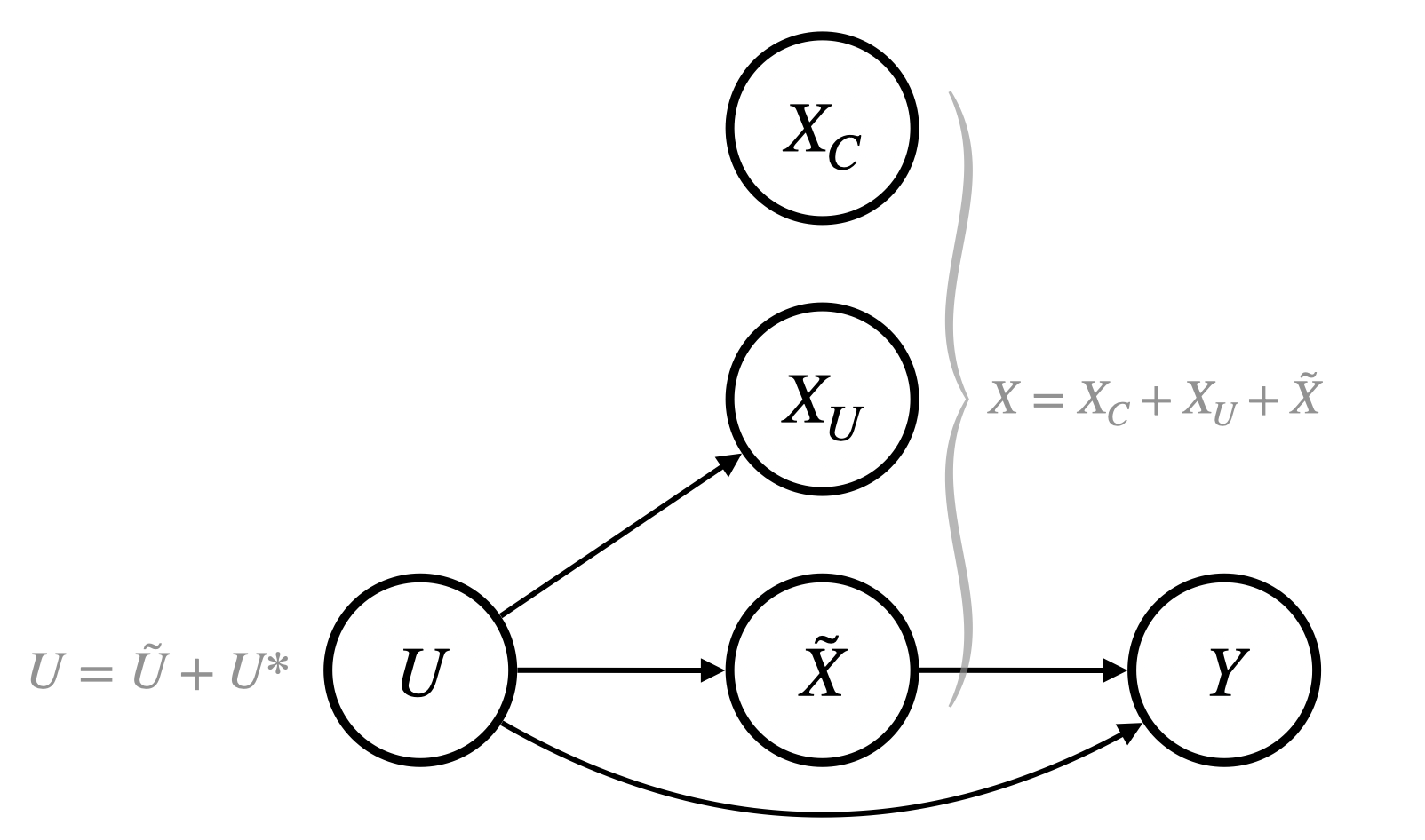}
    \caption{Graphical model depicting relationship between variables $U, X=\tilde{X} +  X_{U} + X$, and $Y$ in our synthetic data experiments.}
    \label{appendix:fig:dag}
\end{figure}

\subsubsection{Generating correlated binary variables, $U$} Our goal is to define a data-generating process for a set of $k$ binary random variables $U_1, U_2, \dots U_k$, such that these random variables satisfy two conditions:
\begin{enumerate}[noitemsep]
    \item For a specified vector of probabilities, $\mathbf{p} = \begin{pmatrix}p_1 & p_2 & \dots & p_k\end{pmatrix}$ it holds that $p(U_j) = p_j \text{ for all } j = 1, \dots k.$
    \item For a specified $k\times k$ correlation matrix $R$, where $R_{ij} \in [-1, 1]$ for all $i \neq j$ and $R_{ij} = 1$ for all $i = j$, $\text{Corr}(U_i, U_j) = R_{ij}$. 
\end{enumerate}
We define a data-generating process that consists of two steps. First, we sample from a multivariate normal distribution with mean zero and covariance matrix $\Sigma$, where $\Sigma$ is a function of the desired correlation matrix $R$ and the vector of probabilities $\mathbf{p}$. Second, we define thresholds based on the desired vector of probabilities $\mathbf{p}$, then apply these thresholds to the sampled normally distributed variables to transform them to binary Bernoulli random variables. 

Setting the thresholds is straightforward. Let $Z_i$ denote an intermediate variable sampled from the zero-mean multivariate normal distribution. We map $Z_i$ to the binary variable $U_i$ using threshold $t_i$, where $U_i = {1}\{Z_i < t_{i}\}$. By setting $t_i = \text{Quantile}(Z_i, p_i)$, it holds that $P(U_i = 1) = p_i$, as desired. 

Sampling $Z_i$ such that the variables $U$ have correlation matrix $R$ is slightly trickier. First, by definition of correlation, we can write $\text{corr}(U_i, U_j)$ as 
\begin{align*}
    \text{corr}(U_i, U_j)
    &= \frac{\mathbb{E}[U_iU_j] - \mathbb{E}[U_i]\mathbb{E}[U_j]}{\sqrt{\text{Var}(U_i) \text{Var}(U_j})} \\
    &= \frac{\mathbb{E}[U_iU_j] - p_ip_j}{\sqrt{p_i(1 - p_i)p_j(1 - p_j)}} \\
    &= \frac{P(U_i = 1, U_j = 1) - p_ip_j}{\sqrt{p_i(1 - p_i)p_j(1 - p_j)}} 
\end{align*}
where $p_i$, $p_j$ are elements of the probability vector $\mathbf{p}$. Using the fact that $U_i, U_j$ are sampled by thresholding $Z_i$, $Z_j$, respectively, we have that $P(U_i = 1, U_j = 1) = P(Z_i < t_i, Z_j< t_j)$. Let $\Sigma_{ij}$ denote the row $i$, column $j$ element of $\Sigma$, so that $\Sigma_{ij} = \text{Cov}(Z_i, Z_j)$. The value of $\Sigma_{ij}$ fully determines the value of $P(Z_i < t_i, Z_j < t_j)$, since $Z_i$ and $Z_j$ are both zero-mean. We want to solve for $\Sigma_{ij}$ such that, by plugging the resulting value of $P(Z_i < t_i, Z_j < t_j)$ into the expression for $\text{corr}(U_i, U_j)$, we obtain the desired value for the correlation, $R_{ij}$. Define the ``error" function $f(\Sigma_{ij}) = \text{corr}(U_i, U_j) - R_{ij}$. The roots of $f$ are then precisely the values of $\Sigma_{ij}$ that produce the desired correlation between $U_i$ and $U_j$. To solve for the roots of $f$, we apply Brent's method \citep{brent} for root-finding over the interval $[-1, 1]$.

\subsubsection{Parameter settings}
In Table \ref{appendix:tab:synth-params-binary}, we show the parameter settings used in our synthetic data experiments for both the binary and continuous data settings.

\begin{table}[ht!]
    \centering
    \resizebox{\textwidth}{!}{
    \begin{tabular}{llll}
    \toprule
    Parameter & Description & \makecell[l]{Value \\ (Binary)} & \makecell[l]{Value \\(Continuous)} \\
    \midrule
    dim($U$) & dimension of $U$ & 5 & 4 \\
    $M$ & dimension of $\mathcal{D}_Q$ & 10000 & 10000 \\
    dim($X$) & dimension of $X$ & 13 & 12 \\
    dim($\tilde{X}$) & dimension of $\tilde{X}$ & 5 & 4 \\
    dim($X_C$) & dimension of $X_C$  & 6 & 4\\
    dim($X_U$) & dimension of $X_U$ & 2 & 4\\
    $d_{{\tilde{X}}}$ & poly. degree of $U$ used to generate ${\tilde{X}}$ & 2 & 1 \\
    $d_{Y}$ & poly. degree of $X, U$ used to generate $Y$ & 2 & 1 \\
    $d_{S}$ & poly. degree of $U$ used to generate $S$ & \{1, 2\} & \{1, 2\} \\
    $t_R$ & threshold on $R_Q - R_P$ used for $\beta_{(\cdot)}$ search & 0.03 & 0.03 \\
   $\sigma^*_{\text{logit}({\tilde{X}})}, \sigma^*_{\text{logit}(Y)}, \sigma^*_{\text{logit}(S)}$ & target std. dev. of ${\tilde{X}}$, $Y$, $S$ logits  & 2 & 2 \\
   $p_j$ & Bernoulli parameter for $U_j$, i.e. $p(U_j = 1)$ & $\sim \text{Unif}(0.2, 0.8)$ & --- \\
    $R_{ij}$ & correlation between variables $U_i$, $U_j$, $i \neq j$ & 0.1 & 0.1*\\
    $\sigma^2_{\tilde{X}}, \sigma^2_Y$ & variance of error terms $\varepsilon_{\tilde{X}}$, $\varepsilon_Y$ & 0.1 & 0.1\\
    $f_P$ & prediction model & XGBoost with 5-fold CV & Logistic regression with 5-fold CV \\

    \bottomrule
    \end{tabular}
    }
    \caption{Parameter settings for synthetic data experiments with binary and continuous data. Values enclosed in \{ \} indicates experiments were run with multiple settings for that parameter. XGBoost was implemented using \texttt{xgboost.XGBClassifer} and logistic regression was implemented using \texttt{sklearn.linear\_model.LogisticRegressionCV}. \\ {\small {*For $U \sim (\mathbf{0}, \mathbf{\Sigma})$}. For $U \sim \text{Uniform}(0, 1)$, the variables are $\mathrm{iid}$, so $R_{ij} = 0$. } }
    \label{appendix:tab:synth-params-binary}
\end{table}

\subsection{Additional details on All of Us data}\label{appendix:aou_data}
 As a broadly representative cohort with rich sociodemographic features, All of Us is well-suited for simulating selection bias and serves as a practical proxy for the target distribution.
 
\subsubsection{Data preprocessing}
Of the 628,664 All of Us participants, we remove participants who were missing medical records or sociodemographic information. First, we remove around 7,600 patients whose sociodemographic information included patient responses of "Skip" or "No matching concept". 
 We then filter all patients that had no recorded physical measurements (which were recorded once at participant registration), as well as participants who had no linked external medical records. In total, we collect four groups of variables: 4 mixed-type static sociodemographic variables collected at registration; 6 continuous physical measurements collected at registration; 11 survey questions from registration that largely represent lifestyle and social determinants of health; and 19 health outcomes as grouped OMOP codes from linked medical records. Variables were selected to have reasonable rates of missingness (<50\%)\footnote{Note that while the continuous variables were selected such that they are never missing, for the remaining variables we implicitly encoded missingness as another binary variable} and how relevant they are in predicting patient health. More information regarding All of Us data collection is available on the public data browser: \url{https://databrowser.researchallofus.org/}. 
 
 All OMOP health diagnoses were temporally aggregated into a single binary variable representing if the participant had every received the diagnosis. After categorical and continuous variables were binarized (continuous variables were transformed using quantile-based discretization with 5 buckets), we had a total dataset of $\dim(X) = 81$ variables. We list all variables (in non-binarized form) below: 

\textbf{Sociodemographic variables: }
\begin{itemize}[noitemsep]
    \item Race (merged with ethnicity)\footnote{For race, we created six categories: Asian, Black or African American, Mixed, White, and Hispanic or Latino. While these simple "racial categories" are often used for problematic social grouping and lack broader cultural and genetic contexts, for the purpose of this project the categories sufficed, and a deeper investigation into computational analysis beyond these groupings should be investigated.}
    \item Age
    \item Gender
    \item Sex
\end{itemize}
\textbf{Physical measurements: }
\begin{itemize}[noitemsep]
    \item Height
    \item Weight 
    \item Body mass index (BMI)
    \item Heart rate, mean of 2nd and 3rd measures
    \item Systolic blood pressure, mean of 2nd and 3rd measures
    \item Diastolic blood pressure, mean of 2nd and 3rd measures
\end{itemize}
\textbf{Survey questions: }
\begin{itemize}[noitemsep]
    \item Annual income 
    \item Self-reported health 
    \item Self-reported disability
    \item Cigarette consumption
    \item Has health insurance 
    \item Education level
    \item Alcoholic drink frequency
    \item Self-reported confidence filling out medical forms 
    \item Stable housing 
    \item Smokeless tobacco consumption 
    \item Employment status
    \item Primary language spoken$^*$
    \item Type of health insurance$^*$
    \item Marital status$^*$
\end{itemize}
where $^*$ denotes variables that were only used in the MIMIC-IV experiments. 

\textbf{Health conditions, based on OMOP code groups: }
\begin{itemize}[noitemsep]
    \item Type 2 diabetes mellitus
    \item Sleep apnea
    \item Hypertensive disorder
    \item Asthma
    \item Alcohol related disorders
    \item Multiple sclerosis
    \item Alzheimer's disease
    \item Chronic cardiovascular disease
    \item Fibromyalgia
    \item Chronic heart disease
    \item Endometriosis
    \item Chronic pain
    \item Human immunodeficiency virus (HIV)
    \item Drug dependence
    \item Substance abuse
    \item Depressive disorder
     \item Anxiety disorder
 \item Maternal / fetal condition affecting labor / delivery       
 \item Rheumatoid arthritis   
\end{itemize}

\subsubsection{Simulating EHR-specific selection bias}

We simulated EHR-specific selection bias by choosing 10 potential selection variables  as drawn from literature \citep{ehrbias1, ehrbias2, ehrbias3, ehrbias4} : race\footnote{Again, we recognize that race is proxy for many unobserved features that actually drive selection, such as historical trust in medical institutions.}, self-reported medical confidence, education level, annual income, employment status, stable housing, health insurance, self-reported health, and age. For nonlinear selection mechanisms, we randomly selected a subset of dim$(U)$ variables, with a preference for selecting on age given evidence of hospitals over-representing older adults (as a proxy for increased healthcare needs) \citep{kim2024bias}. The weights in the nonlinear selection mechanism (a degree-2 polynomial) were randomly sampled and appropriately scaled as detailed in the fully synthetic case. For the linear selection mechanism, we again randomly sampled selection variables but fixed each variable's corresponding weight to match the expected (univariate) effect of selection bias on the observed data. For example, the binary variables \texttt{has\_health\_insurance} and \texttt{has\_stable\_housing} had $\beta=1$ (so patients with these variables were oversampled in $P$), whereas \texttt{has\_low\_medical\_confidence} had $\beta=-1$ (so patients with low confidence were undersampled in $P$).

\subsubsection{Task selection}

We chose to predict patient diagnosis from the known comorbidity, lifestyle, and social determinants of health variables. Explicitly, each prediction task selected $Y$ from the 19 health outcomes coded by OMOP codes and the covariates $\tilde{X}$ as a random subset of the remaining covariates. Tasks were selected similarly to that described in Algorithm \ref{alg:synth_sel} such that $R_Q - R_P > t_R$ for some threshold $t_R$.

For the linear selection mechanism, we manually designed realistic prediction tasks, selecting $\dim(U)=5$ selection variables with the corresponding EHR-specific selection weights, health outcome $Y$, and covariates $\tilde{X}$ focused on lifestyle and physical measurements. For both selection mechanisms we followed the data-driven search as outlined in Algorithm \ref{alg:synth_sel} to select prediction tasks (minus weight sampling for the linear selection mechanism). For each of the $30$ tasks, we repeated over $20$ seeds where we resampled the data with each seed. Explicitly, we only selected tasks $(\tilde{X}, Y)$ and selection mechanism $g(U\beta)$ where the logloss generalization gap was at least 0.03. Both mechanisms learned $f_P$ using \texttt{sklearn.linear\_model.LogisticRegressionCV} with class-balancing weights. 

\subsection{Additional details on MIMIC-IV data}\label{appendix:mimic_data}

\subsubsection{Data preprocessing}
MIMIC-IV data \citep{johnson2023mimiciv, johnson2024mimiciv, goldberger2000physionet} is made publicly available for research and educational purposes under PhysioNet Credentialed Health Data License 1.5.0. To help preprocess the data, we used the MIMIC-IV Data Pipeline courtesy of \cite{mimicCode}. Our work used hospital data pulled from MIMIC-IV version 3.1. Outlier data was imputed for continuous values at the 1st and 99th percentile, and time-series data was compressed to static.  

Variable selection from MIMIC-IV was first driven to select overlapping variables with All of Us, as well as the variables with highest observation frequency:

\textbf{Sociodemographic variables:}
\begin{itemize}[noitemsep]
    \item Age 
    \item Gender 
    \item Race (combined with ethnicity)
    \item Type of health insurance
    \item Marital status
    \item English as primary language
\end{itemize}
\textbf{Physical measurements:}
\begin{itemize}[noitemsep]
    \item Height 
    \item Weight
    \item Systolic blood pressure 
    \item Diastolic blood pressure 
\end{itemize}

\textbf{Lab measurements:}
\begin{itemize}[noitemsep]
    \item Potassium levels$^*$
    \item Anion gap$^*$
    \item Sodium levels$^*$
    \item Urea nitrate levels$^*$
    \item Platelet count$^*$
    \item Hematocrit levels$^*$
    \item Bicarbonate levels$^*$
    \item Glucose levels$^*$
    \item Chloride levels$^*$
    \item Creatine levels$^*$
\end{itemize}
where $^*$ denotes values that we observe in MIMIC-IV but not All of Us\footnote{These values exist in All of Us but were not collected given they are highly unlikely to be root drivers of selection.}, and thus are non-candidates for potential $U$. 

\textbf{Health conditions, based on ICD10 code groups: }
\begin{itemize}[noitemsep]
    \item Type 2 diabetes mellitus
    \item Hypertensive disorder
    \item Alcohol related disorders
    \item Multiple sclerosis
    \item Alzheimer's disease
    \item Chronic cardiovascular disease
    \item Fibromyalgia
    \item Chronic heart disease
    \item Endometriosis
    \item Chronic pain
    \item Human immunodeficiency virus (HIV)
    \item Depressive disorder
     \item Anxiety disorder
 \item Rheumatoid arthritis   
    \item Mortality within the next 8 hours$^*$

\end{itemize}

Next, we had to align variables across the two datasets. Specific details for mapping categories across health insurance, gender, race, and primary language variables are available upon request. To prevent overlap issues, continuous variables were clipped to the maximum and minimum of that variable in All of Us. Continuous variables were separately binned across datasets to 3 equal sized bins. We also removed categorical classes that had negligible frequency in MIMIC-IV. Finally, since diagnoses are coded using ICD10CM in MIMIC-IV and using OMOP in All of Us, we mapped OMOP codes defining the above groups to the ICD ontology using the \texttt{ancestor} table in OMOP.  

\subsubsection{Selection mechanism}
Although the true selection mechanism is unknown, we hypothesize that selection from the more representative All of Us population into MIMIC's Beth Israel Deaconess Medical Center population are likely driven by age, insurance type, or primary language (as a proxy variable for medical trust), which are fully observed in both datasets and thus serve as the potential set of selection variables $\tilde{U}$. 
\subsubsection{Task selection}

We run two types of prediction task experiments using MIMIC-IV. In the first type of task, we can leverage our oracle knowledge of $Q$ samples provided by the All of Us data. If the covariates $\tilde{X}$ and outcome $Y$ are fully observed in both datasets, then we can actually compare our bound estimate $\hat{R}_Q$ to the true $R_Q$. In the second type of task, we test the more realistic usage of our method where we extrapolate to prediction tasks that involve covariates and/or health outcomes we observe only in $P$ and not in $Q$. 

To test the accuracy of our estimate $\hat{R}_Q$ relative to the true $R_Q$, we randomly select a subset of binary variables $\dim(\tilde{U}) \in [2,4]$ from the three aforementioned categorical variables; $\dim(\tilde{X})=5$ covariates that are observed in both datasets; and a health condition $Y$ that is observed in both datasets. We selected two tasks such that the logloss generalization gap is at least 0.03. Each task is repeated for 10 seeds by refitting a \texttt{sklearn.linear\_model.LogisticRegressionCV} with class-balancing weights as the prediction model $f_P.$

For the second type of prediction task, we extend our method to its full intent: estimating the generalization gap under limited $Q$ observability. Specifically, we set the outcome variable to be the $Y=$Mortality (within 8 hours), and we include as covariates several lab observations that are not present in the All of Us dataset. For this type of task, we randomly selected $\dim(\tilde{X})=8$ covariates and $\dim(\tilde{U})=4$ observed selection variables.


We present the full variable details for these tasks in Table \ref{table:mimic_detailed}.

\begin{table}[htbp]
\centering
\renewcommand{\arraystretch}{1.3} 
\begin{adjustbox}{max width=\textwidth}
\begin{tabular}{@{}l p{0.3\textwidth} p{0.3\textwidth} p{0.3\textwidth}}
\toprule
& \textbf{Y = Hypertension} & \textbf{Y = T2DM} & \textbf{Y = Mortality} \\
\midrule
\textbf{Covariates $\tilde{X}$} & Diastolic blood pressure (1), comorbidities (substance abuse, Type II diabetes mellitus, Alzheimer's), gender & Marital status, race, primary language, comorbidity (Anxiety disorder) & Weight (2), comorbidites (Type II diabetes mellitus, endometriosis, hypertension), platelet counts$^*$  (1), insurance type (1), bicarbonate levels$^*$ (1), systolic blood pressure (1) \\
\textbf{Target $Y$}   &  Hypertensive disorder &  Type II diabetes mellitus & Mortality within the next 8 hours$^*$      \\
\textbf{Known $\tilde{U}$}          & Age (2), English as primary language & Age (1), English as primary language, insurance type (1) & Age (1), insurance type (3)        \\
\textbf{Predicted $U^*$}     & Comorbidities, diastolic blood pressure (1), systolic blood pressure (1), insurance type (3), race (3), marital status  (2)   &  Comorbidities, age (1), diastolic blood pressure (1), systolic blood pressure (1), insurance type (2), race (2), marital status (2)  & Comorbidities, age (1), diastolic blood pressure (1), insurance type (1), race (3), marital status (2)      \\
\bottomrule
\end{tabular}
\end{adjustbox}
\caption{Details for the three real-world experiments in Figure \ref{fig:mimic} that compared generalization prediction performance of MIMIC-IV as the observed $P$ to All of Us as the target $Q$. The parentheses and number $(\cdot)$ denote the number of binary variables chosen for that continuous or categorical variable. $^*$ denotes variables unobserved in All of Us.}\label{table:mimic_detailed}
\end{table}

\subsection{Simulating selection} We present the general algorithm for our synthetic selection experiments in Algorithm \ref{alg:synth_sel} as run in the fully synthetic and All of Us experiments. At a high level, we first construct an ``oracle" target dataset $\mathcal{D}^*_Q$ containing all of the variables of interest: $X, Y, U, S$. This lets us compute the ground truth $R_Q$, which serves as a point of reference and sanity check against our estimated bound, $\hat{R}_Q$. When we compute our bound, we rely on only a partially observed target dataset, $\mathcal{D}_Q$, derived from the oracle dataset by excluding $X, U^*, Y$ samples, and summary statistics for all variables. As outlined in Section \ref{sec:experiments:sel-pred-mdl}, the biased dataset, $\mathcal{D}_P$, is constructed by sampling the points from $\mathcal{D}^*_Q$ where the selection indicator $S = 1$.

To generate the selection indicator $S$, we sample $S \sim \text{Bernoulli}(1/(1+\text{exp}(-g(U\beta_S)))$, as per the model introduced in Section \ref{sec:experiments:sel-pred-mdl}. We define the link function $g(U\beta_S) = \phi^{(d_S)}(U)\beta_{U \rightarrow S} + \beta_{0, S}$ where $\phi^{(d_S)}$ is the degree $d_S$ polynomial expansion of $U$. Thus, our \textbf{linear selection} experiments simply uses a degree 1 polynomial and \textbf{nonlinear selection} a degree 2 polynomial. 

We want to choose selection coefficients $\beta_S$ such that the result of applying the coefficients has a distribution invariant to the number of features. Thus, after we first sample initial values for $\beta_S$, we then normalize the $\beta_S$ vector to achieve a specific target standard deviation. Consider the desired polynomial expansion $\phi^{(d_S)}$ which transforms $U$ to some $p$ features, and a normally sampled $\beta_{U \rightarrow S} \sim N(\textbf{0, I}_p)$. Let the resulting standard deviation of the logits of $S$ be defined as 
\begin{align*}
    \sigma_{\text{logit}(S)} := \text{stddev}(g(U\beta_{U \rightarrow S})) = \text{stddev}(\phi^{(d_S)}(U)\beta_{U \rightarrow S} + \beta_{0, S})
\end{align*}
Given a target standard deviation $ \sigma^*_{\text{logit}(S)}$, we then scale the weights to achieve the desired standard deviation regardless of dim($U$): 
\begin{align*}
(\beta_{U \rightarrow S}, \beta_{0, S}) 
        &\leftarrow
        (
        \frac{\beta_{U \rightarrow S} \cdot \sigma^*_{\text{logit}(S)}}
        { \sigma_{\text{logit}(S)}},
        \frac{\beta_{0, S}\cdot \sigma^*_{\text{logit}(S)}}{ \sigma_{\text{logit}(S)}}
        )
\end{align*}
As described in Section \ref{appendix:dgp_synth}, we apply a similar sampling and scaling procedure for the $\beta$ coefficients that are used to generate $U$, $X$, and $Y$.

\begin{algorithm}[t]
\caption{General Algorithm for Synthetic Selection}
\label{alg:synthetic-selection}
\begin{algorithmic}[1]
\Require dim($\tilde{X}$), dim($U$), dim($\tilde{U}$), logistic link function for selection model $g$, gap threshold $\mathsf{thresh}$, prediction model $f_P$, number of prediction tasks $n_{\mathrm{tasks}}$, number of seeds $n_{\mathrm{seeds}}$,  \texttt{mode} $\in \{\texttt{synthetic},\texttt{real}\}$, params for Algorithm~1; 

\textbf{if mode==synthetic:} params for generating $(X,Y,U)$;

\textbf{if mode==real:} variable lists for potential $(\tilde{X},U,Y)$, access to $\mathcal{D}^*_Q$.
\Ensure $\hat{R}_Q, R_Q, R_P$ across $n_{\text{tasks}}$ and $n_{\text{seeds}}$

\Statex \Comment{Iterate over prediction tasks}
\For{$t \gets 1$ to $n_{\mathrm{tasks}}$}

\Statex \Comment{Step 0: Construct and sample $(\tilde{X}, Y, U) \sim Q$}
\If{\texttt{mode} = \texttt{synthetic}}
  \State Generate $(\tilde{X},Y,U, X-\tilde{X}) \sim Q$ per (Appendix \ref{appendix:dgp_synth});
\Else
  \State Sample from input variable lists and then $\mathcal{D}^*_Q$ to construct $(\tilde{X},U,Y) \sim Q$
\EndIf

\State Sample and scale coefficients $\beta_S :=(\beta_{U\rightarrow S}, \beta_{0, S})$ for the selection model as described above

\Statex \Comment{Iterate over seeds}
\For{$s \gets 1$ to $n_{\mathrm{seeds}}$}

  \Statex \Comment{Step 2: Induce selection to form $\mathcal{D}_P$}
  \State Sample $S \sim \text{Bernoulli}(1/(1+\text{exp}(-g(U\beta_S)))$, where $g$ may be \textit{linear} or \textit{nonlinear}.
  \State Draw the selection biased sample $\mathcal{D}_P = \{(X^{(i)}, U^{(i)}, Y^{(i)}): S^{(i)} = 1\}$.

  \Statex \Comment{Step 3: Get $R_Q$, $R_P$}
  \State Fit $f_P:\tilde{X}\!\to\!Y$ on $\mathcal{D}_P$ to obtain $R_P$ 
  \State Apply $f_P$ to $\mathcal{D}^*_Q$ to obtain $R_Q$

  \Statex \Comment{Step 4: Test if exceeds threshold}
  \If{$R_Q - R_P > \mathsf{thresh}$}
    \State Proceed with bound estimation
  \Else
    \State \textbf{continue} \Comment{Reject and move to next seed}
  \EndIf

  \Statex \Comment{Step 5: Compute bound estimate $\hat{R}_Q$}
  \State Select observed $\tilde{U} \subset U$
  \State Run \textbf{Algorithm} \ref{alg} using learned $f_P$, $\mathcal{D}_P$, $\mathcal{D}_Q$

  \State Save $\hat{R}_Q, R_Q, R_P$ for seed $s$ and task $t$.
\EndFor
\EndFor
\State \Return $\hat{R}_Q, R_Q, R_P$ across $n_{\text{tasks}}$ and $n_{\text{seeds}}$
\end{algorithmic}\label{alg:synth_sel}
\end{algorithm}

\section{Additional experimental setup }

\subsection{Testing for assumption violations in practice}\label{appendix:ass_viol_practice}

We note that deriving a statistical test for conditional independence or common support is challenging even in the fully observed setting \cite{shah2020hardness,gretton2012kernel}, in contrast to our setting of partial observability (Constraints \ref{ass:no_samples} - \ref{ass:mu_sigma}) where it is impossible to prove these assumptions. While statistically rigorous tests for these assumptions would be broadly useful and certainly warrant further investigation, it is out of scope for this work. Instead, we provide approximate diagnostics that are practical, straightforward, and tractable for practitioners to assess for conditional independence and common support in real-world settings.

\subsubsection{Common Support Diagnostic: KS Test}
Testing positivity via propensity score distributions has its origins in causal inference literature \cite{stuart2013prognostic,petersen2012diagnosing}, where differences in covariate propensity between treated and untreated groups are often evaluated using t-tests or Kolmogorov–Smirnov (KS) tests. In our setting, where "treatment" is selection $S$ into the biased distribution $P$, testing for common support of the true propensity $p(S=1 \mid U)$ is ideal but unrealistic given $U$ is unknown and unobserved in $Q$. As a result, we employ the next-best option where we compare the propensity distributions of $P$ and $Q$ using the observed selection variables $\tilde{U}$ via a KS test. Consequently, the richer the set of $\tilde{U}$ observed in the target dataset $\mathcal{D}_Q$, the more valid the test for assessing common support. Conditioned on user’s assessment of the target dataset quality (i.e., how many features in $\mathcal{D}_Q$ match the expected factors causing selection?), statistical validity of this test can be assessed based on the resulting p-value. Although we employ a KS test, other statistical tests could work (i.e., a t-test).
\subsubsection{Common Support Diagnostic: Weight Design Effect} The success of importance weighting methods such as moment-matching is often evaluated by analyzing the variance of the resulting weights \cite{ipw3,markoulidakis2023tutorial,chatrchi2015impact}. While not formal hypothesis tests with p-values, metrics like design effect ($d_{\text{eff}}$, i.e., using \cite{kish1992weighting}) and effective sample size (i.e., ESS = $n / d_{\text{eff}},$ where $n$ is the original sample size) can diagnose unstable weights and indicate lack of common support that underlies the weighting objective (i.e., \citet{cole2008constructing} applied similar logic to propensity weights). A common practice considers weights unstable when $ESS \ll \rho \cdot n$ or equivalently $d_{\text{eff}} \gg 1/\rho,$ where $\rho \in [0,1]$ is the user’s level of tolerance. In our setting, for example, a more conservative user could determine that approximate common support between $P$ and $Q$ occurs when $ESS \ll 0.1 \cdot |\mathcal{D}_P|$, where ESS is computed from the moment-matching weights $1/g(\tilde{U} \omega  + C \gamma)$. We acknowledge that while common support could be one reason the moment-matching method fails, there could be other reasons such as model misspecification.

 \subsubsection{Conditional independence diagnostic: Propensity Invariance} To the best of our knowledge, we propose a novel test for approximate conditional independence by checking for equality across $$p(V\mid \hat{U},\hat{S}=s_1) = p(V\mid \hat{U},\hat{S}=s_2);   \ \ \forall s_1,s_2\in [0,1], \ \ \forall V\in (X,Y)$$ where $\hat{S}$ is defined as the observable propensity $p(S=1 | \tilde{U})$ and $\hat{U} := (\tilde{U} , \hat{U}^*)$ is our predicted $U$ from our moment-matching method. To implement this test, samples from $\mathcal{D}_P$ are partitioned based on their propensity value falling into discrete bins $\hat{S} \in [\alpha, \beta]$. For each propensity set, we train a classifier to learn $p(V \mid \hat{U}, \hat{S} \in [\alpha, \beta])$ and then evaluate the classifier’s performance on all other propensity sets. Similar performance across all propensity sets and for all $V \in (X,Y)$ indicate greater confidence in the conditional independence assumption. To compress this test into a single metric, we compute the variance of a classifier’s logloss performance for all propensity sets, and report the average variances across all classifiers and all $V \in (X,Y)$. As with the tests above, the validity of this method depends on several unknowns: first, the accuracy of $\hat{U}$ as $U$; secondly the proximity of the observed propensity $p(S=1 \mid \tilde{U})$ to the true propensity given $U$.

\subsubsection{Experimental Setup}
We describe the experiment reported in Table \ref{tab:violation_tests}. The experiment was run on fully synthetic data using the parameter settings previously described, with the exception of varying sample size $|\mathcal{D}_Q| \in\{1000,10000\}$, selection strength $\sigma^*_{\text{logit}(S)} \in \{1.25 \ (\text{Low}), 2 \ (\text{High})\}$, fixing a linear selection mechanism, and running over all possible $\tilde{U}\subseteq U$, $n_{\text{tasks}}=10$, and 2 seeds. For each experiment, we calculated the \textit{KS test} using a logistic regression model to fit $p(S=1 \mid \tilde{U})$ and report the corresponding KS statistic (p-value could also be used); \textit{Weight Design Effect} as $d_{\text{eff}} = \frac{n \sum_{i=1}^n (w^{(i)})^2}{(\sum_{i=1}^n w^{(i)})^2}$ where $w^{(i)}$ is the corresponding weight from our moment-matching heuristic and $n = |\mathcal{D}_P|$ \cite{kish1992weighting}. To estimate \textit{Propensity Invariance}, we first estimate the predicted selection variables $\hat{U}:=(\tilde{U},\hat{U}^*)$. Let $W :=(X,Y) \setminus \hat{U}$ be all remaining variables that are not predicted to be causing selection. Using the aforementioned propensity model learned over $\mathcal{D}_Q$ and $\mathcal{D}_P$, we consider all unique propensity scores $s_j = p(S=1 \mid \tilde{U}=u)$ observed in $\mathcal{D}_P$ (which is feasible given we are working with discrete $\tilde{U}$). For each score and its corresponding dataset $\mathcal{S}_j = \{(\tilde{X}^{(i)},Y^{(i)}) \in \mathcal{D}_P : p(S=1 \mid \tilde{U}^{(i)}) = s_j\}$, and then for each variable $V \in W$, we fit a logistic regression model $f_{j}$ over the training set $\mathcal{S}_j$ to predict $V$ given $\hat{U}$. We then estimate the out-of-distribution variance on other sets defined by different propensity scores:  $\sigma^2_j(V) = \text{Var}_{k: \ s_k}(\text{logloss of } f_j \text{ on } \mathcal{S}_k)$. We then average all $\sigma^2_j(V)$ over all propensity scores $j$ and all variables $V$. Note, for all metrics, a higher score indicates higher (approximate) violation of that corresponding assumption. 

\subsection{Additional details on baselines}\label{appendix:baselines}
We tested our bound against the following baselines: inverse probability of participation weighting (IPPW), empirical calibration, entropy balancing, and raking (iterative proportional fitting). In this section, we describe each of these baselines in greater detail, including both a high-level description of the theory underlying each baseline and any hyperparameter settings used in our implementations. We note that all four baselines are reweighting methods, meaning the estimated $\hat{R}_Q^{(baseline)}$ produced by each method can be expressed as
\begin{align*}
    \hat{R}_Q^{(baseline)} = \mathbb{E}_P[w^{(baseline)}(\tilde{X}, U, Y)\cdot \ell(f_P(\tilde{X}), Y)]
\end{align*}
where $w^{(baseline)}(\tilde{X}, U, Y)$ are the per-sample weights estimated by a given baseline.

\paragraph{Inverse probability of participation weighting (IPPW) } IPPW \citep{ipw2000} has been used in prior works to address sample selection bias \citep{ipw1,ipw2,ipw3,ipw4}. In IPPW, data points are reweighted by a ratio between the probability of observing the particular point in the target population and the probability of observing that point in the biased sample, in order to ``align" the biased distribution with the target distribution. More concretely, let $\tilde{U}$ denote the set of known and observed selection variables, implying that the probability of observing a data point for a particular $\tilde{U}$ value is different in the biased sample than in the target population. We then define the weight\footnote{The first term is also equivalent to post-stratification weighting \citep{ipw1}.} $w^{(IPPW)}(\tilde{U}) = p(\tilde{U}) \ / \ {p(\tilde{U} \mid S=1)} = p(S=1) \ / \ p(S=1 \mid \tilde{U})$. Estimating $p(\tilde{U}) \ / \ {p(\tilde{U} \mid S=1)}$ can be done through estimating the densities $p(\tilde{U})$ and $p(\tilde{U} \mid S=1)$ or through estimating the density ratio itself directly \citep{kliep}. Alternatively, estimating $p(S=1) \ / \ p(S=1 \mid \tilde{U})$ can be accomplished via classification-based methods \citep{ipw1}. In our implementation, we calculate $w(\tilde{U})$ by estimating the density functions $p(\tilde{U})$ and $p(\tilde{U} \mid S = 1)$.

Notice that this $w^{(IPPW)}(\tilde{U})$ is the same as the $w(\tilde{U})$ weight that appears in our proposed bound. If $\tilde{U}$ is the complete set of selection variables, and there is common support between $P$ and $Q$, then IPPW produces an unbiased estimate of $R_Q$. When there are other selection variables $U^*$ that are not included in the weights, then IPPW yields a biased estimate for $R_Q$. Ideally, we would want to recover the unbiased estimate by calculating the IPPW weights for the selection variables: $w(\tilde{U}, U^*) =  p(S=1) \ / \ p(S = 1 \mid \tilde{U}, U^*)$. However, with only partial observability of $U^*$ in $Q$ (see Constraints \ref{ass:no_samples}  - \ref{ass:mu_sigma}), we cannot easily estimate $w(\tilde{U}, U^*)$. Thus, we use IPPW with $w(\tilde{U})$ weights as a realistic baseline $R_Q$ estimate. Similar assumptions about using only observed selection variables in estimating IPPW weights were made in prior works on empirical selection bias correction, such as \citep{ipw1}. 

\paragraph{Empirical calibration} Empirical calibration \citep{surveyest} solves for weights that balance covariates across the biased and target datasets, while simultaneously controlling variance by making sure the weights do not stray too far from uniform weights. Empirical calibration can be expressed as a constrained optimization problem where the goal is to minimize an objective function characterizing deviation from ``uniformity" of weights while meeting certain balance or ``calibration" constraints.

For our problem setting, we implement a form of empirical calibration with an entropy loss function and with first moment balance constraints. Roughly, this formulation seeks to match the weighted average of the variables observed in $\mathcal{D}_P$ with the variable averages observed in $\mathcal{D}_Q$. More specifically, we solve the following optimization problem to obtain the weights $w^{(\text{calib})}$:
\begin{align*}
    &\min_{w} \sum_{i=1}^{|\mathcal{D}_P|}w^{(i)} \log(w^{(i)}) \\
     & \text{s.t.} 
     \sum_{i=1}^{|\mathcal{D}_P|} w^{(i)} X_{j}^{(i)} = \mu_Q(X_j) , \ \ \sum_{i=1}^{|\mathcal{D}_P|} w^{(i)} U_{k}^{(i)} = \mu_Q(U_k) \\ & \quad\text{for all $j = 1, \dots, \text{dim}(X), \ k=1, \dots, \text{dim}(U)$} \\
    & \sum_{i=1}^{|\mathcal{D}_P|} w^{(i)} = 1 \\
    & w^{(i)} \geq 0 \text{ for all } i = 1, \dots |\mathcal{D}_P|
\end{align*}
where the notation $X_j^{(i)}$ refers to the $i$-th sample of variable vector $X$, and the $j$-th variable element within that vector. Notice that we include \textit{all} $X$ and $U$ in the balance constraints rather than just $\tilde{U}$ (the known drivers of selection) which reflects how empirical calibration is often applied in practice. Additionally, note that, while we observe $\tilde{U}_j$ fully in $Q$, our empirical calibration implementation collapses this knowledge down to the first moment, $\mu_Q(\tilde{U}_j)$. Thus, when the mean is not a good representation of a particular distribution, e.g. for a bimodal distribution, then balancing on first moments alone can produce weights that do not sufficiently correct for selection bias. 

We use the python implementation provided by \cite{wang2019python}, where we run approximate calibration using the library's \texttt{maybe\_exact\_calibrate} function with the objective set to \texttt{objective.ENTROPY} and covariates scaled to be between 0 and 1. We assume that observed summary statistics for continuous variables under Constraint \ref{ass:mu_sigma} are derived from the raw continuous values themselves rather than from binned versions of the continuous variables  \footnote{For heart rate and BMI, for example, we are more likely to observe the average continuous values as opposed to average the discrete binned values}. Thus, we run all calibration baselines (and our heuristic $U^*$ algorithm) on non-binned, continuous summary statistics for any variables that are continuous, regardless of whether these variables are later binned for density estimation purposes.

\paragraph{Entropy balancing} Entropy balancing is a specific type of empirical calibration that uses an entropy loss function and that, in typical implementations, solves for weights that balance both first and second moments. For our problem setting, we implemented a form of entropy balancing that solves the following optimization problem to obtain the weights $w^{(\text{eb})}$:

\begin{align*}
    &\min_{w} \sum_{i=1}^{|\mathcal{D}_P|}w^{(i)} \log(w^{(i)}) \\
     \text{s.t.} 
     & \sum_{i=1}^{|\mathcal{D}_P|} w^{(i)} X_{j}^{(i)} = \mu_Q(X_j) , \sum_{i=1}^{|\mathcal{D}_P|} w^{(i)} U_{k}^{(i)} = \mu_Q(U_k) \\ 
     & \sum_{i=1}^{|\mathcal{D}_P|} w^{(i)} (X_{j}^{(i)})^2 = \sigma^2_Q(X_j) + \mu_Q(X_j)^2, \sum_{i=1}^{|\mathcal{D}_P|} w^{(i)} (U_{k}^{(i)})^2 = \sigma^2_Q(U_k) + \mu_Q(U_k)^2
     \\
     &\quad\text{for all $j = 1, \dots, \text{dim}(X), \ k=1, \dots, \text{dim}(U)$} \\
    & \sum_{i=1}^{|\mathcal{D}_P|} w^{(i)} = 1 \\
    & w^{(i)} \geq 0 \text{ for all } i = 1, \dots |\mathcal{D}_P|
\end{align*}
where note we simplify the second moment alignment by using the definition of variance: $\mathbb{E}[(V - \mathbb{E}[V])^2] = \mathbb{E}[V^2] - \mathbb{E}[V]^2$. 

Like with empirical calibration, entropy balancing collapses the known information for $\tilde{U}$ down to first and second moments, thus potentially omitting key distributional properties and leading to weights that do not properly correct for selection bias.

Again, we use the python implementation provided by \cite{wang2019python}, where we run approximate calibration using the library's \texttt{maybe\_exact\_calibrate} function with the objective set to \texttt{objective.ENTROPY} and covariates are scaled to be between 0 and 1. 

\paragraph{Raking (Iterative proportional fitting)} Raking \citep{surveyest}, equivalently known as iterative proportional fitting, aligns categorical sample data to univariate table counts in the target data through an iterative adjustment process. First, for sake of illustration, suppose there are two scalar categorical variables, $X$ and $U$, that can each take on levels $\{1, 2, 3, \dots, L_X\}$ and $\{1, 2, 3, \dots, L_U\}$, respectively. Suppose the joint counts of $X$ and $U$ in $\mathcal{D}_P$ are stored in an $L_X \times L_U$ matrix, $\mathbf{M}$ such that the value $M_{xu}$ in row $x$, column $u$ contains the number of occurrences of $(X, U) = (x, u)$ in the observed data $\mathcal{D}_P$. That is, $M_{xu} = \sum_{i=1}^{|\mathcal{D}_P|}\mathbbm{1} \{X^{(i)} = x\} \cdot \mathbbm{1} \{U^{(i)} = u\}$.

The goal is then to solve for weight vectors that align the marginal row and column counts (or frequencies) in the biased sample with those in a target sample. Concretely, we solve for $\boldsymbol{\beta}$ and $\boldsymbol{\phi}$ such that $\sum_{x=1}^{L_X} \beta_x \phi_uM_{xu} = \mu_Q(u)$ for all $u = 1, \dots, L_U$ and $\sum_{u=1}^{L_U} \beta_x \phi_uM_{xu} = \mu_Q(x)$ for all $x = 1, \dots, L_X$ where $\mu_Q(u)$ denotes the marginal probability $p(U = u)$ in the target population and $\mu_Q(X)$ denotes the marginal probability $p(X = x)$, which are both observed under Constraint \ref{ass:mu_sigma}. Solving for these weights is usually done iteratively by first aligning row marginals, then aligning column marginals, and repeating until the weighted marginals in the biased sample are sufficiently close to those in the target population. Since raking assumes the data are categorical, continuous data must first be binned before applying raking. The iterative alignment process can be generalized to the case where $X$ and $U$ are multivariate with dimension $\dim(X) > 1$, $\dim(U) > 1$. Assume, without loss of generality, that each variable $X_j, \ j = 1, 2, \dots, \dim(X)$ takes on possible values $X_j \in \{1, 2, \dots, L_{X_j}\}$, and each $U_k, \ k = 1, 2, \dots, \dim(U)$ takes on possible values $U_k \in \{1, 2, \dots, L_{U_k}\}$. The output of the alignment produces two sets of alignment weight vectors, $\{\boldsymbol{\beta}_j\}_{j=1}^{\dim(\mathbf{X})}$ and $\{\boldsymbol{\phi}_k\}_{k=1}^{\dim(\mathbf{U})}$ where $\dim(\boldsymbol{\beta}_j) = L_{X_j}$ and $\dim(\boldsymbol{\phi}_k) = L_{U_k}$ for all $j$ and $k$. 

Recall that raking, like the other baselines, is a reweighting method for estimating $\hat{R}_Q$. Let $w^{(raking)}(X^{(i)}, U^{(i)})$ denote the per-sample weight applied to the $i^{\text{th}}$ sample in $\mathcal{D}_P$ in the estimation of $\hat{R}_Q^{(raking)}$. We compute $w^{(raking)}(X^{(i)}, U^{(i)})$ using the alignment weight vectors as follows:
\begin{align*}
    w^{\text{(raking)}}(X^{(i)}, U^{(i)}) = \prod_{j=1}^{\dim(X)}\prod_{k=1}^{\dim(U)} \beta_{j, X_j^{(i)}} \cdot \phi_{k, U_k^{(i)}} 
\end{align*}
where $\beta_{j, X_j^{(i)}}$ denotes the $X_j^{(i)}$-th element of $\boldsymbol{\beta}_j$ and $\phi_{k, U_k^{(i)}}$ denotes the $U_k^{(i)}$-th element of $\boldsymbol{\phi}_k$. 

In our implementation, we use the \texttt{weightipy} library in python. We drop continuous columns\footnote{We also tried binning continuous variables and did not observe a significant change in performance.} and then align the discrete variables using \texttt{weightipy.weight\_dataframe(P\_dataframe, Q\_scheme)} on the observed $\mathcal{D}_P$ samples as \texttt{P\_dataframe} and the dictionary of marginal frequencies from $Q$ as the scheme \texttt{Q\_scheme}.

\subsection{Density estimation}
\subsubsection{Practically estimating densities in the proposed bound}
To estimate $p(\tilde{U})$ and $p(\tilde{U} \mid S = 1)$ (used to compute $w(\tilde{U})$ in the bound), any joint density estimation (either counts for discrete data or continuous density estimation methods like KDE) can be applied directly. To estimate the conditional density functions $p(\tilde{X}, Y, \tilde{U} \mid S=1 )$ and $p(\tilde{X}, Y ,\tilde{U}, U^* \mid S=1)$ (used to compute $\phi(\tilde{X}, Y, \mathcal{U}*)$ in the bound), we first use the chosen density estimation method to estimate $p(\tilde{U}, U^* \mid S = 1)$, $p(X, Y, \tilde{U} \mid S=1)$, and $p(X, Y, \tilde{U} ,U^* \mid S = 1)$. From these density functions, we then compute the conditional density functions in our proposed bound as 
\begin{align*}
    \hat{p}(\tilde{X}, Y \mid \tilde{U}, S=1 ) &= 
    \frac{\hat{p}(\tilde{X}, Y , \tilde{U} \mid S = 1)}
    {\hat{p}(\tilde{U} \mid S = 1)}   \\
    \hat{p}(\tilde{X}, Y \mid \tilde{U}, U^*, S=1 ) &= 
    \frac{\hat{p}(\tilde{X}, Y, \tilde{U}, U^* \mid S = 1)}
    {\hat{p}(\tilde{U}, U^* \mid S = 1)}
\end{align*}
where the superscript $\hat{}$ denotes an estimated density.
\subsubsection{Continuous density estimation}\label{appendix:cont_dens_est}
Computing our proposed bound requires estimating several density functions, namely $p(\tilde{U})$, $p(\tilde{U} \mid S = 1)$, $p(\tilde{X}, Y \mid \tilde{U}, \hat{U}^*, S = 1)$ and $p(\tilde{X}, Y \mid \tilde{U}, S = 1)$. For low-dimensional binary or categorical data, densities can be estimated in a straightforward manner by computing simple counts. For continuous data, more sophisticated density estimation techniques are needed, including non-parametric methods like kernel density estimation (KDE) \citep{Rosenblatt1956KDE,parzen1962estimation} and semi-parametric methods like Gaussian mixture modeling (GMM) \citep{Li1999GMM}. Even with these methods, estimating continuous densities accurately is notoriously challenging and can come with large computational costs, especially when working with higher-dimensional data. While more modern, machine learning-based methods such as conditional normalizing flows \citep{flow1, flow2} can better handle higher-dimensional data, these methods require large sample sizes. 

In Appendix \ref{appendix:continuous_results}, we offer a preliminary assessment of how our proposed bound fares in continuous data settings using synthetic, low-dimensional continuous data and applying KDE to estimate densities. We recognize the need for future work to explore density estimation for higher-dimensional continuous data and potential workarounds such as estimating density ratios instead of densities directly, which can be less computationally expensive and more stable.  

For the continuous data experiments, where $U$ and $X$ are continuous multivariate random variables, we estimate the density functions needed to compute our proposed bound using kernel density estimation (KDE). We implemented KDE using \texttt{skearn.neighbors.KernelDensity} with a Gaussian kernel. To tune the bandwidth parameter for KDE, we performed a search over a candidate set of 10 logarithmically spaced points between 0.1 and 1 and chose the parameter that produces the best 3-fold cross validation score.




\section{Results}\label{appendix:results}
\subsection{Additional 
results for synthetic experiments}\label{appendix:synth-add-res}
\subsubsection{Validating our bound}
In Table \ref{tab:bound-err-synth-lin}, we report summary metrics on the error of our proposed bound with a linear selection mechanism, compared to the error of baseline estimates.

We highlight a few observations. First, although baselines that rely on marginal variable alignment (e.g., entropy balancing) perform reasonably well under the simpler linear selection mechanism, they degrade under the more complex nonlinear process, with only 50\% of generalization estimates yielding valid upper bounds. Second, our heuristic bound estimation closely matches the true bound estimation and achieves approximately 90\% validity across both linear and nonlinear settings. Finally, although the bound can be infinite in theory, in practice it remains finite, with a 95th percentile value of 0.15 in both selection scenarios.

\begin{table}[ht!]
\centering
\footnotesize
\setlength{\tabcolsep}{4pt}
\renewcommand{\arraystretch}{0.95}
\begin{adjustbox}{max width=\columnwidth}
\begin{tabular}{@{}lrccc@{}}
\toprule
& \multicolumn{4}{c}{$\hat{R}_Q - R_Q$} \\
\cmidrule(lr){2-5}
\textbf{} & \textbf{$\ \ \ \mu \pm \sigma \ \ \ $} & \textbf{Validity} & \textbf{(0.05,\,0.95)} & \textbf{$\boldsymbol{d}_{\text{eff}}$} \\
\midrule
UB (true $U^*$)          & $0.05 \pm 0.06$ & $0.91$ & $(-0.03, 0.15)$ & $5.7$ \\
UB (heuristic $U^*$)     & $0.05 \pm 0.05$       & $0.89$ & $(-0.04, 0.14)$ & $5.2$ \\
Naive IPPW of $\tilde{U}$& $-0.03 \pm 0.04$      & $0.31$ & $(-0.11, 0.04)$ & $2.5$ \\
Raking                   & $0.00 \pm 0.03$      & $0.63$ & $(-0.05, 0.03)$ & $4.7$ \\
Calibration              & $0.00 \pm 0.03$      & $0.63$ & $(-0.05, 0.03)$ & $4.7$ \\
Ent. Balancing        & $0.00 \pm 0.02$      & $0.63$ & $(-0.05, 0.03)$ & $4.7$ \\
\bottomrule
\end{tabular}
\end{adjustbox} \caption{Bound error $\hat{R}_Q - R_Q$ summary metrics on synthetic tasks with linear selection mechanism. Results are shown across 30 tasks and 20 seeds.}
\label{tab:bound-err-synth-lin}
\end{table}

In Figure \ref{appendix:fig:udim-synth}, we show that, in experiments where $X$ and $U$ are binary, our bound is consistently valid ($\hat{R}_Q \geq R_Q)$ and non-vacuous ($\hat{R}_Q - R_Q \leq 0.15)$ across varying observability of $U$. In contrast, IPPW tends to underestimate $R_Q$, with underestimation worsening as the level of $U$ observability decreases. When the selection mechanism is nonlinear, the other baselines also slightly underestimate $R_Q$.

\subsubsection{Extension to continuous data}\label{appendix:continuous_results}
In Figure \ref{appendix:fig:udim-synth-unif}, we show that, in experiments where $U$ are continuous, uniformly distributed random variables, our bound with true $U^*$ is, on average, valid, and tends to underestimate $R_Q$ less than IPPW, calibration, and entropy balancing. When the $U^*$ are nominated by the heuristic, however, our bound performance worsens, and underestimates only slightly less than IPPW. In the continuous data setting, raking tends to perform poorly and exhibits high variance because it either requires binning continuous variables or will drop them completely.

In Figure \ref{appendix:fig:udim-synth-norm}, we show that, when $U$ are continuous normally distributed variables, our bound tends to underestimate $R_Q$ as much as the baseline methods. Across these experiments, we found that accurate density estimation was particularly challenging, with the estimation procedure often producing extremely large density estimates for certain data points. These large densities would dominate in the bound expression, thus skewing the estimate of $R_Q$ and leading to underestimation. Because the results were promising for uniformly distributed data, however, we believe our proposed bound still has potential utility in the continuous data setting. Future work should examine better density estimation methods and should explore modifications to our bound implementation that circumvent the need for direct density estimation.

\begin{figure}[ht!]
    \centering
\includegraphics[width=\linewidth]{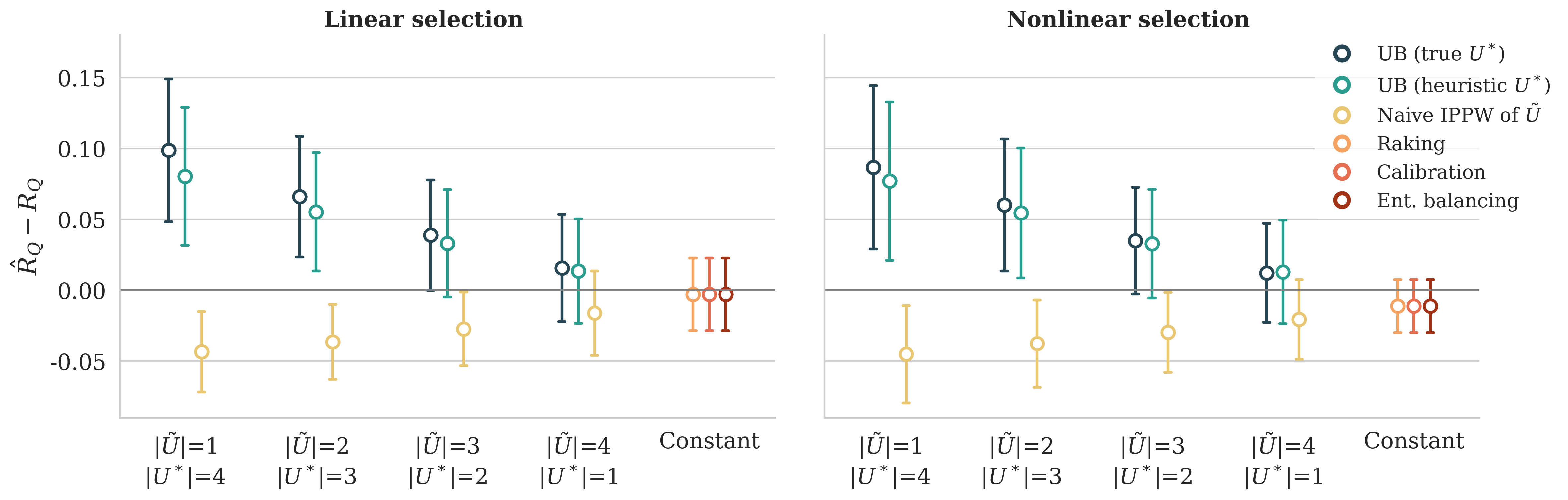}
    \caption{Bound error for different partitions of $U$ and $\tilde{U}$ in synthetic experiments, where $U$ and $X$ are binary. Results are shown across 30 tasks and 5 seeds. }
    \label{appendix:fig:udim-synth}
\end{figure}

\begin{figure}[ht!]
    \centering
\includegraphics[width=\linewidth]{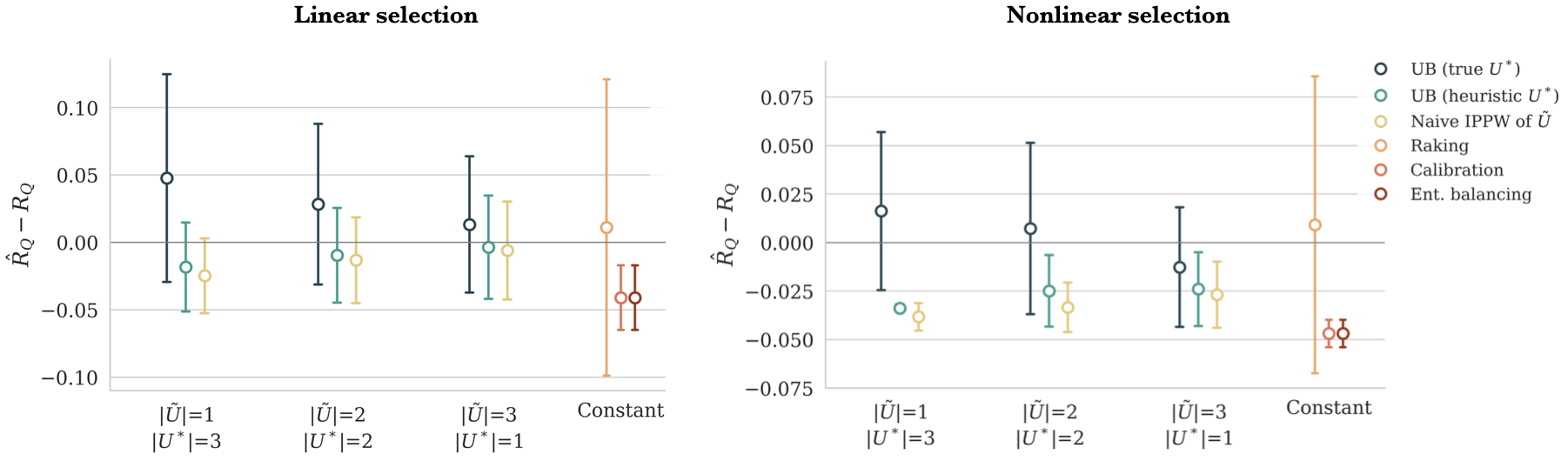}
    \caption{Bound error for different partitions of $U$ and $\tilde{U}$ in synthetic experiments, where $U \overset{\mathrm{iid}}{\sim} \text{Uniform}(0, 1)$ and $X$ are continuous. Results are shown across 3 tasks and 3 seeds. }
    \label{appendix:fig:udim-synth-unif}
\end{figure}

\begin{figure}[ht!]
    \centering
\includegraphics[width=\linewidth]{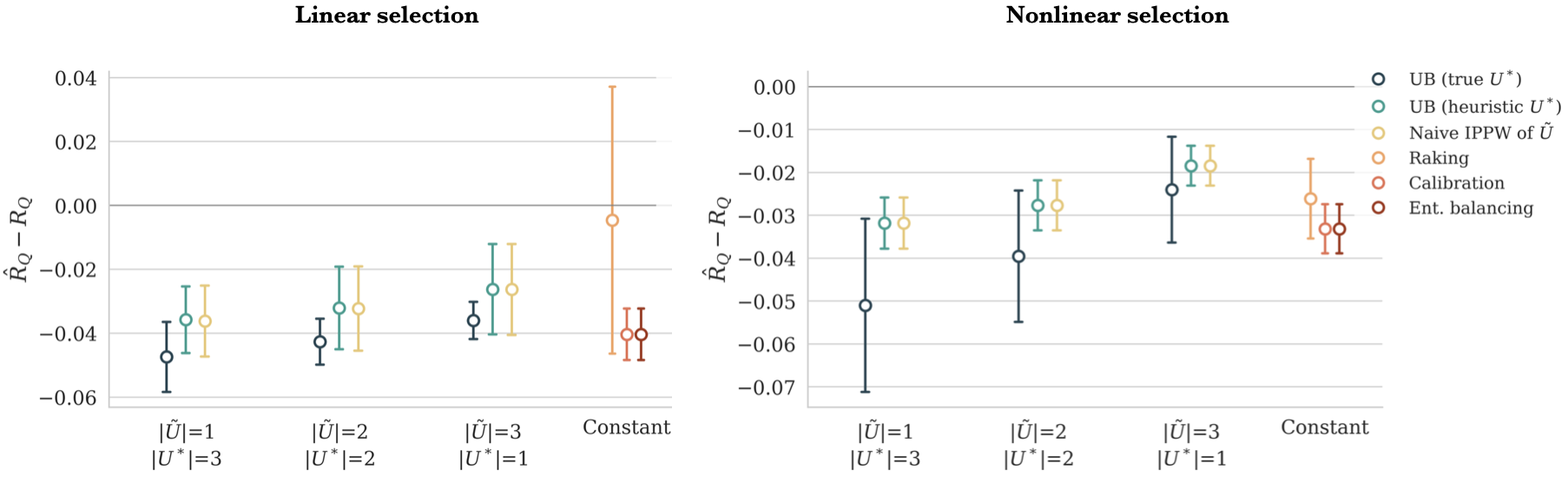}
    \caption{Bound error for different partitions of $U$ and $\tilde{U}$ in synthetic experiments, where $U \sim N(0, \mathbf{\Sigma})$ and $X$ are continuous. Results are shown across 3 tasks and 3 seeds. }
    \label{appendix:fig:udim-synth-norm}
\end{figure}

\subsubsection{Extension to high-dimensional discrete data}\label{appendix:high_dim}
To assess the robustness and scalability of our approach beyond the low-dimensional regime, we conduct an additional ablation study on synthetic high-dimensional discrete data. Specifically, we consider a synthetic dataset with $\text{dim}(X)=60$ observed variables, and $\text{dim}(\tilde{X}) = 20$ covariates used for prediction. We evaluate performance across 7 downstream tasks and 2 random seeds.

We report our results in Table \ref{table:high_dim}. Despite the increased dimensionality, our method maintains a tight bound to the true $R_Q$ and high validity.
\begin{table}[ht!]
\centering
\begin{tabular}{@{}lccc@{}}
\toprule
& \multicolumn{3}{c}{$\hat{R}_Q - R_Q$} \\
\cmidrule(lr){2-4}
\textbf{} & \textbf{$\ \ \ \mu \pm \sigma \ \ \ $} & \textbf{Validity} & \textbf{(0.05,\,0.95)}\\
\midrule
UB (true $U^*$)        & $0.01 \pm 0.02$  & 0.93 & $(0.01,\;0.04)$ \\
UB (heuristic $U^*$)   & $0.03 \pm 0.02$  & 0.99 & $(0.00,\;0.06)$ \\
Naive IPPW of $\tilde{U}$ & $-0.01 \pm 0.02$ & 0.62 & $(-0.06,\;0.01)$ \\
Calibration            & $-0.01 \pm 0.00$ & 0.88 & $(-0.01,\;0.00)$ \\
\bottomrule
\end{tabular}
\caption{Bound error $\hat{R}_Q - R_Q$ on high-dimensional discrete synthetic data, aggregated over 7 tasks and 2 random seeds.}
\label{table:high_dim}
\end{table}

\subsubsection{Ablation on oracle baselines}\label{appendix:other_baselines}
We consider five commonly used importance-weighting methods that are state-of-the-art, but make unrealistic assumptions on a fully observed target distribution $Q$. Specifically, we look at KLIEP \cite{kliep}, KMM \cite{kmmog,kmm}, logistic regression classification \cite{da2,clf1}, RuLSIF \cite{liu2013change}, and uLSIF \cite{kanamori2009least}. All methods are implemented using polynomial kernels where applicable, with hyperparameters selected following standard practice. Because these approaches have direct access to $Q$, they should be interpreted as oracle baselines.

We report results across three experimental settings already studied: first, synthetic binary data with nonlinear selection; second, synthetic continuous data; and third, high-dimensional binary data. We present the results in Table \ref{tab:oracle_combined}. Overall, the results show that our method remains competitive with oracle baselines.

\begin{table}[t]
\centering
\caption{Oracle importance-weighting baselines across three experimental settings.}
\label{tab:oracle_combined}
\begin{tabular}{llccc}
\toprule
\textbf{Experiment} & \textbf{Oracle Method} & $\mu \pm \sigma$ & \textbf{Validity} & $(0.05, 0.95)$ \\
\midrule
{Synthetic Binary}
& Classification & $-0.01 \pm 0.02$ & 0.40 & $(-0.04,\;0.01)$ \\
& KMM (poly)     & $0.01 \pm 0.02$  & 0.90 & $(-0.01,\;0.05)$ \\
& KLIEP (poly)   & $0.01 \pm 0.07$  & 0.20 & $(-0.08,\;0.00)$ \\
& ULSIF (poly)   & $0.01 \pm 0.06$  & 0.20 & $(-0.09,\;0.00)$ \\
& RULSIF (poly)  & $0.01 \pm 0.07$  & 0.10 & $(-0.09,\;0.00)$ \\
\midrule
{Synthetic Continuous}
& Classification & $0.03 \pm 0.08$  & 0.67 & $(-0.05,\;0.17)$ \\
& KMM (poly)     & $-0.03 \pm 0.07$ & 0.53 & $(-0.14,\;0.05)$ \\
& KLIEP (poly)   & $0.01 \pm 0.07$  & 0.67 & $(-0.05,\;0.12)$ \\
& ULSIF (poly)   & $0.01 \pm 0.06$  & 0.67 & $(-0.06,\;0.11)$ \\
& RULSIF (poly)  & $0.01 \pm 0.07$  & 0.60 & $(-0.06,\;0.11)$ \\
\midrule
{High-Dimensional Binary}
& Classification & $0.00 \pm 0.01$  & 0.80 & $(-0.02,\;0.01)$ \\
& KMM (poly)     & $-0.01 \pm 0.01$ & 0.70 & $(-0.02,\;0.01)$ \\
& KLIEP (poly)   & $-0.01 \pm 0.03$ & 0.60 & $(-0.05,\;0.02)$ \\
& ULSIF (poly)   & $-0.01 \pm 0.03$ & 0.60 & $(-0.06,\;0.02)$ \\
& RULSIF (poly)  & $-0.01 \pm 0.03$ & 0.60 & $(-0.06,\;0.01)$ \\
\bottomrule
\end{tabular}
\end{table}

\subsection{Additional results for All of Us experiments}

In Table \ref{appendix:tab:bound-err-aou} we report summary metrics across All of Us for the linear selection mechanism; results for the nonlinear selection mechanism are reported in the main paper in Table \ref{tab:bound-err-aou}. Details on how these metrics were computed is available in Section \ref{appendix:synth-add-res}. Overall, we observe similar performance across nonlinear and linear selection mechanism. In both scenarios, our method has a positive average bound error (0.01 and 0.06 in our heuristic estimation for linear and nonlinear selection, respectively) compared to the potential negative error of baselines. Our method also demonstrates higher rates of valid upper bound estimation ($91\%$ in linear selection) versus baselines ($80\%$ for entropy balancing). However, we see a natural bias-variance tradeoff, where our method has a larger estimate standard deviation and design effect. 

In Figure \ref{appendix:fig:udim-real} we show that our bound is generally valid ($\hat{R}_Q - R_Q > 0$) and non-vacuous ($\hat{R}_Q - R_Q < 0.13$) across varying levels of observability of $U$, both for linear and nonlinear selection mechanisms. Validity and tightness hold both when $U^*$ is the true set of remaining selection variables and when $U^*$ are chosen via our heuristic. In contrast, the baseline methods tend to slightly underestimate the true $R_Q$.

Finally, for better insight into per-task performance, we show five additional tasks chosen at random in Figure \ref{fig:tasks2}, with the specific variables and selection mechanism for each task detailed in the accompanying table. All experiments were run with linear selection. These results highlight that across observability assumptions, our upper bound safely overestimates the true generalization gap versus the risk of underestimation of baselines. Echoing our conclusions in Figure \ref{fig:perf_tasks}, limited information of $U$ contained in $\tilde{U}$, as in the third task $Y=$ HIV with $\dim(\tilde{U})$=1, can lead to difficulty estimating $R_Q$. Similarly, when the unknown drivers of selection $U^*$ are highly correlated with the known variables $\tilde{U}$, generalization gap estimation can improve -- for example, in the first task, where $Y=$Drug dependence, $U^*=$\{Employment status\} is strongly correlated with variables in $\tilde{U}$, resulting in a tighter bound.

\begin{figure}[ht!]
\centering
\includegraphics[width=\columnwidth]{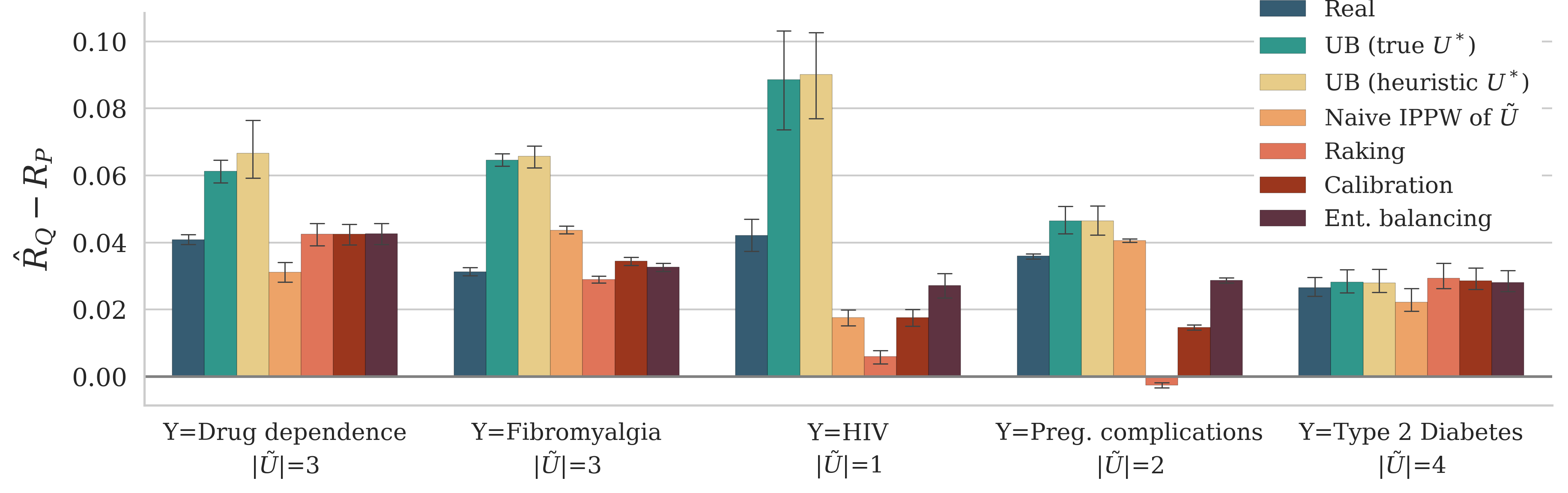}

\vspace{4pt} 

\footnotesize
\setlength{\tabcolsep}{4pt}      
\renewcommand{\arraystretch}{0.95} 
\begin{adjustbox}{max width=\columnwidth}
\begin{tabular}{@{}l
    >{\raggedright\arraybackslash}p{0.19\columnwidth}
    >{\raggedright\arraybackslash}p{0.19\columnwidth}
    >{\raggedright\arraybackslash}p{0.19\columnwidth}
    >{\raggedright\arraybackslash}p{0.19\columnwidth}
    >{\raggedright\arraybackslash}p{0.19\columnwidth}@{}}
\toprule
& \textbf{Y = Drug dependence} & \textbf{Y = Fibromyalgia} & \textbf{Y=HIV} & \textbf{Y = Pregnancy complications} & \textbf{Y = Type II diabetes} \\
\midrule
\textbf{Covariates $\tilde{X}$} & Age, BMI, tobacco usage, vape usage, drink frequency, vitals & Age, BMI, tobacco usage, drink frequency, vitals & Age, BMI, tobacco usage, drink frequency, vitals & Age, BMI, drink frequency, vitals & Age, BMI, tobacco usage, drink frequency, vitals \\
\textbf{Known $\tilde{U}$}          & Education level (+), income (+)  & Education level (+), age (+), race ($\uparrow$ White)     &  Age (+) & Age (+), income (+) & Education level (+), confidence in medical system (+), has insurance (+), education level (+)  \\
\textbf{Unknown $U^*$}          &   Employment status (+) &   General health (-)  & Education level (+), has housing (+), employment status (+) & Education level (+), confidence in medical system (+), general health (+)    & Income (+) \\
\bottomrule
\end{tabular}
\end{adjustbox}\caption{Generalization gap $\hat{R}_Q - R_P$ on All of Us experiments. Selection weights were designed to reflect known patterns of EHR selection bias that have been documented in the literature. (+) and (-) denotes we induced over- and under-sampling, respectively. Results are shown across 20 seeds.} \label{fig:tasks2}
\end{figure}

\begin{table}[ht!]
\centering

\footnotesize
\setlength{\tabcolsep}{4pt}
\renewcommand{\arraystretch}{0.95}

\begin{adjustbox}{max width=\columnwidth}
\begin{tabular}{@{}lrccc@{}}
\toprule
& \multicolumn{4}{c}{$\hat{R}_Q - R_Q$} \\
\cmidrule(lr){2-5}
\textbf{} & \textbf{$\ \ \ \mu \pm \sigma \ \ \ $} & \textbf{\% Valid} & \textbf{(0.05,\,0.95)} & \textbf{$\boldsymbol{d}_{\text{eff}}$} \\
\midrule
UB (true $U^*$)          & $0.02 \pm 0.03$ & $0.91$ & $(-0.02, 0.09)$ & $18.2$ \\
UB (heuristic $U^*$)     & $0.01 \pm 0.03$       & $0.91$ & $(-0.01, 0.07)$ & $7.4$ \\
Naive $\tilde{U}$ of IPPW& $-0.01 \pm 0.02$      & $0.54$ & $(-0.04, 0.01)$ & $1.5$ \\
Raking                   & $-0.01 \pm 0.02$      & $0.67$ & $(-0.05, 0.00)$ & $1.6$ \\
Calibration                   & $-0.01 \pm 0.02$      & $0.67$ & $(-0.03, 0.00)$ & $1.6$ \\
Ent. Balancing                    & $-0.01 \pm 0.01$      & $0.80$ & $(-0.03, 0.00)$ & $3.1$ \\
\bottomrule
\end{tabular}
\end{adjustbox}\caption{Bound error $\hat{R}_Q - R_Q$ summary metrics on All of Us tasks with linear selection mechanism. Results are shown across 30 tasks and 20 seeds.}
\label{appendix:tab:bound-err-aou}
\end{table}

\begin{figure}[ht!]
    \centering
\includegraphics[width=\linewidth]{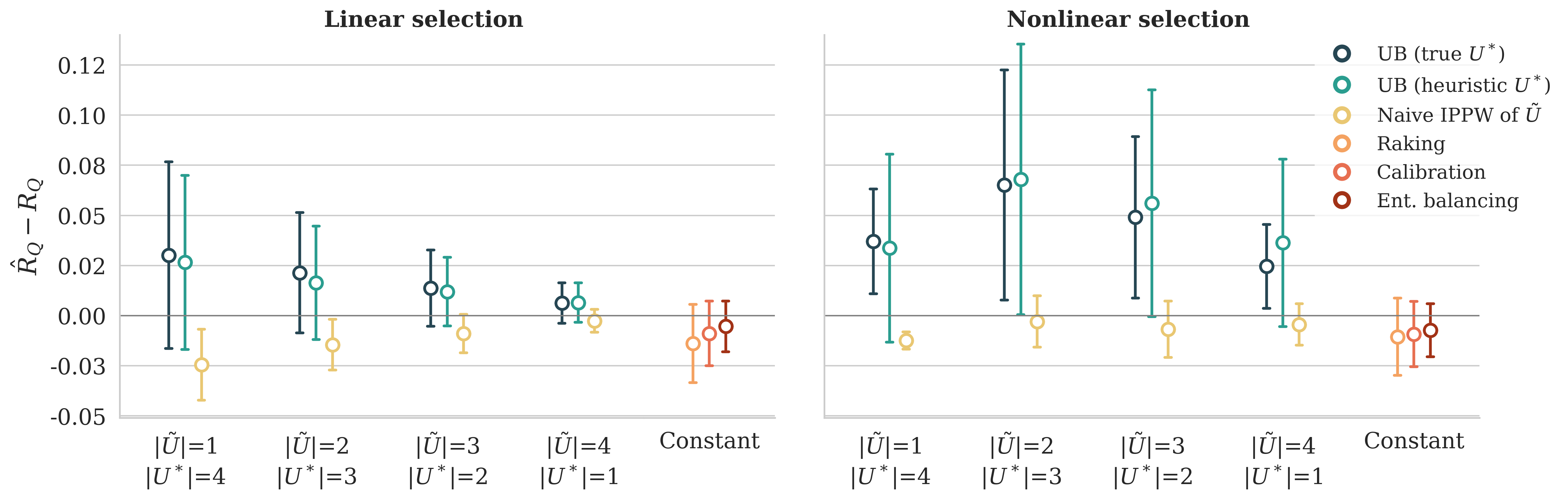}
    \caption{Bound error for different partitions of $U$ and $\tilde{U}$ in All of Us experiments. Results are shown across 30 tasks and 20 seeds.}
    \label{appendix:fig:udim-real}
\end{figure}

\subsection{Checking for $U^*$ Heuristic Correctness}\label{appendix:heuristic_exp}

\subsubsection{Experimental Setup}
We test for the accuracy of our heuristic in nominating the correct selection variables $U^*$.

Given a desired set $A$ of items, a predicted set $B$ of items, and a size $N$ of total items in our set space, we define the number of \textbf{true positives (TP)} as $|A \cap B|$; \textbf{true negatives (TN)} as $N - |A \cup B|$; \textbf{false positives (FP)} as $B - A$; and \textbf{false negatives (FN)} as $A - B$. Given these definitions, we can compute F1, precision, and recall as our performance metrics, in addition to the Jaccard index. 

In our setting, the desired set $A$ are the true variables in $U^*$ and the set $B$ is the nominated set of variables $\hat{U}^*$ identified by our heuristic. The set space of available items are the variables in $S := (X, U^*)$

Our heuristic algorithm is described in Section \ref{sec:methods:u_pred}. We test against three heuristic baselines. \textbf{Max corr} nominates the $k$ most correlated variables to the observed $\tilde{U}$, based on average Spearman correlation. \textbf{Random} nominates $k$ random variables from $S$. For both of these, given our known $\dim(U) = 5$, we pick $k \in [2,4]$. Finally, we also select variables based on their \textbf{Cohen's $d$} statistic which leverages the known summary statistics. Similar to the process described in Appendix \ref{appendix:method-extensions:heuristic} for variable filtering, we rank each variable $V \in (X,U^*)$ by its statistic $d(V)$. Let $\delta(\tilde{U})=\min(\{d(\tilde{U}_j\}_{j=1}^{\dim(\tilde{U})})$ be the minimum statistic for observed $\tilde{U}$. Then we set $\hat{U}^* = \{ V \in (X, U^*) : \text{d(V)} \geq \delta(\tilde{U})\}$.

To reasonably test the accuracy of our heuristic across a variety of scenarios, we collect the average of the four metrics (F1, precision, recall, and Jaccard) over several data generation processes. Specifically, for the synthetic dataset we run over the grid of dim$(X_U) \in [4,15]$,  dim$(X_C) \in [4,15]$, $U$ correlation $R_{i,j} \in [0,0.25,0.5], d_S \in [1,3]$. We fix $|\mathcal{D}_Q|=1e4, \, \text{dim}(U) = 5, \, \text{dim}(\tilde{X}) = 4, \, d_X = d_Y = 2, \sigma^*_{\text{logit}(S)} = \sigma^*_{\text{logit}(Y)}  = \sigma^*_{\text{logit}(X)}  = 2$. For the All of Us experiment,  we run over dim$(\tilde{X}) \in [7,15]$,  $\sigma^*_{\text{logit}(S)} \in [2, 3]$, $d_S \in [1,3]$. We fix $|\mathcal{D}_Q|=0.3N, \, \text{dim}(U) = 5$. 

Note that in both experiments, we test with both linear and nonlinear selection mechanisms. Using a nonlinear selection mechanism implicitly tests our heuristic's sensitivity to model misspecification, since the calibration equation in our heuristic assumes that selection is linear (i.e., that the logistic link function $g$ is applied to a linear function of $\tilde{U}$, $X$, and $U^*$). 

\subsubsection{Results}

We show additional results for semi-synthetic All of Us in Figure \ref{fig:real_heuristic}. Even across a variety of data settings, including nonlinear selection mechanisms, our proposed heuristic has the highest scores across the four metrics, with an F2 of 0.81 and 0.84 in the All of Us and synthetic settings, respectively. This indicates decent robustness to model misspecification. Finally, we see that, as expected, in the real dataset correct $U^*$ identification becomes more challenging as the number of candidate options increase. 
\begin{figure}[ht!]
    \centering
\includegraphics[width=.7\linewidth]{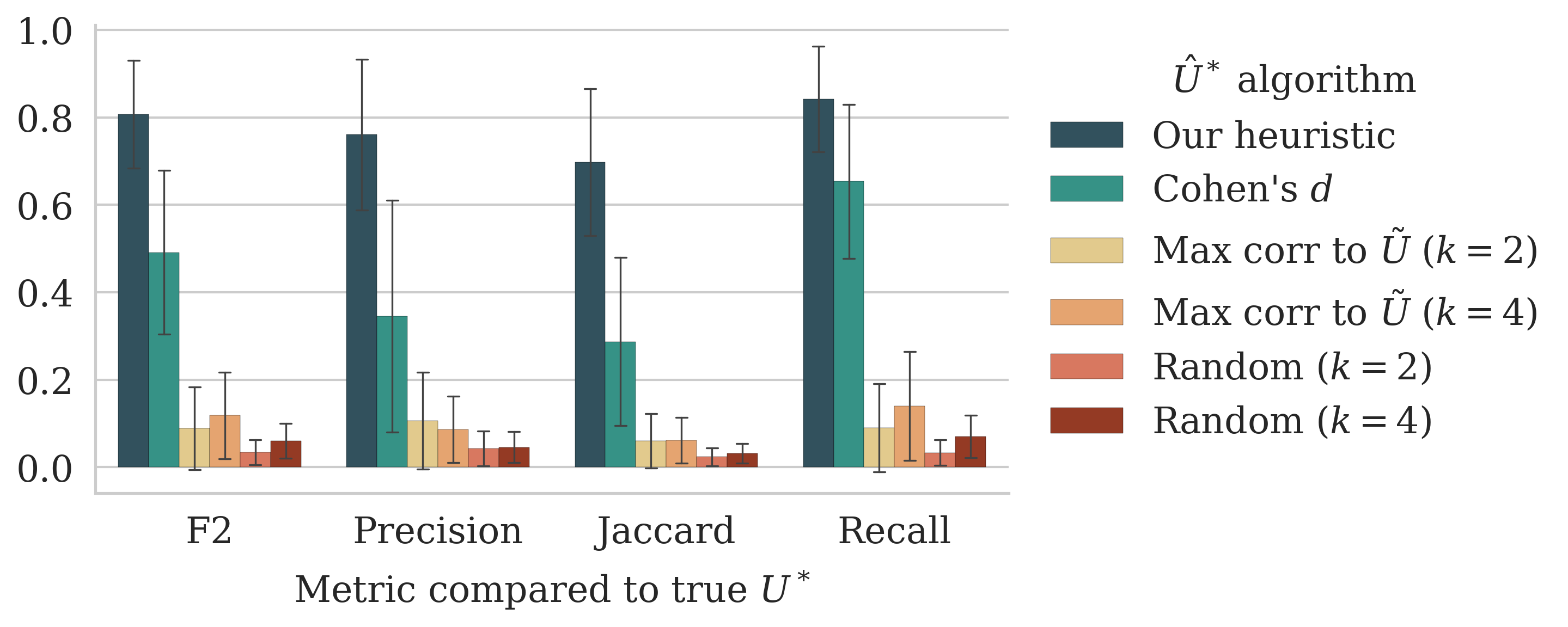}
    \caption{Comparing $U^*$ nomination algorithms in identifying true variables, on All of Us and across various data settings.}
    \label{fig:real_heuristic}
\end{figure}

\subsection{Sensitivity to observed $\tilde{U}$}\label{appendix:sens_utilde_scatter}

\subsubsection{Experimental Setup}
We test our proposed bound's sensitivity to the ``observability" of $U$ from $\tilde{U}$ in two ways: (i) what portion of $U$ is observed, i.e. what is the dimension of $\tilde{U}$ versus $U^*$ and (ii) how much information on selection is contained in $\tilde{U}$, or, in other words, to what extent does $\tilde{U}$ predict selection? Results for (i), which compare the generalization error $\hat{R}_Q - R_Q$ across the dimension of $U^*$, are included prior sections. In summary, these results highlight that our method's estimation of $\hat{R}_Q$ improves given more information on $\tilde{U}$, although even with limited information (i.e. dim($\tilde{U}$)=1) our method still performs well. In this section, we focus on our approach and results for (ii).

To measure the extent to which $\tilde{U}$ predicts selection, we fit a classifier to estimate $p(S = 1 \mid \tilde{U})$. We then examine the logloss of the fitted classifier, relative to the best case logloss of the true selection model $p(S = 1 \mid U)$. Specifically, we compute the relative logloss of $S \sim \tilde{U}$ as $\mathbb{E}_Q[\ell(p_{\phi}(U),S)] - \mathbb{E}_Q[\ell(p_{\theta}(\tilde{U}),S)]$ where $p_\phi, p_\theta$ are the models fit to predict $S$ given $U$ and $\tilde{U}$, respectively. The intuition is that the greater the logloss, the less predictive $\tilde{U}$ is of $S$. Thus we hypothesize that logloss is likely to be correlated with greater bound error and additionally report the Pearson correlation between the two. We report both the true bound error (using the true $U^*$) and the bound error provided by the heuristic $\hat{U}^*$. 

\subsubsection{Results}
In Figure \ref{appendix:fig:ures_synth} and Figure \ref{appendix:fig:ures_real}, we show the relationship between the bound error and the logloss in fully synthetic and All of Us data. From these plots, we observe that there is mild (Pearson's $\rho \in [0.14, 0.26]$), but statistically significant (p-value$ < 0.001$), correlation between logloss and bound error. While more empirical and theoretical work is needed, these results suggest that, in the real-world where $R_Q$ is unknown, the logloss of a fitted $S$-on-$\tilde{U}$ classifier can potentially serve as a diagnostic indicator of our proposed bound's tightness and validity. 

\begin{figure}[ht!]
    \centering
\includegraphics[width=.8\linewidth]{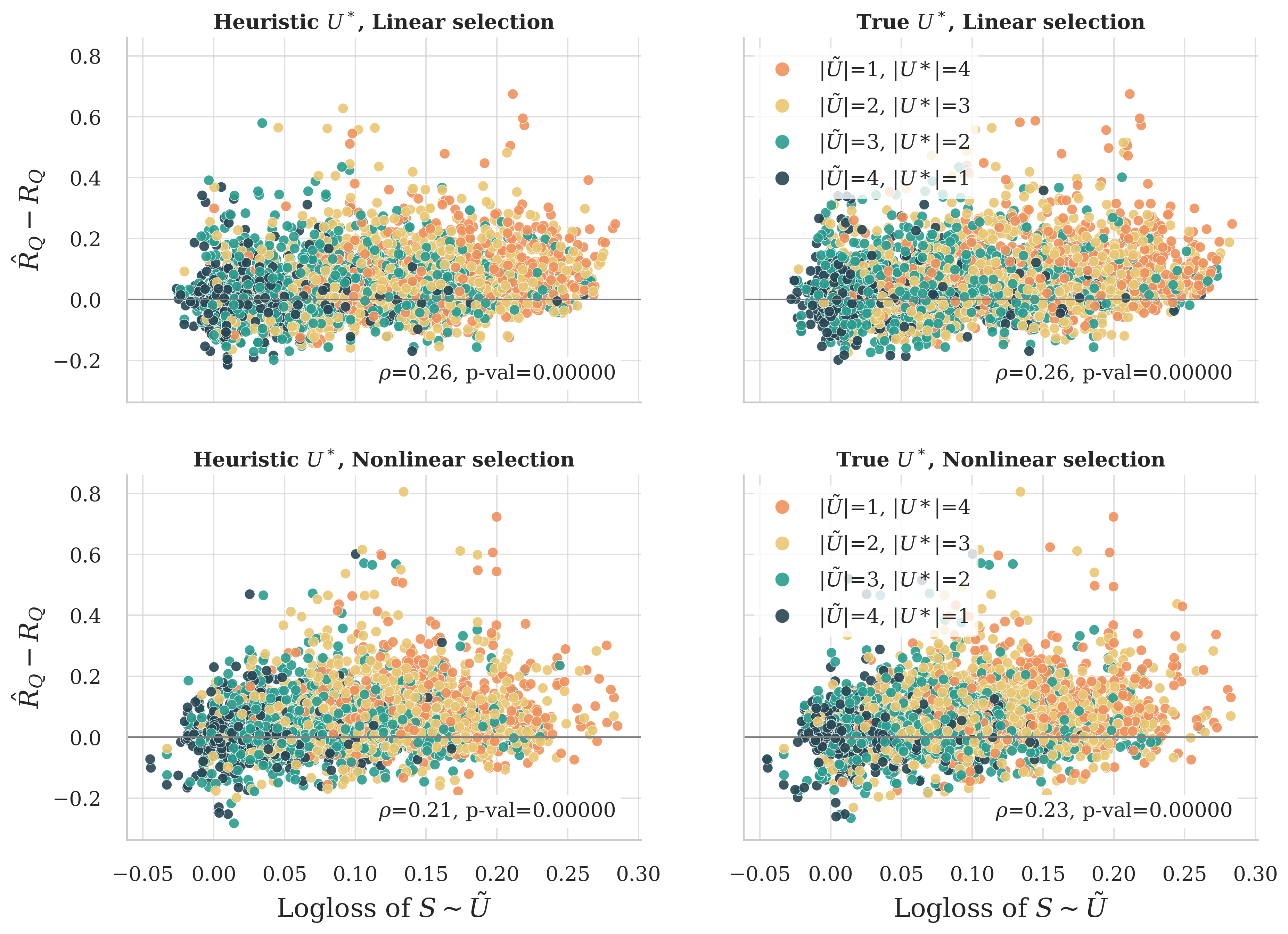}
    \caption{Bound error versus logloss of a classifier fit to predict $p(S = 1 \mid \tilde{U})$, on fully synthetic data.}
    \label{appendix:fig:ures_synth}
\end{figure}

\begin{figure}[ht!]
    \centering
\includegraphics[width=\linewidth]{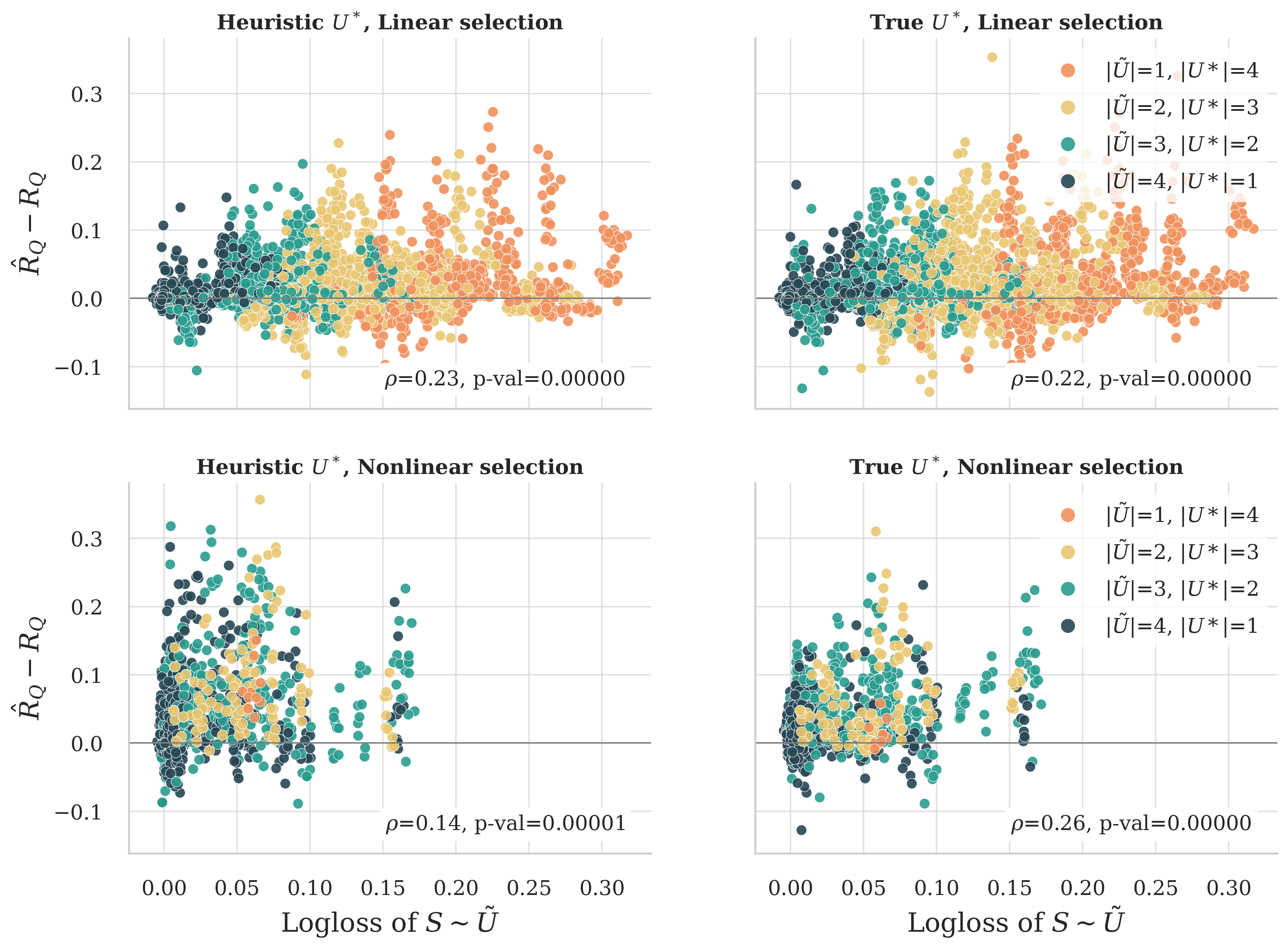}
    \caption{Bound error versus logloss of a classifier fit to predict $p(S = 1 \mid \tilde{U})$, on All of Us data.}
    \label{appendix:fig:ures_real}
\end{figure}

\subsection{Sensitivity to assumption violations}\label{appendix:assmp_viol_oracle}
\subsubsection{Experimental Setup}

As detailed in Appendix \ref{appendix:proofs:decomp}, we  can decompose our method's bound error into three terms: 
\begin{align*}
    \hat{R}_Q - R_Q &= \Delta_{\text{TBE}}
    + \Delta_{\text{CI}} 
    + \Delta_{\text{CS}}
\end{align*}
of the true bound error ($\Delta_{\text{TBE}}$), error arising from violating conditional independence ($\Delta_{\text{CI}}$), and error arising form violating common support ($\Delta_{\text{CS}}$). 

Although our upper bound is in theory valid, in practice finite samples can cause assumption violations. We test our bound validity by varying four key parameters that control assumption violations:
\begin{enumerate}
    \item We directly control sample size $|\mathcal{D}_Q|$. Decreased sample size is expected to increase assumption violations and also the probability of invalid underestimation. In synthetic data, this is possible by directly controlling $N$. In the All of Us case, we resample with replacement from the selected cohort.  
    \item We control the strength of the selection mechanism by varying the standard deviation of the probability of selection, $\sigma^*_{\text{logit}(S)}$. This is possible for both synthetic and All of Us data. 
    \item We vary the feature imbalance of $\tilde{X}$ and $Y$. This is motivated by the fact that overlap is more likely to be violated if we lack common support between variables, which is more likely if variables are imbalanced. Furthermore, density estimation, especially in high dimensional settings, is more likely to fail if variables are imbalanced. In synthetic data, we vary the standard deviation of the logits $\sigma^*_{\text{logit}(X)}$, $\sigma^*_{\text{logit}(Y)}$. In All of Us, we vary how many variables to add to a fixed $\tilde{X}$, where variables are added in order of its maximum correlation to the selection variables $\tilde{U}$, with the hypothesis that these variables will be most skewed under selection bias.
    \item We vary the dimension of $\tilde{X}$, where we differentiate between the prior experiment of adding skewed variables only by adding variables that are uncorrelated with $\tilde{U}$. In synthetic data, we vary dim($X_C$) and include these variables in $\tilde{X}$. In All of Us, we vary how many variables to add to a fixed $\tilde{X}$, where variables are added in opposite order to that above, i.e., picking variables with minimal maximum correlation to the selection variables $\tilde{U}$.

\end{enumerate}

We run on synthetic and All of Us data with linear selection mechanisms. The default parameters are the same as described for linear selection mechanisms in prior sections, although to reduce computational complexity, in the All of Us experiments we run on a random subset of 5 tasks and 20 seeds, instead of 30 tasks.
For each parameter setting, we plot the average contribution from each of the three terms with respect to the true generalization bound error $\hat{R}_Q-R_Q$. To compare assumption violations to bound validity, we additionally plot the percent underestimation across all tasks where underestimation is calculated as is the number of tasks with less than -0.01 generalization bound error.

\subsubsection{Results}

We show our results for varying the four parameters in Figures \ref{fig:decomp_synth} and \ref{fig:decomp_real} for the synthetic and All of Us data, respectively. In both settings, as sample size increases, selection strength decreases, $\tilde{X}$ imbalance decreases, or feature dimension decreases, the contribution of the violation assumption terms (conditional independence $\Delta_{ci}$ and common support $\Delta_{cs}$) decreases, indicating that these assumptions are increasingly satisfied. As these terms go to zero, the (positive) theoretical bound error $\Delta_{tbe}$ dominates, leaving the resulting bound $\hat{R}_Q > R_Q$. We see this reflected in the decreasing rates of bound underestimation. 

\begin{figure}[ht!]
    \centering
\includegraphics[width=0.8\linewidth]{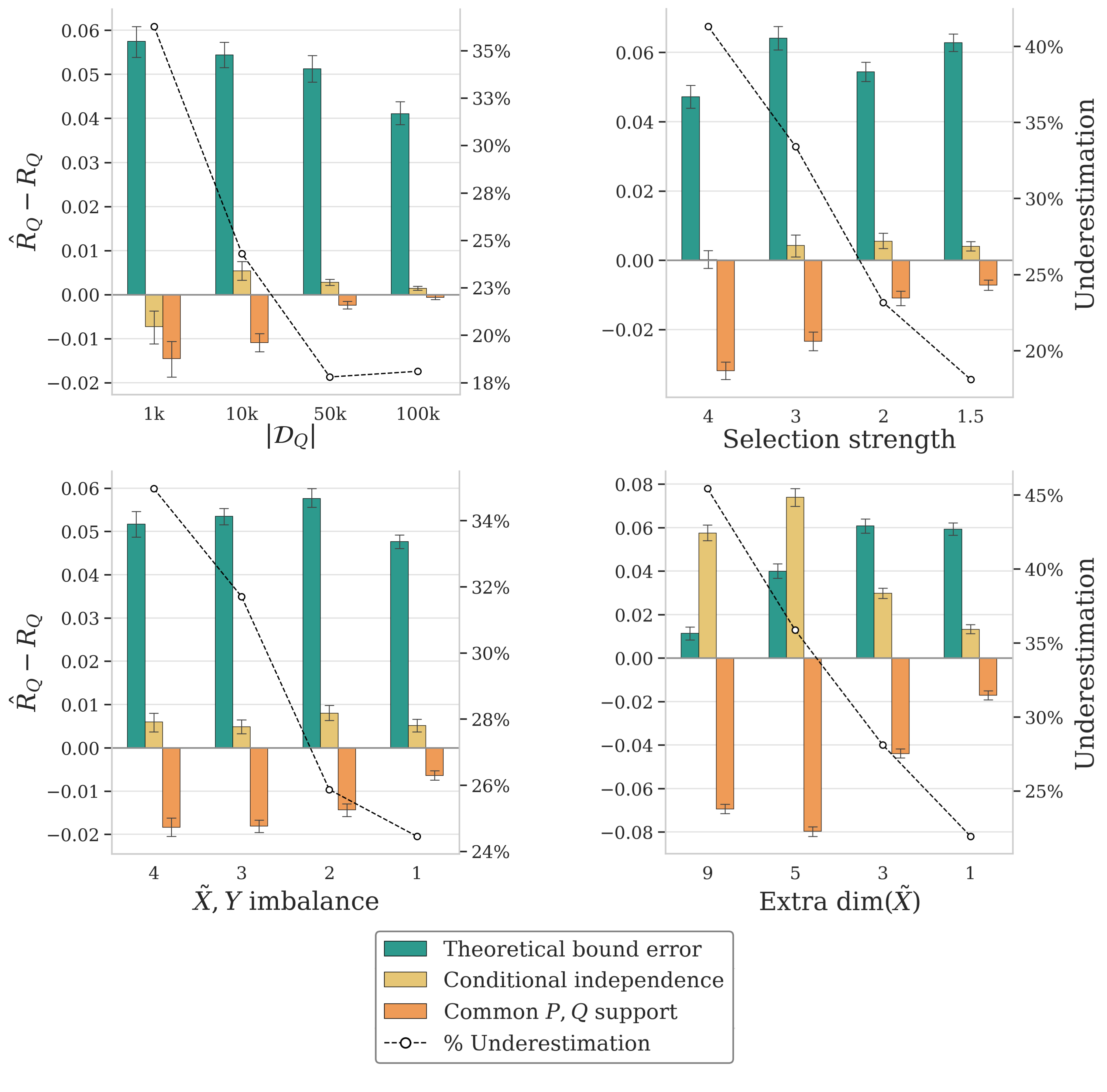}
    \caption{Decomposition of the generalization bound error $\hat{R}_Q-R_Q$, for the true $U^*$, in synthetic data with linear selection mechanism. Each experiment is computed across 20 tasks and 5 seeds.}
    \label{fig:decomp_synth}
\end{figure}

\begin{figure}[ht!]
    \centering
\includegraphics[width=0.8\linewidth]{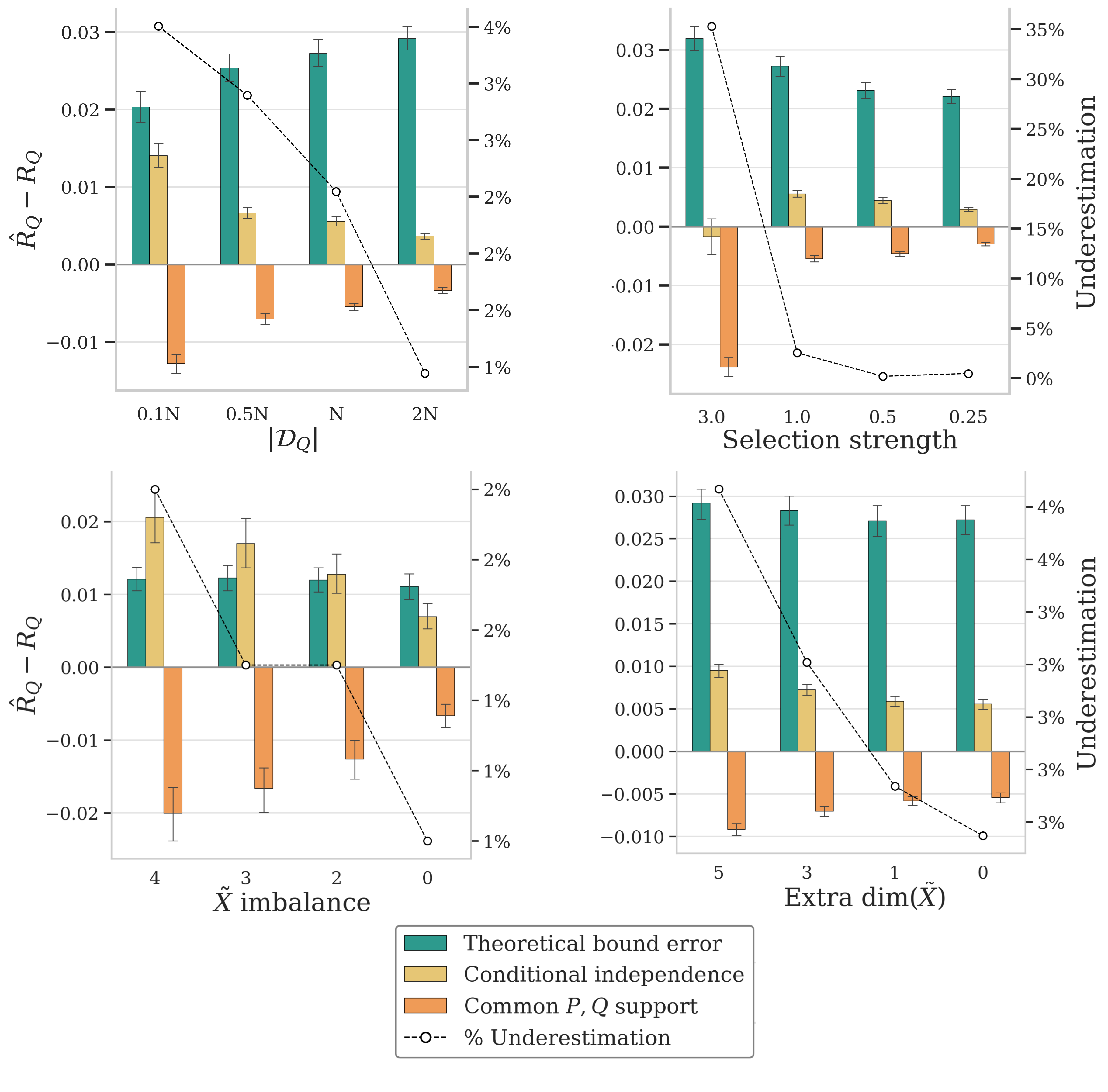}
    \caption{Decomposition of the generalization bound error $\hat{R}_Q-R_Q$, for the true $U^*$, in All of Us data with linear selection mechanism. Each experiment is computed across 20 tasks and 5 seeds.}
    \label{fig:decomp_real}
\end{figure}




\subsection{Application to Political Survey Data}\label{appendix:cses}
\subsubsection{Experimental setup}
To highlight our method’s broader applicability beyond the medical setting, we conducted an additional experiment detecting selection bias in political survey data. Political surveys are frequently constructed from volunteer-based or opt-in samples, making them particularly susceptible to non-random selection mechanisms that can distort population-level inference and undermine external validity. We use the 2024 Cooperative Congressional Election Study (CCES) dataset \cite{schaffner2024cces} which contains rich demographic information, political attitudes, and self-reported voting behavior collected during the 2024 election cycle in the U.S..

Similar to the All of Us dataset setting, we treat the CCES dataset as the target population $\mathcal{D}_Q$ and construct a linear logistic selection mechanism to sample the biased population $\mathcal{D}_P$. We specifically oversample for engagement with the news, oversample for strong party affiliation (for any party), undersample for participants working more than two jobs, and oversample for participants who are parents. Explicitly, for these features, the corresponding weight vector $\beta = [0.3, 0.3, -0.3,0.3]$. Using this biased dataset, we then test for selection bias in a elastic net model predicting voter turnout in the 2024 election, using for the covariates $\tilde{X}$ gender, education, party affiliation, if the participant is urban-dwelling, national area of residence (i.e., if the participant lives in the Midwest), personal belief on inflation, and marital status. Extraneous variables $X \setminus \tilde{X}$ included race, ethnicity, familial immigration status, usage of social media, and personal beliefs on abortion, immigration, the economy, and assault rifles. All data was made discrete.

\subsubsection{Results}
The results in Table \ref{table:cces_results}, run across 20 seeds, demonstrate that our method successfully detects the induced bias, highlighting its broader relevance beyond medical datasets.

\begin{table}[ht!]
\centering
\begin{tabular}{@{}lccc@{}}
\toprule
& \multicolumn{3}{c}{$\hat{R}_Q - R_Q$} \\
\cmidrule(lr){2-4}
\textbf{} & \textbf{$\ \ \ \mu \pm \sigma \ \ \ $} & \textbf{Validity} & \textbf{(0.05,\,0.95)}\\
\midrule
UB (true $U^*$) 
& $0.01 \pm 0.01$ 
& $1.00$ 
& $(0.00,\;0.02)$ \\

UB (heuristic $U^*$) 
& $0.00 \pm 0.00$ 
& $1.00$ 
& $(-0.01,\;0.00)$ \\

Naive IPPW of $\tilde{U}$
& $-0.01 \pm 0.00$ 
& $1.00$ 
& $(-0.01,\;0.00)$ \\

Calibration 
& $-0.03 \pm 0.00$ 
& $0.00$ 
& $(-0.03,\;-0.03)$ \\
\bottomrule
\end{tabular}
\caption{Bound error $\hat{R}_Q - R_Q$ summary metrics for a single CCES task predicting voter turnout with linear selection mechanism, across 20 seeds.}
\label{table:cces_results}
\end{table}


\end{document}